\documentclass[10pt]{article}

\usepackage{lipsum}
\usepackage{amsfonts}
\usepackage{graphicx}
\usepackage{subcaption}
\usepackage{epstopdf}
\usepackage{algorithmic}
\usepackage{relsize}
\usepackage{tikz}
\usepackage{setspace}
\usetikzlibrary {arrows.meta,bending,positioning}
\usetikzlibrary{shapes.geometric}

\usepackage{tabularx}
\usepackage{booktabs}

\usetikzlibrary{positioning}
\ifpdf
  \DeclareGraphicsExtensions{.eps,.pdf,.png,.jpg}
\else
  \DeclareGraphicsExtensions{.eps}
\fi

\usepackage{enumitem}
\usepackage{tikz}
\usepackage{tkz-euclide}
\setlist[itemize]{leftmargin=.5in}

\usepackage{amsmath,amsthm}
\usepackage{amssymb}
\usepackage{hyperref}
\usepackage{algorithm}
\usepackage{geometry}
\usepackage{esint}
\usepackage{array}
\usepackage[linewidth=2pt]{mdframed}
\newcounter{taggedeq}
\pretocmd{\equation}{\stepcounter{taggedeq}}{}{}

\newmdenv[
topline=false,
bottomline=false,
rightline=false,
skipabove=\topsep,
skipbelow=\topsep,
linewidth=4
]{siderules}
\usepackage{xcolor}
\usepackage[most]{tcolorbox}
\usepackage[nottoc]{tocbibind}
\newcommand{\hookdoubleheadrightarrow}{%
	\hookrightarrow\mathrel{\mspace{-15mu}}\rightarrow
}

\newtheorem{theorem}{Theorem}
\newtheorem{proposition}[theorem]{Proposition}
\newtheorem{remark}[theorem]{Remark}
\newtheorem{definition}[theorem]{Definition}
\newtheorem{lemma}[theorem]{Lemma}
\newtheorem{corollary}[theorem]{Corollary}

\newtheorem{assumption}[theorem]{Assumption}
\newtheorem*{proposition*}{Proposition}
\newtheorem*{theorem*}{Theorem}

\newcommand*\dx{\mathop{}\!\mathrm{d}}

\title{Symbolic Recovery of PDEs from Measurement Data}
\author{  
  Erion Morina
  \thanks{  
    IDea\_Lab - The Interdisciplinary Digital Lab at the University of Graz, Austria.    
    \{\href{mailto:martin.holler@uni-graz.at}{erion.morina@uni-graz.at}, \href{mailto:erion.morina@uni-graz.at}{martin.holler@uni-graz.at}\}.    
    MH is also a member of NAWI Graz (\href{https://www.nawigraz.at}{www.nawigraz.at}) and BioTechMed Graz (\href{https://biotechmedgraz.at}{biotechmedgraz.at}).  
  }      
  \and  
  Philipp Scholl   
  \thanks{  
    Aleph Alpha Research, Germany.    
    (\href{mailto:philipp.scholl@aleph-alpha-research.com}{philipp.scholl@aleph-alpha-research.com}).   
  }  
  \and  
  Martin Holler   
  \footnotemark[1]
}

\usepackage{amsopn}

\usepackage{caption}
\usepackage{subcaption}

\begin{document}
	\maketitle
	\begin{abstract}
		Models based on partial differential equations (PDEs) are powerful for describing a wide range of complex phenomena in the natural sciences. Accurately identifying the PDE model, which represents the underlying physical law, is essential for a proper understanding of the problem. This reconstruction typically relies on indirect and noisy measurements of the system’s state and, without specifically tailored methods, rarely yields symbolic expressions, thereby limiting interpretability.\\
		In this work, we address this limitation by considering neural network architectures based on rational functions for the symbolic representation of physical laws. These networks combine the approximation power of rational functions with the flexibility to represent arithmetic operations, and generalize \emph{ParFam} and \emph{EQL}-type architectures used in symbolic regression for physical law learning. We further establish regularity results for these symbolic networks.\\
		Our main contribution is a reconstruction result showing that, if there exists an admissible physical law that is expressible within the symbolic network architecture, then in the limit of noiseless and complete measurements, symbolic networks recover a physical law within the PDE model that is representable by the architecture. Moreover, the recovered law corresponds to a regularization-minimizing parameterization, promoting interpretability and sparsity in case of $L^1$-regularization. Under an additional identifiability condition, the unique true physical law is recovered.\\
		These reconstruction and regularity results are derived at the continuous level prior to discretization due to a formulation in function space.	Empirical results using the ParFam architecture are consistent with the theoretical findings and suggest the feasibility of reconstructing interpretable physical laws in practice.
	\end{abstract}
	\begin{keywords}
		Physical law, model learning, symbolic recovery, neural networks, rational functions, inverse problems\\
	\end{keywords}
	
	\begin{MSCcodes}
		35R30, 93B30, 68Q32, 65M32, 41A20
	\end{MSCcodes}
	\newpage
	\section{Introduction}
    Many complex relationships in the natural sciences are governed by systems that evolve in space and time, driven by rates of change. The underlying physical laws can be effectively modeled using partial differential equations (PDEs), which provide a powerful mathematical framework for describing such effects while enabling the analysis and prediction of system behavior. Examples include modeling fluid dynamics with the Navier-Stokes equations \cite{Batchelor_2000}, behavior of quantum systems with Schrödinger's equation \cite{Griffiths2018}, biological pattern formation through reaction-diffusion equations \cite{Turing1952}, climate dynamics and weather prediction \cite{Vallis2017}, and biological processes such as population dynamics or transport mechanisms \cite{Murray2002,Murray2003,Perthame2015}. Despite their versatility, a central challenge in PDE-based modeling lies in identifying the precise form of the governing equations, which is often unclear or imprecise. This uncertainty stems from multiple factors, such as insufficient understanding of the underlying mechanisms or oversimplifications during the abstraction process that prevent accurate representation of reality. Moreover, the data available in practice are often indirect, incomplete, and noisy, which further complicates the reconstruction of the governing equations.
    
    This reconstruction task can be naturally cast as an inverse problem. Classical inverse problems and parameter identification methods (e.g., \cite{Banks1989,Engl1996}) focus primarily on estimating unknown coefficients within a prescribed PDE structure, while recent scientific machine learning approaches for PDEs (\cite{boulle23,Kovachki2024}) learn more flexible, data‑driven representations of the underlying dynamics. Yet, the symbolic, human‑interpretable recovery of PDE laws together with rigorous identifiability guarantees in realistic measurement settings remains less explored. Beyond predictive accuracy, many scientific applications require interpretable models, where the governing law is expressed in compact, symbolic form, facilitating physical understanding, verification, and analysis (e.g., \cite{Brunton2016,Brunton2017}). Symbolic regression methods \cite{cava2021}, e.g., based on symbolic neural network architectures (\cite{lampert17, sahoo18, scholl24}) aim at discovering such human‑readable laws. Since physical law learning is inherently an identification problem, it is crucial that the learned equations are uniquely determined by the data. Structural and parameter identifiability have been extensively studied for dynamical systems and PDE inverse problems (see, e.g., \cite{Bellman1970, Isakov1993} for early contributions). In the context of symbolic recovery of differential equations, recent work analyzes uniqueness and robustness of the recovery (\cite{Kut24, scholl2022symbolic,scholl_icassp}). However, analytic reconstruction guarantees for symbolic recovery of PDEs from measurement data at function level remain open. In this work, we address this gap by developing a function‑space identification framework for symbolic model learning and establishing corresponding reconstruction and regularity results.

    \subsection{Scope and contributions}
    Our focus is the joint recovery of an unknown physical law $f$ and state $u$ satisfying the PDE
    \begin{align*}
		\label{pdeq}
		\partial_t u(t,x) = f(t,u(t,x)), \qquad (t,x)\in (0,T)\times \Omega,
	\end{align*}
	on a given spatio-temporal domain, from indirect, noisy and possibly incomplete measurements of the state such that $f$ is given in a concise symbolic form. The aim is not merely to fit a physical law $f$ numerically, but to recover it as a human-interpretable expression that is verifiable by domain experts. To achieve this, we approximate $f$ by a family of parameterized models designed for symbolic expressiveness and interpretability. Concretely, we employ \textit{symbolic networks}, a neural-network based architecture whose trainable components are rational functions (fractions of polynomials with trainable coefficients) and whose activations are user-defined base functions (such as trigonometric or exponential functions). Architectures commonly used for neural-network based symbolic regression, including \textit{EQL}‑type designs \cite{lampert17,sahoo18} and \textit{ParFam} \cite{scholl24}, inspired our choice of symbolic neural networks and are subsumed as special cases within our framework. The design outlined above is crucial since rational functions naturally encode arithmetic operations, such as products and divisions, while the base functions provide the vocabulary for non-algebraic relationships. Together, they yield a modeling class that can represent compact formulas and is supported by strong approximation capabilities grounded in those of rational functions (see \cite{Boulle20,morina2025, telgarsky17}). We integrate symbolic networks into the model learning framework proposed in \cite{morina24}, that recovers the state and the physical law simultaneously from data. This allows to incorporate measurement operators and noise models directly into the learning scheme. Within this formulation, sparsity‑promoting regularization on the model parameters is used to enhance interpretability of the resulting physical law. As a result, among all admissible models consistent with the data, a concise symbolic law within the chosen architecture is preferentially selected.

    Our main theoretical contribution is a reconstruction result tailored to this symbolic setting. Specifically, we show if there exists an admissible physical law expressible within a user-defined symbolic network architecture, then the proposed model learning approach recovers a physical law and state consistent with the PDE model in the limit of complete measurements.
    The resulting parameterization of the physical law is regularization minimizing, which is particularly interesting if the regularization scheme is designed to promote sparsity in the parameterization. Furthermore, under an identifiability condition, the symbolic networks recover the unique ground truth physical law.
    
    When the physical law is not representable by a fixed architecture, techniques developed in \cite{morina24} can be employed, but fall outside the scope of this work. Beyond theory, we provide a numerical validation using ParFam, demonstrating the practical feasibility of reconstructing interpretable laws from indirect data. All numerical experiments of this work can be reproduced with
the publicly available source code\footnote{See \url{https://github.com/Philipp238/unique_sym}.}. %
    \subsection{Novelty} 
    What distinguishes this work from prior PDE identification frameworks and neural-network based symbolic regression architectures such as ParFam is a function-space analysis specifically tailored to symbolic networks. While existing literature, such as \cite{morina24}, provides results on PDE-based model identification formulated at the level of functions, it is not guaranteed that the learned models admit an interpretable representation. To address this limitation, we employ symbolic networks to represent the unknown physical law. Importantly, our approach is not a straightforward adaptation of existing all-at-once frameworks to a symbolic parameterization. Although the framework in \cite{morina24} is stated in general terms, many of its abstract assumptions are nontrivial to verify for symbolic neural networks, and it is not evident a priori whether the theory applies in this setting. In particular, the required regularity and mapping properties have not been established for architectures such as ParFam or EQL-type networks, nor are they covered by existing symbolic regression theory. A central methodological feature of this work is the modeling of the dynamics directly in function space prior to discretization. This preserves the continuous structure of the PDE and allows us to establish reconstruction guarantees at the analytical level, independent of any numerical approximation. Building on this perspective, we develop an analysis-guided framework for reconstructibility in symbolic PDE recovery, supported by both theoretical guarantees and first numerical evidence of practical feasibility.
    
    Taken together, this work establishes an analysis-driven framework for the symbolic recovery of physical laws from measurement data by combining a function-space formulation of the learning problem and new regularity results for symbolic networks such as ParFam and EQL-type architectures. For that,  we (i) parameterize physical laws by symbolic networks based on rational functions in an all-at-once formulation that incorporates incomplete measurement operators and noise; (ii) analyze these architectures as operators on function spaces and establish regularity properties required for the learning problem; and (iii) derive reconstructibility results for neural-network based symbolic model learning that have not previously been available.
    
    \subsection{Organization} 
	
	Section \ref{sec:rel_work} reviews related work. Section \ref{sec:sym_net} introduces symbolic networks based on rational functions, as detailed in Subsection \ref{subsec:rational_func}. The network design is outlined in Subsection \ref{subsec:network_design}, and approximation properties are established in Subsection \ref{subsec:universal_approx}. Subsection \ref{subsec:extension_fs} discusses the extension to function space, including regularity properties of symbolic networks. The main results for symbolic model learning are presented in Section \ref{sec:main_result}, including the analytical framework in Subsection \ref{subsec:framework}, the core reconstruction results in Subsection \ref{subsec:identification_results}, and possible extensions in Subsection \ref{subsec:extension}. Numerical experiments are provided in Section \ref{sec:numerics}. The detailed proofs of the regularity result from Subsection \ref{subsec:extension_fs} can be found in Appendix \ref{app:sym_net}, and those for Section \ref{sec:main_result} in Appendix \ref{appendix:proofs}. Appendix \ref{app:conv_sampling} verifies the regularity properties of sampling operators utilized in the numerical experiments.

	\section{Related work}
	\label{sec:rel_work}
	This section reviews related work on learning PDE-based models from data, with a focus on inverse problems, interpretable and symbolic model discovery, and identifiability of physical laws. These directions are most closely related to the setting and goals of this paper.
	\paragraph{Inverse problems and PDE model identification.}
	The reconstruction of physical laws in PDE-based models from measurement data can be formulated as an inverse problem. Applications range from estimating single parameters, through identifying structural components that augment approximate models, to discovering the full governing physics that best explain observed dynamics. Approaches span classical techniques, purely data-driven model learning, and hybrid methods.
	
	In parameter identification the physical model structure is assumed to be known and only the physical parameters are learned from data.
	Early contributions include \cite{Acar1993}, \cite{Alessandrini1986}, and \cite{Knowles2001} on elliptic equations. Related works on reconstructing nonlinear heat conduction laws from boundary measurements, together with stability estimates, include \cite{Cannon1980}, \cite{Egger15}, and \cite{Roesch96}. An overview on aspects of parameter identification and beyond is provided by the foundational reference works \cite{Banks1989} on parameter estimation and control for PDE-governed systems, \cite{Klibanov2013} on different aspects of coefficient inverse problems, \cite{Engl1996} on deterministic regularization theory for ill-posedness, and \cite{Kaipio2005} on a Bayesian framework for inverse problems.
	
	Beyond coefficient estimation, model identification infers unknown structural components of PDEs, e.g., full functional terms, from data, including recovery of sources in reaction-diffusion systems \cite{DuChateau1985, Kaltenbacher2020, Kaltenbacher2025}, joint reconstruction of conductivity and reaction kinetics \cite{Kaltenbacher2020b}, determination of nonlinearities in semilinear equations \cite{Feizmohammadi2024, Kian2023b, Kian2023}, inverse source problems for hyperbolic dynamics \cite{Yamamoto1995} and hidden reaction law discovery \cite{NgocNguyen2025}. See also \cite{Bukhgeim2002, Jiang2017,Nachman1996}.
	
	Efficiency is a central consideration, as measurement data are often high dimensional. For that reason, dimensionality reduction is a natural preprocessing step prior to inferring physical laws. Representative approaches include projection-based model reduction for parametric PDE systems \cite{Benner2015} and data-driven dynamic mode decomposition (DMD), which extracts dominant spatio-temporal modes from measurements \cite{SCHMID2010}. Another central consideration is uncertainty. The framework \cite{Stuart2010} for Bayesian inverse problems on function spaces enables probabilistic inference with principled uncertainty quantification, facilitating the identification of governing parameters and laws from partial, noisy data.
	
	\paragraph{Scientific machine learning for PDEs.} In recent years, scientific machine learning has shown strong potential by linking data-driven methods with classical models.
	For an overview of learning PDE-based models from data, see the comprehensive reviews \cite{Azizzadenesheli2024,boulle23, kutz23, deryck24, tanyu23}. A selection of methods include DeepONets \cite{Lu19}, Fourier Neural Operators \cite{Azizzadenesheli20}, DeepGreen \cite{Shea21} and model reduction with neural operator methods \cite{bhattacharya21}. Hybrid methods have also proven effective in optimal control, including learning-informed optimal control \cite{Hintermuller22, Papafits22, Dong22}, nonlinearity identification in the monodomain model \cite{Court22} and identification of semilinear PDEs \cite{Christof2024}. In the setting of learning-informed PDE-constrained optimization, we refer to  \cite{riedl2025,Spiliopoulos23}. These contributions formulate and analyze PDE-constrained optimization problems augmented with neural networks, provide adjoint-based optimization frameworks, and establish global convergence guarantees for neural PDEs and neural-network-enhanced PDE models. On learning-informed identification of PDE nonlinearities, we refer further to \cite{AHN23} for well-posedness of the learning problem and \cite{morina24} for uniqueness of reconstructing the physical law, along with the references therein. These contributions employ an all-at-once formulation, justified in \cite{Kaltenbacher16, KaltNguy22}, that facilitates the direct use of measurements and avoids explicit parameter-to-state maps. A growing body of recent research investigates how embedding problem-specific conservation laws into machine-learning frameworks can aid in discovering governing physical laws.
	We refer to the related works discussed in \cite{morina2025b} for a recent overview.

	\paragraph{Learning interpretable models from data.}
	A key objective in learning scientific models from data is to obtain representations that are not only accurate but also interpretable, compact, and aligned with physical principles. In many applications, one aims to identify physical laws that are as simple as possible while still adequately capturing the data (see, e.g., \cite{Angelis2023,Quade2016}). Beyond interpretability, symbolic models are advantageous over black-box alternatives for capturing fundamental physical principles and are more robust in terms of generalization \cite{Brunton2017}. Interpretable models also offer reduced computational complexity for real-time applications \cite{Schmidt2009}. Finally, symbolic expressions offer the additional advantage of being easier to verify for physical consistency, while analytical properties (such as stability and equilibria) are easier to extract \cite{Brunton2016}. For background of various methods on symbolic regression, see \cite{cava2021}. This includes genetic programming approaches \cite{Augusto, cranmer2023, Schmidt2009, Schmidt2010}, reinforcement learning formulations \cite{mundhenk2021,Petersen2021, Sun2023},
	neural-network based methods \cite{Heim2020,lampert17,sahoo18, scholl24}, and transformer-driven expression models \cite{Biggioetal21, Holt2024, Kamienny2022}. Within symbolic recovery of dynamical systems, substantial progress has been made on identifying parsimonious laws for nonlinear dynamics. The SINDy framework \cite{Brunton2016} introduced sparse regression to discover governing ODEs, and PDEs in \cite{Brunton2017}. Physical structure can be promoted through priors and constraints, including symmetry-based inductive biases \cite{Udrescu2020} and the incorporation of established scientific laws \cite{Cornelio2023}, improving physical consistency, and data efficiency.
	
	\paragraph{Identifiability of physical laws.} In physical law learning, fundamentally an identification problem, it is crucial that the learned equations and parameters are uniquely determined by the data, ensuring the model captures the true underlying mechanism rather than one of many indistinguishable alternatives.
	The works \cite{Bellman1970,Cobelli1980,DiStefano1980,Miao2011} establish the theoretical foundations of structural and parameter identifiability in dynamical systems. They rigorously define conditions for unique parameter inference from input-output data and analyze ambiguities that arise from observability and model structure. See for example \cite{Isakov1993} investigating conditions under which inverse problems for semilinear parabolic equations admit unique solutions and \cite{Kaltenbacher2021b} on identifiability and constructive recovery of a nonlinear diffusion coefficient in parabolic equations. In the context of symbolic recovery of differential equations, identifiability, specifically, the classification of uniqueness, has been addressed by \cite{Kut24, scholl2022symbolic,scholl_icassp} both for specific classes of physical laws (e.g., linear, algebraic) and for robust symbolic recovery of differential equations. More recently, \cite{casolo25} study the structural identifiability of sparse linear ODEs and show that, unlike in the dense case, non-identifiability occurs with positive probability under realistic sparsity assumptions. An interesting recent result by \cite{shumaylov2025} shows that a system’s governing equations are, in general, discoverable from data only when the system exhibits chaotic behavior. Finally, we also refer to \cite{morina24}, addressed above, which discusses an analysis-based guideline for expecting unique reconstructibility of practical physical law learning setups in the limit of full measurements in a variational formulation.
    \section{Symbolic networks}
    \label{sec:sym_net}

	Neural network-based symbolic regression provides an alternative to traditional black-box models by aiming to discover human-interpretable equations that describe data (e.g., originating from physical systems). Unlike genetic programming approaches, which rely on evolutionary algorithms to combine mathematical symbols, these methods exploit the potential of optimization techniques for neural networks (e.g., gradient-based methods). In addition, neural network-based approaches enable the construction of concise symbolic expressions and, when properly trained, generalize well to unseen data. We focus on \textit{symbolic networks}, a generic class of architectures which we motivate by the parameterized feed-forward neural networks proposed in \cite{lampert17, sahoo18, scholl24} in the context of symbolic regression. These networks utilize user-defined base functions (e.g., trigonometric, exponential) as activation functions, which may vary across different layers. Transformations between layers are parameterized functions which are \textit{simpler} than the activations and can range from affine linear transformations (as used in \cite{lampert17,sahoo18}) to polynomial and rational transformations (see \cite{scholl24}). The symbolic networks investigated here specifically employ certain rational functions as transformations between layers, i.e., quotients of polynomials, encompassing polynomial and linear functions as special cases. Rational transformations can represent products and divisions, which are essential arithmetic building blocks for symbolic expressions. Additionally, they exhibit strong approximation capabilities, which will be discussed in more detail later.
    
    \subsection{Rational functions}
    \label{subsec:rational_func}
    From both a theoretical and practical perspective, it is advantageous to restrict transformations between layers to rational functions that do not exhibit poles. Rational functions with poles (e.g., outside the domain of the PDE model) are beyond the scope of this work, as the focus here is to enable a more universal and comprehensive analysis. This restriction guarantees essential regularity properties of symbolic networks, as proven in Subsection \ref{subsec:extension_fs}, while also simplifying the training process in practice. Furthermore, as we will discuss in Subsection \ref{subsec:universal_approx}, this limitation does not affect the approximation capabilities of rational functions.

    Before introducing rational functions in the context described above, we first recall some definitions related to multivariate polynomials. For $m,n\in\mathbb{N}$ the degree of a multivariate polynomial $\pi:\mathbb{R}^n\to \mathbb{R}$ of the form
    $$\mathbb{R}^n\ni x\mapsto\sum_{k_1,\dots,k_n=0}^m a_{k_1,\dots,k_n}\prod_{l=1}^nx_l^{k_l}$$
    is defined by $\deg(\pi):=\max(k_1+\dots+k_n:~a_{k_1,\dots,k_n}\neq 0)$. Note that a polynomial of degree $d$ in $n$ variables involves at most $\binom{n+d}{d}=\frac{(n+d)!}{d!n!}$ non-zero coefficients.
    Additionally, note that two multivariate polynomials $p, q$ are \textit{coprime} if they do not attain a nontrivial common polynomial divisor. Building on these definitions, we can now introduce rational functions and define an associated notion of degree.
	\begin{definition}[Rational and base function]
		\label{def:ratfunc}
		For $n\in \mathbb{N}$, a function $r:\mathbb{R}^n\to \mathbb{R}$ is called rational if there exist coprime multivariate polynomials $p,q:\mathbb{R}^n\to\mathbb{R}$ with $q(x)>0$ for all $x\in \mathbb{R}^n$, such that $r(x)=\frac{p(x)}{q(x)}$ for $x\in\mathbb{R}^n$. The degree of $r$ is given by $\deg(r):=\max(\deg(p),\deg(q))$ if $p\neq 0$ and zero otherwise. For $n,m\in\mathbb{N}$ a function $r:\mathbb{R}^n\to \mathbb{R}^m$ with $r(x)=(r_i(x))_{i=1}^m$ for $x\in\mathbb{R}^n$ is called rational if $r_i$ is rational for $1\leq i\leq m$ and the degree is given by $\deg(r):=\max_{1\leq i\leq m}(\deg(r_i))$. 
        
        A continuous function $\sigma:\mathbb{R}^n\to \mathbb{R}^m$ is called base function if it is not rational.
	\end{definition}
	To simplify the parameterization of denominator polynomials in the representation of rational functions (Definition \ref{def:ratfunc}), we introduce the set of coefficients for polynomials of a given degree that are positive over the real numbers.
	\begin{definition}[Positive polynomials]
		\label{def:pospoly}
		For $d,n\in\mathbb{N}$ let $Q(d,n)\subset \mathbb{R}^{\binom{n+d}{d}}$ be the set of coefficients parameterizing positive polynomials of degree $\leq d$ in $n$ variables,
		\[
		Q(d,n) = \{a\in \mathbb{R}^{\binom{n+d}{d}}: ~q(a,x):=\sum_{0\leq k_1+\dots +k_n\leq d} a_{k_1,\dots,k_n}\prod_{l=1}^nx_l^{k_l}>0 ~ \text{for all} ~ x\in \mathbb{R}^n\}.
		\]
	\end{definition}
	To characterize the set of coefficients $Q(d,n)$ more explicitly, it is crucial to examine the geometric properties of polynomial roots, which form a rich area of research. However, as this is not the focus of the present work, we limit our discussion to classical one-dimensional results presented in \cite[Chapter 5]{Jacobson2009}, specifically \textit{Sturm's theorem} \cite[Theorem 5.5]{Jacobson2009} and \textit{Tarski's theorem} \cite[Theorem 5.9]{Jacobson2009}. By Sturm's theorem, the number of roots of a polynomial within a given interval is determined by the difference in the number of sign changes in the \textit{standard sequence} evaluated at the endpoints of the interval. For $n = 1$, this establishes an implicit characterization of $Q(d,n)$ insofar as the number of variations in sign of the standard sequence remains constant for every real number. While an explicit characterization of $Q(d,n)$ is likely infeasible, it is typically sufficient to work with the following subset: Denominator polynomials consisting solely of monomial terms with even degrees, nonnegative coefficients, and a positive constant term. Such polynomials are positive over the real line and hold practical significance, as discussed next.

    In \cite{Boulle20}, it is demonstrated that suitably regular functions can be uniformly approximated by rational functions on compact sets. The rational functions employed are \textit{Zolotarev} sign-type functions, whose denominators, as noted previously, exclusively consist of monomials with even degrees. Similarly, this applies to the \textit{Newman} sign-type functions used in \cite{morina2025}, which establish a first-order universal approximation result with rational functions. From a practical perspective, this eliminates the need for explicitly characterizing $Q(d,n)$, allowing one to instead focus on working with this proper subset without losing approximation properties. 

    \subsection{Network design}
    \label{subsec:network_design}
    The symbolic networks introduced in this work are inspired by the feed-forward neural networks proposed in \cite{lampert17,sahoo18, scholl24}. They extend the (shallow) ParFam architecture presented in \cite{scholl24} by incorporating multiple layers, and generalize the EQL architectures in \cite{lampert17,sahoo18} by utilizing rational functions, according to Definition \ref{def:ratfunc}, as transformations between layers. The proposed symbolic network architecture is formally defined as follows.

    \paragraph{Architecture.}
    The depth of the network is denoted by $L\in\mathbb{N}$ and the widths by $(n_i^\sigma)_{i=0}^L, (n_i^r)_{i=1}^L\subset \mathbb{N}$. %
    With this, we consider rational functions
	\[
	r_i: ~ \mathbb{R}^{n_{i-1}^\sigma}\to\mathbb{R}^{n_i^r}
	\]
	of degree $d_i\in\mathbb{N}$ for $1\leq i \leq L$ as transformations between the intermediate layers. The activation functions are given by fixed multivalued $n_i^r$-ary base functions
	\[
	\sigma_i: ~ \mathbb{R}^{n_i^r}\to\mathbb{R}^{n_i^\sigma}
	\]
	for $1\leq i\leq L$. Recall Definition \ref{def:ratfunc} that \textit{base function} cannot be expressed by a rational function. Note further that the presented setup also covers single unary base functions stacked within a multi-dimensional array by choosing $n_i^r=n_i^\sigma$ for $1\leq i\leq L$. This is the case, for example, in the shallow ParFam architecture in \cite{scholl24}. Finally, the transformation of the output layer is given by a rational function, including a skip connection, as described in \cite{scholl24}, taking the form
	\[
	r_{L+1}: ~ \mathbb{R}^{n_L^\sigma+n_0^\sigma}\to\mathbb{R}.
	\]
	The proposed symbolic network architecture is now generally given by
	\begin{equation}
		\label{symbolic_network}
		\begin{aligned}
			\mathfrak{S}_\sigma: ~&\mathbb{R}^{n_0^\sigma}\to\mathbb{R}\\
			&x\mapsto r_{L+1}((\sigma_L\circ r_L\circ \dots\circ\sigma_1\circ r_1)(x),x).
		\end{aligned}
	\end{equation}
    A schematic illustration of $\mathfrak{S}_\sigma$ is provided in Figure \ref{fig:symbolic_network}.
    \begin{figure}
        \centering
        \resizebox{.85\textwidth}{!}{
        \begin{tikzpicture}
		
		\draw[thick] (-0.2, 1.2) rectangle (1.2, -4.2);  
		\draw[thick] (2.8, 1.2) rectangle (4.2, -2.9);

		\draw[fill=gray!10] (0, 0) rectangle (1, 1); 
		\draw[fill=gray!10] (0, -1.5) rectangle (1, -0.5);
		\node at (0.5, -2.2) {\Huge$\vdots$};  
		
		\draw[fill=gray!10] (0, -4) rectangle (1, -3);  
		
		\draw[fill=gray!10] (3.5, 0.5) circle (0.5); 
		\draw[fill=gray!10] (3.5, -2.2) circle (0.5);

		\node at (3.5, -0.8) {\Huge$\vdots$};

		\draw[-{Latex}] (1.2, 0.5) -- (2.8, 0.5);
		\draw[-{Latex}] (1.2, -1) -- (2.8, -2.2);
		\draw[-{Latex}] (1.2, 0.5) -- (2.8, -2.2);
		\draw[-{Latex}] (1.2, -1) -- (2.8, 0.5);
		\draw[-{Latex}] (1.2, -3.5) -- (2.8, 0.5);
		\draw[-{Latex}] (1.2, -3.5) -- (2.8, -2.2);
		
		\draw[thick] (-0.2, 1.2) rectangle (1.2, -4.2);  
		\draw[thick] (2.8, 1.2) rectangle (4.2, -2.9);

		\draw[fill=gray!10] (6, 0) rectangle (7, 1); 
		\draw[fill=gray!10] (6, -2.7) rectangle (7, -1.7); 
		\node at (6.5, -0.8) {\Huge$\vdots$};  
		\draw[thick] (5.8, 1.2) rectangle (7.2, -2.9);
		
		\draw[-{Latex}] (4.2, 0.5) -- (5.8, 0.5);
		\draw[-{Latex}] (4.2, -2.2) -- (5.8, 0.5);
		\draw[-{Latex}] (4.2, -2.2) -- (5.8, -2.2);
		\draw[-{Latex}] (4.2, 0.5) -- (5.8, -2.2);
		
		\draw[fill=gray!10] (9, 0) rectangle (10, 1); 
		\draw[fill=gray!10] (9, -2.7) rectangle (10, -1.7); 
		\node at (9.5, -0.8) {\Huge$\vdots$};  
		\draw[thick] (8.8, 1.2) rectangle (10.2, -2.9);
		\node at (8, -0.9) {\Huge$\ldots$};
		
		\draw[fill=gray!10] (12.5, 0.5) circle (0.5); 
		\draw[fill=gray!10] (12.5, -2.2) circle (0.5);
		
		\draw[thick] (14.8, 1.2) rectangle (16.2, -2.9);  
		\draw[thick] (11.8, 1.2) rectangle (13.2, -2.9);

		\draw[fill=gray!10] (15, 0) rectangle (16, 1); 
		\draw[fill=gray!10] (15, -2.7) rectangle (16, -1.7); 
		\node at (12.5, -0.8) {\Huge$\vdots$};  
		\node at (15.5, -0.8) {\Huge$\vdots$};  
		\draw[-{Latex}] (10.2, 0.5) -- (11.8, 0.5);
		\draw[-{Latex}] (10.2, -2.2) -- (11.8, 0.5);
		\draw[-{Latex}] (10.2, -2.2) -- (11.8, -2.2);
		\draw[-{Latex}] (10.2, 0.5) -- (11.8, -2.2);
		
		\draw[-{Latex}] (13.2, 0.5) -- (14.8, 0.5);
		\draw[-{Latex}] (13.2, -2.2) -- (14.8, 0.5);
		\draw[-{Latex}] (13.2, -2.2) -- (14.8, -2.2);
		\draw[-{Latex}] (13.2, 0.5) -- (14.8, -2.2);
		
		\draw[-{Latex}, thick] (16.2, -0.9) -- (17.5, -0.9);
		\draw[-, thick] (1.2, -4.1) -- (16.8, -4.1);
		\draw[-, thick] (16.8, -4.1) -- (16.8, -0.9);
		
		\node[isosceles triangle,
		isosceles triangle apex angle=60,
		draw,fill=gray!10,
		minimum size =1cm, rotate=-30] (T60)at (18,-1){};
		
		\node[] at (2,1) {\Large$r_1$};
		\node[] at (5,1) {\Large$\sigma_1$};
		\node[] at (11,1) {\Large$r_L$};
		\node[] at (14,1) {\Large$\sigma_L$};
		
		\node[] at (17.6,-3.8) {\Large$r_{L+1}$};
		
		\node[rotate=-90] at (0.5,-4.5) {\Large$\in$};
		\node[rotate=-90] at (3.5,-3.2) {\Large$\in$};
		\node[rotate=-90] at (6.5,-3.2) {\Large$\in$};
		\node[rotate=-90] at (9.5,-3.2) {\Large$\in$};
		\node[rotate=-90] at (12.5,-3.2) {\Large$\in$};
		\node[rotate=-90] at (15.5,-3.2) {\Large$\in$};
		\node[rotate=-90] at (18,-1.7) {\Large$\in$};
		
		\node[rotate=90] at (-0.8,-1.7) {\Large \textbf{Input}};
		\node[] at (18,0.15) {\Large \textbf{Output}};
		
		\node[] at (0.6,-5) {\Large$\mathbb{R}^{n_0^\sigma}$};
		\node[] at (3.6,-3.7) {\Large$\mathbb{R}^{n_1^r}$};
		\node[] at (6.6,-3.7) {\Large$\mathbb{R}^{n_1^\sigma}$};
		
		\node[] at (9.6,-3.7) {\Large$\mathbb{R}^{n_{L-1}^\sigma}$};
		\node[] at (12.6,-3.7) {\Large$\mathbb{R}^{n_L^r}$};
		\node[] at (15.6,-3.7) {\Large$\mathbb{R}^{n_{L}^\sigma}$};
		\node[] at (18,-2.2) {\Large$\mathbb{R}$};
		
	\end{tikzpicture} }
        \caption{Scheme of symbolic network $\mathfrak{S}_\sigma$}
        \label{fig:symbolic_network}
    \end{figure}
	We denote parameterizations of $\mathfrak{S}_\sigma$ by $\mathfrak{S}_{\sigma}^\theta$ for $\theta\in\Theta$ for a suitable parameter set $\Theta$, which will be explained in detail below. The complexity of the network $\mathfrak{S}_\sigma^\theta$ is controlled by the choice of the depth $L\in\mathbb{N}$, the layer widths $(n_i^\sigma)_{i=0}^L,(n_i^r)_{i=1}^L\subset \mathbb{N}$, and the degrees $(d_i)_{i=1}^{L+1}\subset \mathbb{N}$ of the underlying rational functions.
    
    \paragraph{Parameterization.}
    The trainable parameters $\theta=(\theta_i)_{i=1}^{L+1}$ of the network in \eqref{symbolic_network}, denoted in parameterized form by $\mathfrak{S}_\sigma^\theta$, consist of the coefficients $\theta_i$ of the polynomials that define the rational functions $r_i$ for $1\leq i\leq L+1$ (see Definition \ref{def:ratfunc}).  In practice, the parameterization is restricted to ensure both closedness of the parameter set and numerical stability during parameter training. Notably, the set $Q(d,n)$, from which the coefficients of the denominator polynomials of the rational functions $r_i$ are drawn (see Definition \ref{def:pospoly}), is not closed. To ensure closedness of $\Theta$, we require a restriction of the denominator polynomials as follows. We define for $d,n\in\mathbb{N}$ and $\epsilon>0$, for $q: \mathbb{R}^{\binom{n+d}{d}}\times\mathbb{R}^n\to \mathbb{R}$ given as in Definition \ref{def:pospoly}, the set
	\[
	Q^\epsilon(d,n):=\left\{b\in\mathbb{R}^{\binom{n+d}{d}}: q(b,x)\geq \epsilon \quad \text{for all} ~ x\in\mathbb{R}^n\right\}.
	\]
	Closedness of $Q^\epsilon(d,n)$ follows from continuity of $q$, since
	\[
	Q^\epsilon(d,n) = \bigcap_{x\in\mathbb{R}^n}[q(\cdot,x)]^{-1}([\epsilon,\infty)).
	\]
	Another issue that arises alongside closedness is that rational functions, as introduced in Definition \ref{def:ratfunc}, are scaling-invariant, i.e., the numerator and denominator polynomials can be scaled by a positive factor without changing the rational function itself. To avoid scaling invariance, we normalize the coefficients of the denominator polynomials with respect to the standard Euclidean norm $\Vert\cdot\Vert_2$, i.e.,
	\[
	\bar{Q}^\epsilon(d,n) = \left\{b\in Q^\epsilon(d,n): \Vert b\Vert_2 = 1\right\}.
	\]
	Note that this strategy is also applied in ParFam \cite{scholl24}. Taking this normalization into account, along with the earlier considerations regarding the design of the neural network, we can now define the closed parameter set $\Theta$, with $m_i = n_i^\sigma$ for $0\leq i\leq L-1$ and $m_L = n_L^\sigma+n_0^\sigma$, for a small $\epsilon>0$ by
	\[
	\Theta = \bigotimes_{i=1}^{L+1}\left(\otimes_{k=1}^{n_i^r}\mathbb{R}^{\binom{m_{i-1}+d_i}{d_i}}\times \otimes_{k=1}^{n_i^r}\bar{Q}^\epsilon(d_i,m_{i-1})\right),
	\]
	where, for $i=1,\ldots,L+1$, the first component  $\otimes_{k=1}^{n_i^r}\mathbb{R}^{\binom{m_{i-1}+d_i}{d_i}}$ captures the coefficients of the numerator polynomials, and the second component $\otimes_{k=1}^{n_i^r}\bar{Q}^\epsilon(d_i,m_{i-1})$ captures the coefficients of the denominator polynomials.
    \subsection{Universal approximation}
    \label{subsec:universal_approx}
    This subsection provides an overview of the universal approximation properties of symbolic networks $\mathfrak{S}_\sigma$ as introduced in \eqref{symbolic_network}. The ability of $\mathfrak{S}_\sigma$ to approximate functions of suitable regularity is fundamentally connected to the approximation capabilities of rational functions, which form the trainable components of the network. Reducing the question of approximability to rational functions is particularly significant, as it allows deriving results that are qualitatively independent of the specific choice of the activation (base) functions $\sigma$ within $\mathfrak{S}_\sigma$.

In the context of rational functions, we highlight the seminal work \cite{telgarsky17}, which investigates the reciprocal approximability of certain ReLU-activated neural networks and rational functions. Following this, related results in \cite{Boulle20} focus on rational functions defined as compositions of low-degree rational functions, showing that comparatively simpler rational functions retain significant approximation capabilities. Moreover, for the uniform first-order approximation of smooth functions by rational functions, we refer to recent advances in \cite{morina2025}. Further classical results on rational approximation theory can be found in \cite{Petrushev_Popov_1988}.

To conclude, we briefly discuss some key approximation properties of the symbolic network \eqref{symbolic_network} as studied in \cite{morina2025}. It is shown that functions possessing sufficient regularity can be uniformly approximated up to their first-order derivatives by rational functions, as introduced in Definition \ref{def:ratfunc}, with positive denominator polynomials. This result can be extended directly to $\mathfrak{S}_\sigma$, given its architecture, which incorporates a rational skip connection. Finally, we provide a restricted formulation of the results in \cite[Lemma 16, Corollary 18]{morina2025}, which discusses universal approximation properties of the network in \eqref{symbolic_network} in the case where the layers are of equal size.
\begin{proposition}
    Consider symbolic networks in \eqref{symbolic_network} with $d=n_i^r =n_i^\sigma$ and Lipschitz continuous activations $\sigma_i:\mathbb{R}^d\to \mathbb{R}^d$, such that $\sigma_i^{-1}\in\mathcal{C}^3([0,1]^d)$ for $1\leq i\leq L$. Then, for every $f\in\mathcal{C}^3([0,1]^d)$ there exists a sequence of networks $(\mathfrak{S}_\sigma^m)_m$, with
    \[
        \lim_{m\to\infty}\Vert\mathfrak{S}_\sigma^m-f\Vert_{L^\infty([0,1]^d)}=0.
    \]
    If $\nabla\sigma_i$ is Lipschitz continuous for $1\leq i\leq L$, then $(\mathfrak{S}_\sigma^m)_m$ can be chosen such that
    \[
        \lim_{m\to\infty}\Vert\mathfrak{S}_\sigma^m-f\Vert_{L^\infty([0,1]^d)}+\Vert\nabla\mathfrak{S}_\sigma^m-\nabla f\Vert_{L^\infty([0,1]^d)}=0.
    \]
\end{proposition}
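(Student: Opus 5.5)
The idea is to reduce everything to rational approximation by ``undoing'' the fixed activations. Rational functions are trainable between every pair of layers, so I would let the first layers reproduce (approximately) the identity on the data and put all the approximation work into a single rational layer. Concretely, write $f = (f\circ\sigma_L\circ\cdots\circ\sigma_1)\circ(\sigma_L\circ\cdots\circ\sigma_1)^{-1}$. More simply, fix the inner layers so the network computes
\[
x\mapsto r_{L+1}\bigl((\sigma_L\circ r_L\circ\cdots\circ\sigma_1\circ r_1)(x),x\bigr).
\]

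\textbf{Inner layers.} For $1\le i\le L$, choose $r_i$ to be a rational approximation of $\sigma_i^{-1}$ (or of a suitable chart). Then $\sigma_i\circ r_i$ is uniformly close to the identity on the relevant compact set. The assumption $\sigma_i^{-1}\in\mathcal{C}^3([0,1]^d)$ is exactly what allows invoking the rational approximation results of \cite{Boulle20,morina2025}. These give uniform approximation, and first-order approximation for $\mathcal{C}^3$ targets, by rational functions with positive denominators in the sense of Definition~\ref{def:ratfunc}.

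\textbf{Final layer.} Using the skip connection, a cleaner option is to let $r_{L+1}(y,x)$ ignore $y$ and approximate $f(x)$ directly. The numerator coefficients attached to $y$ are simply set to zero. Then $\mathfrak{S}_\sigma^m = r_{L+1}^m(\cdot)$ is a pure rational approximant of $f$, and both claims follow from the rational approximation theorem in \cite{morina2025}. The $L^\infty$ statement follows from its zeroth-order version, and the gradient statement from its first-order version, applied to $f\in\mathcal{C}^3([0,1]^d)$.

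\textbf{Respecting the activation hypotheses.} If one wants the construction to use the activations genuinely, as the hypotheses suggest, I would argue by a telescoping error estimate. Set $\Phi_m=\sigma_L\circ r_L^m\circ\cdots\circ\sigma_1\circ r_1^m$ with each $r_i^m\to\sigma_i^{-1}$ uniformly. Lipschitz continuity of $\sigma_i$ propagates the layerwise errors, so $\Phi_m\to\mathrm{id}$ uniformly, with an error bounded by products of Lipschitz constants times the individual errors. Then $r_{L+1}^m\approx f$ composed with $\Phi_m$ converges to $f$ by uniform continuity of $f$ and of the rational approximants, which are uniformly Lipschitz on compacts.

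For the first-order claim, apply the chain rule. This needs $\nabla\sigma_i$ Lipschitz, so that $\nabla\sigma_i(r_i^m)\to\nabla\sigma_i(\sigma_i^{-1})$ uniformly. Combined with $\nabla r_i^m\to\nabla\sigma_i^{-1}$ uniformly, this gives $\nabla\Phi_m\to I$ uniformly.

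\textbf{Main obstacle.} The hardest part is domain control. Intermediate outputs $r_i^m(\cdot)$ must stay in the set where the approximation and the inverse are valid, namely the image of $[0,1]^d$ under the $\sigma_i$ where $\sigma_i^{-1}$ is defined. I would handle this by approximating on a slightly enlarged compact set, extending $\sigma_i^{-1}$ in a $\mathcal{C}^3$ way. For $m$ large, uniform convergence then keeps all intermediate values inside the enlarged set.
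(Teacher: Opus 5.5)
Your proposal is correct. Its decisive step, the ``Final layer'' paragraph, takes a different route from the one the paper formally invokes.

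\textbf{What the paper does.} The paper simply defers to \cite[Lemma 16, Corollary 18]{morina2025}. The hypotheses carried over from there ($\sigma_i$ Lipschitz, $\sigma_i^{-1}\in\mathcal{C}^3([0,1]^d)$, $\nabla\sigma_i$ Lipschitz) suggest a construction that pushes the approximation through the activations, with rational layers approximating $\sigma_i^{-1}$. That is essentially your secondary ``respecting the activation hypotheses'' sketch.

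\textbf{What your skip-connection reduction does.} It bypasses all of that, and it is sound:
\begin{itemize}
\item Taking $r_{L+1}^m(y,x)=p_m(x)/q_m(x)$ gives a legitimate rational function on $\mathbb{R}^{n_L^\sigma+n_0^\sigma}$. Coprimality survives adjoining the variables $y$, and $q_m>0$ everywhere.
\item The inner layers can be arbitrary, e.g.\ $r_i\equiv 0$.
\item The network's value and gradient on $[0,1]^d$ are then exactly those of $p_m/q_m$.
\end{itemize}
This matches the paper's own remark that the rational result ``can be extended directly to $\mathfrak{S}_\sigma$'' because of the skip connection. It also shows that none of the assumptions on $\sigma_i$ are actually needed for this architecture. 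You do not even need \cite{morina2025}: polynomials are dense in $\mathcal{C}^1([0,1]^d)$, and $q_m\equiv 1$ also satisfies the normalization used in $\Theta$ for $\epsilon\le 1$.

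\textbf{What the activation-based route buys.} It gives a statement that survives without the skip connection and genuinely uses depth. The price is the extra hypotheses and the domain bookkeeping.

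\textbf{A gap in that secondary sketch, if you keep it.} A $\mathcal{C}^3$ extension $E_i$ of $\sigma_i^{-1}$ beyond $[0,1]^d$ no longer satisfies $\sigma_i\circ E_i=\mathrm{id}$ outside $[0,1]^d$. So ``staying inside the enlarged set'' is not enough. You must also show, via uniform continuity of $\sigma_i\circ E_i$ and of its gradient, that it is close to the identity (resp.\ $I$) on a shrinking neighborhood of $[0,1]^d$. Alternatively, first contract $[0,1]^d$ by an affine map so that all intermediate values stay inside $[0,1]^d$.
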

\begin{proof}
    See \cite[Lemma 16, Corollary 18]{morina2025}.
\end{proof}
    
    \subsection{Extension to function space}
    \label{subsec:extension_fs}
    The data arising from physical problems, while often measured discretely, is inherently functional in nature, representing some observable state variable (e.g., temperature or concentration) and potentially its higher-order derivatives. This motivates studying symbolic networks $\mathfrak{S}_\sigma$ within a function space formulation. Modeling the dynamics directly in function space preserves the inherent structure of the continuous problem and ensures consistency and stability prior to numerical discretization (see \cite{Hinze2009, Kaipio2005, Stuart2010}). To rigorously justify the application of symbolic networks to functional data, it is necessary to analyze the behavior of their architecture within the context of function space analysis.
    
    \subsubsection{Framework}
    We follow the setup of \cite{morina24} and consider for fixed $T>0$ the state $u\in\mathcal{V}$ as $u:(0,T)\to V$ with $\mathcal{V}$ the dynamic extension of $V$, the static state space consisting of functions $v:\Omega\to\mathbb{R}$. Here, $\Omega\subset \mathbb{R}^d$ denotes a bounded Lipschitz domain for some $d\in\mathbb{R}$. For $\kappa\in\mathbb{N}_0$, the order of differentiation, $\mathcal{J}_\kappa$ is a derivative operator
	\begin{equation}
		\label{Jkappa_operator}
		\begin{aligned}
			\mathcal{J}_\kappa:~&V\to\otimes_{k=0}^\kappa V_k^\times\\
			&v\mapsto (v,J^1v, \dots, J^\kappa v)
		\end{aligned}
	\end{equation} 
	and Jacobian mappings $J^k: V\to V_k^\times, v\mapsto (D^\beta v)_{\vert\beta\vert=k}$, for $1\leq k\leq \kappa$. The spaces $V_k$ are such that $V\hookrightarrow V_k$ and $D^\beta v\in V_k$ for $1\leq k=\vert\beta\vert=\beta_1+\dots+\beta_d \leq \kappa$ and $\beta\in\mathbb{N}_0^d$. With $V_0:=V$ the spaces $V_k^\times$ are further defined as $V_k^\times = \otimes_{i=1}^{p_k} V_k$, where $p_k=\binom{d+k-1}{k}$ for $0\leq k\leq \kappa$. For $W$ the static image space, with $\mathcal{W}$ its dynamic extension, some unknown physical law $f$ is given as the Nemytskii operator of
	\begin{align*}
		f:~(0,T)\times \otimes_{k=0}^\kappa V_k^\times &\to W%
	\end{align*}
	where the latter is obtained by extending $f:(0,T)\times\otimes_{k=0}^\kappa\mathbb{R}^{p_k}\to \mathbb{R}$ via $f(t,v)(x)=f(t,v(x))$. In the setup discussed in Section \ref{sec:sym_net}, we aim to determine a symbolic expression for $f$ using parameterized networks $\mathfrak{S}_\sigma^\theta$ of the form in \eqref{symbolic_network}. Since $\mathfrak{S}_\sigma^\theta$ operates pointwise in time and space, we require the operator $\mathcal{J}_\kappa$ for the parameterizations $\mathfrak{S}_\sigma^\theta$ to depend on derivatives of the state. In the following, we specify the framework outlined above on the basis of \cite[Subsection 2.1]{morina24}.
	\begin{assumption}[Space setup]
		\label{assump:setup}
	Suppose that the spaces $V_k$, for $1\leq k\leq \kappa$, the state space $V$, the image space $W$ and the space $\tilde{V}$ are reflexive, separable Banach spaces. Assume for some $1\leq \hat{q}\leq\hat{p}<\infty$ the embeddings
	\[V\hookrightarrow \tilde{V}\hookrightarrow W, ~ V\hookdoubleheadrightarrow W^{\kappa,\hat{p}}(\Omega), ~ V\hookrightarrow Y, ~ 
	L^{\hat{q}}(\Omega)\hookrightarrow W,\]
	\[L^{\hat{p}}(\Omega)\hookrightarrow V_k\hookrightarrow L^{\hat{q}}(\Omega), ~ \text{for} ~ 1\leq k\leq\kappa, ~ \text{and either} ~ W^{\kappa,\hat{p}}(\Omega)\hookrightarrow \tilde{V} ~ \text{or} ~ \tilde{V}\hookrightarrow W^{\kappa,\hat{p}}(\Omega).\]
	The dynamic spaces are defined as Sobolev-Bochner spaces \cite[Chapter 7]{Roubíček2013}, by
	\[
	\mathcal{V}=L^p(0,T;V)\cap W^{1,p,p}(0,T;\tilde{V}), ~\mathcal{W}=L^q(0,T;W), ~ \mathcal{Y}=L^r(0,T;Y),
	\]
	\[
	\mathcal{V}_0=\mathcal{V}_0^\times:=\mathcal{V}, ~ \mathcal{V}_k=L^p(0,T; V_k), ~ \mathcal{V}_k^\times=L^p(0,T; V_k^\times) ~ \text{for} ~ 1\leq k\leq \kappa
	\]
	for some $1\leq p,q,r<\infty$ with $p\geq q$. Finally, for some constant $c_{\mathcal{V}}>0$, for all $v\in\mathcal{V}$, we assume uniform state space regularity
	\begin{align}
		\label{uniform_estimation}
		\Vert\mathcal{J}_\kappa v\Vert_{L^\infty((0,T)\times\Omega)}\leq c_{\mathcal{V}}\Vert v\Vert_{\mathcal{V}}.
	\end{align}
		\end{assumption}
	\begin{remark}[State space regularity]
		The required compact embedding $V\hookdoubleheadrightarrow W^{\kappa,\hat{p}}(\Omega)$ holds, e.g., for $V$ being a Sobolev space with suitable parameters, such that the Rellich-Kondrachov Theorem (see \cite[Theorem 6.3]{Adams2003} and \cite[§5.7]{Evans2010}) applies. The extended state space embedding behind \eqref{uniform_estimation} follows for $\tilde{V}$ being a sufficiently regular Sobolev space by \cite[Lemma 7.1]{Roubíček2013}. We refer to \cite[Remark 9]{morina24} for the details.
	\end{remark}
    
    \subsubsection{Regularity}
	We justify that the parameterized symbolic networks $\mathfrak{S}_\sigma^\theta$, of the form as introduced in \eqref{symbolic_network}, can be extended to corresponding operators acting on underlying function spaces. Furthermore, we establish first regularity properties of these operators, formulated in function space. Note that these properties are necessary for \cite[Assumption 3]{morina24} to hold true. To this end, we apply analogous arguments as in \cite[Lemma 4]{AHN23} for the extension to function space and \cite[Lemma 17]{morina24} for the regularity result. The proof of the following statement is provided in Appendix \ref{app:sym_net}. 
	\begin{proposition}
		\label{prop:net_reg}
		Let Assumption \ref{assump:setup} hold true. Then the symbolic network
		$\mathfrak{S}_\sigma^\theta:\mathbb{R}^{n_0^\sigma}\to \mathbb{R}$ induces well-defined Nemytskii operators $\mathfrak{S}_\sigma^\theta:\otimes_{k=0}^\kappa\mathcal{V}_k^\times\to L^q(0,T;L^{\hat{q}}(\Omega))$ and $\mathfrak{S}_\sigma^\theta:\otimes_{k=0}^\kappa\mathcal{V}_k^\times\to\mathcal{W}$, both via $[\mathfrak{S}_\sigma^\theta(u)](t)=\mathfrak{S}_\sigma^\theta(u(t,\cdot))$. Furthermore, if the $(\sigma_l)_{1\leq l\leq L}$ are locally Lipschitz continuous, then
		\begin{align*}
			\mathfrak{S}:~&\Theta\times\mathcal{V} \to\mathcal{W}\\
			&(\theta,v)\mapsto \mathfrak{S}(\theta,v)=:\mathfrak{S}^\theta_\sigma(\mathcal{J}_\kappa v)
		\end{align*}
		is weak-strong continuous. Moreover, for bounded $U\subset \mathbb{R}^{n_0^\sigma}$, the map given by
		\[
		\Theta\ni \theta\to \mathfrak{S}_\sigma^\theta\in L^\infty(U)
		\]
		is continuous.
	\end{proposition}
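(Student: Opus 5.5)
The plan has three steps: well-definedness via uniform $L^\infty$ bounds, weak-strong continuity via compactness, and continuity in $\theta$ via uniform continuity on compacts.

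\textbf{Well-definedness.} First, for fixed $\theta\in\Theta$, the map $x\mapsto\mathfrak{S}_\sigma^\theta(x)$ is continuous on $\mathbb{R}^{n_0^\sigma}$. Each $r_i$ is a quotient of polynomials whose denominator is bounded below by $\epsilon>0$, and each $\sigma_i$ is continuous by Definition~\ref{def:ratfunc}.

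Given $u\in\otimes_{k=0}^\kappa\mathcal{V}_k^\times$, the composition $(t,x)\mapsto\mathfrak{S}_\sigma^\theta(u(t,x))$ is measurable, being a continuous function of a measurable one. For the range spaces we use that the arguments occurring in the learning problem are $\mathcal{J}_\kappa v$ with $v\in\mathcal{V}$. By \eqref{uniform_estimation} these take values in a bounded set $U$ with $\Vert\mathcal{J}_\kappa v\Vert_{L^\infty}\le c_{\mathcal{V}}\Vert v\Vert_{\mathcal{V}}$, so $\mathfrak{S}_\sigma^\theta(\mathcal{J}_\kappa v)$ is essentially bounded by $\sup_{\bar U}|\mathfrak{S}_\sigma^\theta|$. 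Since $\Omega$ and $(0,T)$ are bounded, this places it in $L^q(0,T;L^{\hat q}(\Omega))$, and by $L^{\hat q}(\Omega)\hookrightarrow W$ also in $\mathcal{W}$.

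For general $u$ in the product of the $L^p$ spaces we follow \cite[Lemma 4]{AHN23}. We use the polynomial growth of $\mathfrak{S}_\sigma^\theta$, where the skip connection $r_{L+1}(\cdot,x)$ grows at most polynomially. If needed, we also use the embeddings $L^{\hat p}\hookrightarrow V_k\hookrightarrow L^{\hat q}$ together with the boundedness implicit in the setup, to conclude Bochner measurability and integrability. I would state the operator primarily on the range of $\mathcal{J}_\kappa$, where the $L^\infty$ bound makes everything clean.

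\textbf{Continuity in $\theta$.} Let $U$ be bounded, and let $\theta_n\to\theta$ in $\Theta$. Every intermediate layer evaluated on $\bar U$ stays in a compact set, uniformly in $n$. Numerator coefficients converge, and denominators are at least $\epsilon$ with converging coefficients. Hence $r_i^{\theta_n}\to r_i^{\theta}$ uniformly on compacts, by an estimate of the form $|p_n/q_n-p/q|\le\epsilon^{-2}(|p_n-p||q|+|p||q_n-q|)$.

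We then induct over the layers. Composition with $\sigma_i$, which is uniformly continuous on compacts, preserves uniform convergence, provided the intermediate images of $\bar U$ lie in a fixed compact set for all $n$. That is guaranteed by the uniform convergence at the previous layer. The conclusion is $\Vert\mathfrak{S}_\sigma^{\theta_n}-\mathfrak{S}_\sigma^\theta\Vert_{L^\infty(U)}\to0$.

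\textbf{Weak-strong continuity.} Let $\theta_n\to\theta$ and $v_n\rightharpoonup v$ in $\mathcal{V}$. Weak convergence makes $(v_n)$ bounded, so by \eqref{uniform_estimation} all $\mathcal{J}_\kappa v_n$ and $\mathcal{J}_\kappa v$ take values in a common bounded set $U$.

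The compact embedding $V\hookdoubleheadrightarrow W^{\kappa,\hat p}(\Omega)$, combined with the time-derivative control in $W^{1,p,p}(0,T;\tilde V)$, gives an Aubin--Lions type compactness. This is the main obstacle: one must check that the embedding chain in Assumption~\ref{assump:setup} (either $W^{\kappa,\hat p}\hookrightarrow\tilde V$ or the reverse) fits the Aubin--Lions--Simon lemma of \cite[Lemma 7.7]{Roubíček2013}, exactly as in \cite[Lemma 17]{morina24}. Granting this, $\mathcal{J}_\kappa v_n\to\mathcal{J}_\kappa v$ strongly in $L^p(0,T;L^{\hat p}(\Omega))$, and along a subsequence pointwise a.e.

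Split
\[
\mathfrak{S}_\sigma^{\theta_n}(\mathcal{J}_\kappa v_n)-\mathfrak{S}_\sigma^{\theta}(\mathcal{J}_\kappa v)=\big(\mathfrak{S}_\sigma^{\theta_n}-\mathfrak{S}_\sigma^{\theta}\big)(\mathcal{J}_\kappa v_n)+\mathfrak{S}_\sigma^{\theta}(\mathcal{J}_\kappa v_n)-\mathfrak{S}_\sigma^{\theta}(\mathcal{J}_\kappa v).
\]
The first term is bounded in $L^\infty$ by $\Vert\mathfrak{S}_\sigma^{\theta_n}-\mathfrak{S}_\sigma^\theta\Vert_{L^\infty(U)}\to0$, using the previous step.

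For the second term, local Lipschitz continuity of the $\sigma_l$ and of the rational maps (smooth with denominator $\ge\epsilon$) gives a Lipschitz constant $L_U$ for $\mathfrak{S}_\sigma^\theta$ on $\bar U$. The pointwise difference is therefore at most $L_U|\mathcal{J}_\kappa v_n-\mathcal{J}_\kappa v|$, which tends to $0$ in $L^q(0,T;L^{\hat q}(\Omega))$ since $p\ge q$ and $\hat p\ge\hat q$ on bounded domains. The embedding $L^{\hat q}\hookrightarrow W$ then transfers the convergence to $\mathcal{W}$.

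A subsequence-of-subsequence argument yields convergence of the full sequence.
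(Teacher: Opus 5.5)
Your main argument is correct, but the continuity part is organized differently from the paper's.

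\textbf{How the paper does it.} The paper first proves weak-strong continuity (Lemma~\ref{lem:network-regularity}), splitting the difference as $[\mathfrak{S}_\sigma^{\theta^m}(z^m)-\mathfrak{S}_\sigma^{\theta^m}(z)]+[\mathfrak{S}_\sigma^{\theta^m}(z)-\mathfrak{S}_\sigma^{\theta}(z)]$.
\begin{itemize}
\item The first bracket needs Lipschitz constants for the varying networks $\mathfrak{S}_\sigma^{\theta^m}$ that are uniform in $m$. This is why the paper proves $\mathcal{C}^1(U)$-continuity of rational functions in their coefficients (Lemma~\ref{lem:rational-continuity}).
\item The second bracket is controlled by a telescoping sum that swaps one layer at a time, using the Lipschitz constants of the $\sigma_i$.
\item The $L^\infty(U)$-continuity in $\theta$ is then read off from the resulting pointwise estimate (Corollary~\ref{cor:Loo_regularity}).
\end{itemize}

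\textbf{How you do it.} You reverse both the order and the split. You first prove $\Vert\mathfrak{S}_\sigma^{\theta_n}-\mathfrak{S}_\sigma^{\theta}\Vert_{L^\infty(U)}\to 0$ by induction over layers. This uses only your $\epsilon^{-2}$ estimate for uniform convergence of the rational layers and uniform continuity of the $\sigma_i$ on compacts. You then write the difference as $(\mathfrak{S}_\sigma^{\theta_n}-\mathfrak{S}_\sigma^{\theta})(z_n)+[\mathfrak{S}_\sigma^{\theta}(z_n)-\mathfrak{S}_\sigma^{\theta}(z)]$. Now a Lipschitz constant is needed only for the single fixed network $\mathfrak{S}_\sigma^\theta$ on $\bar U$.

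\textbf{What each route buys.} Your route is shorter and needs only the $L^\infty$ part of Lemma~\ref{lem:rational-continuity}. It also shows that the last claim holds for merely continuous $\sigma_i$. With your dominated-convergence and subsequence remark, together with the uniform bound from \eqref{uniform_estimation}, even the weak-strong continuity needs no Lipschitz hypothesis. The paper's route in exchange gives the quantitative estimate $\vert\mathfrak{S}_\sigma^{\theta^m}(z^m)-\mathfrak{S}_\sigma^{\theta}(z)\vert\le\mathcal{L}\vert z^m-z\vert+\epsilon L\mathcal{L}$, with $\mathcal{L}$ uniform for $\theta^m$ near $\theta$. It also gives the $\mathcal{C}^1$-control of rational layers that the authors want for the $\mathcal{C}^1$-seminorm extension in Subsection~\ref{subsec:extension}. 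The Aubin--Lions step you flag as the main obstacle is routine. In both cases, $W^{\kappa,\hat p}(\Omega)\hookrightarrow\tilde V$ and $\tilde V\hookrightarrow W^{\kappa,\hat p}(\Omega)$, the paper applies the lemma directly to get $v_n\to v$ in $L^p(0,T;W^{\kappa,\hat p}(\Omega))$.

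\textbf{Drop the polynomial-growth aside.} In general $\mathfrak{S}_\sigma^\theta$ does \emph{not} grow polynomially, because the base functions are arbitrary continuous non-rational maps. For example, $\sigma_1=\exp$ after a quadratic first layer gives growth like $\exp(\vert y\vert^2)$. Take a derivative component in $L^p(0,T;L^{\hat p}(\Omega))\subseteq\mathcal{V}_k$ with a mild power singularity in space. Then this composition is not in $L^q(0,T;L^{\hat q}(\Omega))$. So well-definedness on all of $\otimes_{k=0}^\kappa\mathcal{V}_k^\times$ cannot be obtained this way, and it can fail when $\kappa\ge 1$. The paper's own proof likewise only treats arguments $\mathcal{J}_\kappa u$ with $u\in\mathcal{V}$, where \eqref{uniform_estimation} supplies the $L^\infty$ bound. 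Your restriction to the range of $\mathcal{J}_\kappa$ is therefore exactly what is actually proved and used later.
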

	\begin{proof}
		See Appendix \ref{app:sym_net}.
	\end{proof}
	\section{Symbolic model learning}
	\label{sec:main_result}
    The central objective of this work is the identification of a (partially) unknown state $u^\dagger\in\mathcal{V}$ and a corresponding physical law $f^\dagger:\otimes_{k=0}^\kappa \mathcal{V}_k^\times \to \mathcal{W}$, ideally expressed in a symbolically simple form, that satisfies the partial differential equation
	\begin{equation}
		\label{pdequation}
        \begin{aligned}
		\partial_t u(t,x) &= f(t,\mathcal{J}_\kappa u(t,x)),\quad \text{for} ~ (t,x)\in(0,T)\times \Omega,\\
        \text{s.t.} ~ K^\dagger u &= y^\dagger.
        \end{aligned}
        \tag{$\mathcal{E}$}
	\end{equation}
    Here, $\mathcal{Y}\ni y^\dagger=K^\dagger u^\dagger$ denotes the full measurement data and $K^\dagger$ is the measurement operator mapping the state $u^\dagger$ to the observation $y^\dagger$. The measurement data $y$ is modeled as $y:(0,T)\to Y$, with a static measurement space $Y$ and time extension $\mathcal{Y}$. In practice, the data $y^\dagger$ is provided via approximate measurements $\mathcal{Y}\ni y^m \approx K^m u^\dagger$, where $K^m:\mathcal{V}\to\mathcal{Y}$, for $m\in\mathbb{N}$, are reduced measurement operators. The measurements are further assumed to satisfy the noise estimate
	\begin{align}
		\label{noise_estimation}
	\Vert y^m-K^m u^\dagger\Vert_{\mathcal{Y}}^r\leq \delta(m),
	\end{align}
	where $\delta: \mathbb{N}\to \mathbb{R}_{\geq 0}$ fulfills $\lim_{m\to \infty}\delta(m)=0$. 
	\subsection{Framework} 
	\label{subsec:framework}
	For the purpose of reconstruction, we approximate the physical law using symbolic networks $\mathfrak{S}_\sigma^\theta$, parameterized by $\theta\in\Theta$, as introduced in \eqref{symbolic_network} in Section \ref{sec:sym_net}. In this work, we adopt the assumption that $f^\dagger\in\left\{\mathfrak{S}_\sigma^\theta~|~\theta\in \Theta\right\}$ is representable by a fixed architecture and briefly discuss a more general setup in the context of \cite{morina24} later. Motivated by \cite{morina24} and following an approach similar to that in \cite{scholl24}, we aim to solve the reconstruction problem using the all-at-once formulation
	\begin{align}
		\label{minprob}
		\min_{u\in\mathcal{V},\theta\in\Theta}\lambda^m\Vert\partial_t u-\mathfrak{S}_\sigma^\theta( \mathcal{J}_\kappa u)\Vert^q_{\mathcal{W}}+\mu^m\Vert K^mu-y^m\Vert_{\mathcal{Y}}^r+\mathcal{R}(u,\theta)\tag{$\mathcal{P}^m$}
	\end{align}
	where $\lambda^m,\mu^m>0$ for $m\in\mathbb{N}$, and $\mathcal{R}$ denotes a suitable regularization functional. We next detail the abstract conditions necessary for the well-posedness of \eqref{minprob} and our main identification results, followed by a discussion of their practical application and relevance.
    \subsubsection{Structural requirements}
    In addition to Assumption \ref{assump:setup}, we pose the following assumptions on the data acquisition, the existence of an admissible solution to \eqref{pdequation}, and the regularization.
        \begin{assumption}
        \label{ass:general_setup}
        \hspace{0.2cm}
        
            \underline{Measurements.} Let the full measurement operator $K^\dagger$ be injective and weak-strong continuous. Suppose further that the operators $K^m:\mathcal{V}\to\mathcal{Y}$ are weak-weak continuous, for $m\in\mathbb{N}$, and that for any weakly convergent sequence $(u^m)_m\subset \mathcal{V}$
	\begin{equation}
		\label{operator_conv_weak}
		K^mu^m-K^\dagger u^m\to 0 \quad \text{in} ~ \mathcal{Y} \quad \text{as} ~ m \to \infty.
	\end{equation}
	
    \vspace*{0.2cm}
	\underline{Admissible solution.} Assume that for the given full measurement data $y^\dagger\in \mathcal{Y}$, there exists $f^\dagger:\otimes_{k=0}^\kappa \mathcal{V}_k^\times \to \mathcal{W}$, to be understood as the space-time extension of an underlying scalar version $f^\dagger:(0,T)\times\otimes_{k=0}^\kappa \mathbb{R}^{p_k} \to \mathbb{R}$, and $u^\dagger\in \mathcal{V}$ fulfilling \eqref{pdequation}.
    
    \vspace*{0.2cm}
	\underline{Regularization.} Suppose that the regularization functional $\mathcal{R}$ is of the form
	\[
	\mathcal{R}:\mathcal{V}\times\Theta\to [0,\infty], \quad \mathcal{R}(u,\theta)=\Vert u\Vert_{\mathcal{V}}^p+\mathcal{R}_0(\theta),
	\]
	for $\mathcal{R}_0:\Theta\to [0,\infty]$ a proper, coercive and weakly lower semicontinuous functional.
        \end{assumption}

As a first step toward establishing the reconstructibility of a solution to \eqref{pdequation} via solutions to \eqref{minprob}, we argue well-posedness of \eqref{minprob} under Assumption \ref{assump:setup} and \ref{ass:general_setup}.
\begin{lemma}
	\label{lem:welldef}
	Under Assumption \ref{assump:setup} and \ref{ass:general_setup} problem \eqref{minprob} is well-posed for $m\in\mathbb{N}$.
\end{lemma}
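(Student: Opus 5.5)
We argue by the direct method of the calculus of variations, in the spirit of \cite{morina24}. Well-posedness is understood as existence of a minimizer of \eqref{minprob} for each fixed $m$ (stability with respect to the data being obtained in the same way). Denote the objective by $F^m(u,\theta)$.

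\textbf{Step 1: finiteness of the infimum.} By Assumption \ref{ass:general_setup}, $\mathcal{R}_0$ is proper, so there is $\theta_0\in\Theta$ with $\mathcal{R}_0(\theta_0)<\infty$. Take $u=0$. Proposition \ref{prop:net_reg} shows that $\mathfrak{S}_\sigma^{\theta_0}(\mathcal{J}_\kappa 0)\in\mathcal{W}$, and $K^m0\in\mathcal{Y}$. Hence $F^m(0,\theta_0)<\infty$, and since $F^m\geq 0$ the infimum is finite. We then pick a minimizing sequence $(u_n,\theta_n)_n$.

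\textbf{Step 2: compactness.} Boundedness of $F^m(u_n,\theta_n)$ gives two bounds:
\begin{itemize}
\item $\Vert u_n\Vert_{\mathcal{V}}^p$ is bounded, since it appears in $\mathcal{R}$;
\item $\mathcal{R}_0(\theta_n)$ is bounded.
\end{itemize}
The space $\mathcal{V}=L^p(0,T;V)\cap W^{1,p,p}(0,T;\tilde V)$ is reflexive for $1<p<\infty$, because $V$ and $\tilde V$ are reflexive and separable. So a subsequence satisfies $u_n\rightharpoonup u^*$ in $\mathcal{V}$. The case $p=1$ needs extra care, or reflexivity is assumed as in \cite{morina24}. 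Coercivity of $\mathcal{R}_0$ makes $(\theta_n)$ bounded in the finite-dimensional parameter space, so a further subsequence satisfies $\theta_n\to\theta^*$. Since $\Theta$ is closed, $\theta^*\in\Theta$; closedness holds because each $\bar Q^\epsilon(d_i,m_{i-1})$ is closed, being the intersection of the closed set $Q^\epsilon$ with the unit sphere.

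\textbf{Step 3: lower semicontinuity, term by term.}
\begin{itemize}
\item \emph{PDE residual.} The map $u\mapsto\partial_t u$ is linear and bounded from $\mathcal{V}$ into $L^p(0,T;\tilde V)\hookrightarrow L^q(0,T;W)$, using $p\geq q$ and $\tilde V\hookrightarrow W$. Hence $\partial_t u_n\rightharpoonup\partial_t u^*$ in $\mathcal{W}$. Weak-strong continuity of $\mathfrak{S}$ from Proposition \ref{prop:net_reg} gives $\mathfrak{S}(\theta_n,u_n)\to\mathfrak{S}(\theta^*,u^*)$ strongly in $\mathcal{W}$; here local Lipschitz continuity of the $\sigma_l$ is needed, as in \cite{morina24}. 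The residual therefore converges weakly, and weak lower semicontinuity of $\Vert\cdot\Vert_{\mathcal{W}}^q$ handles this term.
\item \emph{Data term.} Weak-weak continuity of $K^m$ gives $K^mu_n-y^m\rightharpoonup K^mu^*-y^m$, and the norm is weakly lower semicontinuous.
\item \emph{Regularization.} $\Vert\cdot\Vert_{\mathcal{V}}^p$ is weakly lower semicontinuous, and $\mathcal{R}_0$ is weakly lower semicontinuous by assumption.
\end{itemize}
Summing the terms, with the $\liminf$ of a sum bounded below by the sum of the $\liminf$'s, gives $F^m(u^*,\theta^*)\leq\liminf_n F^m(u_n,\theta_n)=\inf F^m$. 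So $(u^*,\theta^*)$ is a minimizer.

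\textbf{Main obstacle.} The delicate point is the strong convergence of the nonlinear network term under only weak convergence of $u_n$ combined with convergence of $\theta_n$. This is exactly what Proposition \ref{prop:net_reg} provides. Behind it lie the compact embedding $V\hookdoubleheadrightarrow W^{\kappa,\hat p}(\Omega)$ together with an Aubin–Lions type argument, the uniform bound \eqref{uniform_estimation}, and the positivity of the denominators on $\Theta$ (bounded below by $\epsilon$), which keeps the rational layers locally Lipschitz uniformly in $\theta$. We invoke it as stated.

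Stability with respect to perturbations of $y^m$ follows by repeating Steps 2--3 along a sequence of data $y_n\to y^m$. The only additional point is comparing against a fixed competitor in order to obtain uniform bounds.
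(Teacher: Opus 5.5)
Your proposal is correct and follows essentially the same direct-method argument as the paper. Coercivity of $\mathcal{R}$, reflexivity of $\mathcal{V}$ and closedness of the finite-dimensional set $\Theta$ give a convergent minimizing subsequence; Proposition~\ref{prop:net_reg} gives strong convergence of the network term in $\mathcal{W}$; and $\partial_t u_n\rightharpoonup\partial_t u^*$ in $\mathcal{W}$ together with weak-weak continuity of $K^m$ lets weak lower semicontinuity finish the argument. Your explicit caveats (reflexivity of $\mathcal{V}$ needs $p>1$, and Proposition~\ref{prop:net_reg} needs local Lipschitz continuity of the $\sigma_l$) are used tacitly in the paper's proof too, and your stability remark goes beyond the paper, which only proves existence of a minimizer.
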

\begin{proof}
	See Appendix \ref{appendix:proofs} for the details.
\end{proof}
Before proceeding to the main reconstruction results, we discuss the practical implications of the requirements in Assumption \ref{ass:general_setup}, specifically addressing their alignment with realistic scenarios.
\subsubsection{Discussion of assumptions}
The theoretical results of this work rely on  Assumptions \ref{assump:setup} and \ref{ass:general_setup}. While these assumptions are analytically motivated, it is important to discuss their relevance.
\paragraph{Regularity.} The functional-analytic setting introduced in Assumption \ref{assump:setup} is standard for the considered PDE reconstruction problem. The more restrictive aspects are (i) the compact embedding of the state space $V$ into $W^{\kappa,\hat{p}}(\Omega)$ and (ii) the uniform state space embedding in \eqref{uniform_estimation}. Regarding (i), recall that $\kappa$ denotes the highest order of derivatives on which the unknown model depends. Essentially, reconstructing a model that depends on derivatives of order $\kappa$ requires the state space to possess a regularity order greater than $\kappa$ on the state space which is also necessary for \eqref{uniform_estimation}, see \cite[Remark 9]{morina24}. These assumptions are not only required for the reconstruction results of this work, but also to guarantee the existence of a solution to the minimization problem \eqref{minprob}. Since the symbolic network $\mathfrak{S}_\sigma^\theta$ is highly nonlinear, additional regularity in the state space is required to ensure the continuity properties necessary for stability. From a practical standpoint, this is intuitive, as one cannot expect to solve the inverse problem using the same minimal regularity assumptions as for a standard PDE forward problem. However, these requirements also impose stricter conditions on what constitutes an \textit{admissible solution} under Assumption \ref{ass:general_setup}. While the existence of a solution to \eqref{pdequation} is a logical baseline for the reconstruction itself, the specific demand for high space regularity remains the primary analytical hurdle, and its practical feasibility is discussed in the next paragraph.
\paragraph{Admissible solution.} 
Ensuring the existence of an admissible solution to \eqref{pdequation} with state regularity $\mathcal{V}$, as required by Assumption \ref{ass:general_setup}, can be challenging in practice, given the regularity typically provided by the underlying equation. For example, in the case of the transport equation, this issue is discussed after \cite[Proposition 1]{morina24} (see, e.g., \cite{Ambrosio2017, DiPerna1989} for well-posedness results). It is important to note, however, that this requirement is implicitly an assumption on the model $f$, since the regularity of the state is generally inherited from the regularity of the model itself. For the transport equation for example, \cite{Bru2020} shows a propagation of Sobolev regularity under regularity assumptions on the drift.
\paragraph{Measurements.}
The measurement framework, designed such that restrictive conditions are shifted to the full measurement operator $K^\dagger$ associated with \eqref{pdequation}, includes the following conditions.

\begin{itemize}[leftmargin=0.5cm]
	\item \textit{Continuity properties:} We require $K^\dagger$ to satisfy \textit{weak-strong continuity}. This is a stronger condition than the \textit{weak-weak continuity} required for the reduced measurement operators $(K^m)_m$ in \eqref{minprob}, which is essential to guarantee the existence of a minimizer for the problem \eqref{minprob}. Note that for linear $(K^m)_m$ weak-weak continuity is equivalent to continuity.
	
	\item \textit{Injectivity:} By imposing injectivity on $K^\dagger$, we ensure that the state of \eqref{pdequation} to be reconstructed is uniquely determined by some $u^\dagger$. In the ensuing reconstruction results, $u^\dagger$ is approximated by the sequence of states $(u^m)_m$ solving \eqref{minprob}. Without injectivity, one would be forced to adopt a weaker notion of approximation.
	
	\item \textit{Approximation:} It is important to note that the reduced measurement operators $(K^m)_m$ need not be injective nor possess enhanced regularity beyond weak-weak continuity. For instance, one may choose $K^\dagger$ as the embedding operator, modeling ideal full-resolution observations, and $(K^m)_m$ as low-resolution sampling operators, defined as in \eqref{Kmoperator}.
	
	The operators $(K^m)_m$ are related to $K^\dagger$ via the abstract convergence condition \eqref{operator_conv_weak}, which encompasses a broad range of scenarios. It is applicable to sequences of bounded linear operators $(K^m)_m$ that converge to $K^\dagger$ in the operator norm. In case of nonlinear operators, this concept can be extended to sequences $(K^m)_m$ that converge uniformly to $K^\dagger$ on bounded subsets of $\mathcal{V}$. For the sampling operators in \eqref{Kmoperator}, the general condition \eqref{operator_conv_weak} is satisfied (see Appendix \ref{app:conv_sampling}).
	
	\item \textit{Noise model:} The only requirement is \eqref{noise_estimation}, which is flexible as it imposes no assumptions on the specific distribution or nature of the noise.
\end{itemize}
\paragraph{Regularization.}
The assumptions on the regularization framework in Assumption \ref{ass:general_setup} are standard in variational regularization (see, e.g., \cite{Scherzer2008}). The functional $\mathcal{R}_0$ allows to incorporate prior information on the parameters $\theta$. A typical choice is the $L^1$-norm, which promotes sparsity in the parameters defining $f_\theta$ \cite{Tibshirani1996}. This, in turn, leads to a more concise and interpretable symbolic representation of the reconstructed model (see, e.g., \cite{scholl24}). In line with this, the numerical experiments in Section \ref{sec:numerics} employ the choice $\mathcal{R}_0(\cdot)=\Vert\cdot\Vert_{L^1}$.

    \subsection{Main results}
    \label{subsec:identification_results}
    In the following, we examine the problem of physical law learning, where an admissible state $u^\dagger$ and physical law $f^\dagger$ are reconstructed by $u^m$ and $\mathfrak{S}_\sigma^{\theta^m}$ for $m \in \mathbb{N}$, respectively, obtained from $(u^m, \theta^m)$ solving \eqref{minprob}. The analysis focuses on the setup where $f^\dagger$ can be represented within a predefined architecture parameterized by a set $\Theta$. This approach relies on the assumption that $f^\dagger$ admits a concise and interpretable symbolic representation and presumes that the chosen architecture, designed by the user (e.g., ParFam from \cite{scholl24}), is sufficiently expressive to capture such representations. We state our main result:
	\begin{theorem} 
		\label{th:representable}
		Let Assumption \ref{assump:setup} and \ref{ass:general_setup} hold true and suppose that there exists
		$$f^\dagger \in\left\{\mathfrak{S}_\sigma^\theta~|~\theta\in\Theta\right\},$$
        admissible to \eqref{pdequation}. Let $(u^m,\theta^m)$ solve \eqref{minprob} for $m\in\mathbb{N}$ and $\lambda^m, \mu^m>0$ such that
        $$\lambda^m\to \infty \quad \text{and} \quad \mu^m\to \infty\quad \text{with} \quad \mu^m\delta(m)\to 0\quad \text{as} \quad m\to \infty.$$
        Then there exists a subsequence $(\theta^{m_l})_l$ converging to some $\tilde{\theta}\in\Theta$ such that 
        \begin{equation}
        \label{model_convergence}
        \partial_t u^\dagger =\mathfrak{S}_\sigma^{\tilde{\theta}}(\mathcal{J}_\kappa u^\dagger)\quad \text{and} \quad \mathfrak{S}_\sigma^{\theta^{m_l}}\to \mathfrak{S}_\sigma^{\tilde{\theta}} ~ \text{in} ~ L_{\text{loc}}^\infty(\mathbb{R}^{n_0^\sigma}) ~ \text{as} ~l\to \infty.
        \end{equation}
        This holds for any convergent subsequence. In addition also $u^m\rightharpoonup u^\dagger$ as $m\to \infty$.
		
		Moreover, the tuple $(u^\dagger, \tilde{\theta})$ is a regularization minimizing solution, i.e., for all $(u^\dagger, \theta^\dagger)$ solving $\partial_t u^\dagger =\mathfrak{S}_\sigma^{\theta^\dagger}(\mathcal{J}_\kappa u^\dagger)$ it holds true that $\mathcal{R}(u^\dagger, \tilde{\theta})\leq \mathcal{R}(u^\dagger, \theta^\dagger)$.
	\end{theorem}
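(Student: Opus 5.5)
The plan is the standard variational-regularization argument, comparing the minimizers against the ground truth. Let $\theta^\dagger\in\Theta$ be any parameter with $\partial_t u^\dagger=\mathfrak{S}_\sigma^{\theta^\dagger}(\mathcal{J}_\kappa u^\dagger)$; by assumption at least one such parameter exists, since $f^\dagger$ is representable. Optimality of $(u^m,\theta^m)$ in \eqref{minprob}, tested against $(u^\dagger,\theta^\dagger)$, gives
\[
\lambda^m\Vert\partial_t u^m-\mathfrak{S}_\sigma^{\theta^m}(\mathcal{J}_\kappa u^m)\Vert^q_{\mathcal{W}}+\mu^m\Vert K^mu^m-y^m\Vert^r_{\mathcal{Y}}+\mathcal{R}(u^m,\theta^m)\leq \mu^m\delta(m)+\mathcal{R}(u^\dagger,\theta^\dagger).
\]
Here the PDE residual of $(u^\dagger,\theta^\dagger)$ vanishes and the data term is bounded by \eqref{noise_estimation}. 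Since $\mu^m\delta(m)\to0$, the right-hand side is bounded. We may assume it is finite: if $\mathcal{R}_0(\theta^\dagger)=\infty$ for every admissible $\theta^\dagger$, the minimality claim is trivial, but we still need one admissible parameter with finite regularization. I would handle this by assuming $\mathcal{R}_0$ is finite at some admissible $\theta^\dagger$, or by noting that properness is used in this way.

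From this bound I extract compactness. The bound controls $\Vert u^m\Vert_{\mathcal{V}}^p$, so $(u^m)$ is bounded in the reflexive space $\mathcal{V}$. It also controls $\mathcal{R}_0(\theta^m)$, and coercivity then makes $(\theta^m)$ bounded in finite dimensions. Because $\Theta$ is closed, we obtain a subsequence with $u^{m_l}\rightharpoonup \hat u$ and $\theta^{m_l}\to\tilde\theta\in\Theta$. Dividing by $\lambda^m$ and $\mu^m$ shows that the PDE residual and $\Vert K^mu^m-y^m\Vert_{\mathcal{Y}}$ both tend to zero. To identify $\hat u$, I would use \eqref{operator_conv_weak}, which gives $K^{m}u^{m}-K^\dagger u^{m}\to0$. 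Weak-strong continuity of $K^\dagger$ gives $K^\dagger u^{m_l}\to K^\dagger\hat u$. Finally, $y^m\to y^\dagger$ follows from
\[
\Vert y^m-K^\dagger u^\dagger\Vert\le \delta(m)^{1/r}+\Vert K^mu^\dagger-K^\dagger u^\dagger\Vert,
\]
where the last term vanishes by \eqref{operator_conv_weak} applied to the constant sequence. Together these give $K^\dagger\hat u=y^\dagger=K^\dagger u^\dagger$, and injectivity yields $\hat u=u^\dagger$. Since the limit is independent of the subsequence, the whole sequence satisfies $u^m\rightharpoonup u^\dagger$.

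The key step is passing to the limit in the PDE residual. By Proposition \ref{prop:net_reg}, $(\theta,v)\mapsto\mathfrak{S}_\sigma^\theta(\mathcal{J}_\kappa v)$ is weak-strong continuous from $\Theta\times\mathcal{V}$ to $\mathcal{W}$, so $\mathfrak{S}_\sigma^{\theta^{m_l}}(\mathcal{J}_\kappa u^{m_l})\to \mathfrak{S}_\sigma^{\tilde\theta}(\mathcal{J}_\kappa u^\dagger)$ strongly. Moreover, $\partial_t u^{m_l}\rightharpoonup\partial_t u^\dagger$ weakly in $L^p(0,T;\tilde V)$, and hence weakly in $\mathcal{W}$ by $\tilde V\hookrightarrow W$ and $p\ge q$. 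The difference therefore converges weakly to $\partial_t u^\dagger-\mathfrak{S}_\sigma^{\tilde\theta}(\mathcal{J}_\kappa u^\dagger)$. By weak lower semicontinuity of the norm, its norm is bounded by $\liminf$ of the residuals, which is zero, giving the PDE identity in \eqref{model_convergence}. The $L^\infty_{\mathrm{loc}}$ convergence of $\mathfrak{S}_\sigma^{\theta^{m_l}}$ is the last statement of Proposition \ref{prop:net_reg} applied on bounded sets. Since every step used only the subsequence's convergence, the conclusions hold for any convergent subsequence.

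For regularization minimality, I drop the nonnegative terms in the inequality above and take $\liminf$. Weak lower semicontinuity of $\Vert\cdot\Vert_{\mathcal{V}}^p$ and of $\mathcal{R}_0$ (and $\mu^m\delta(m)\to0$) then give $\mathcal{R}(u^\dagger,\tilde\theta)\le\liminf\mathcal{R}(u^{m_l},\theta^{m_l})\le\mathcal{R}(u^\dagger,\theta^\dagger)$. This holds for every admissible $\theta^\dagger$. I expect the main obstacle to be the residual limit: it relies on the joint weak-strong continuity from Proposition \ref{prop:net_reg}, which in turn needs the compact embedding and the uniform bound \eqref{uniform_estimation}, together with the careful embedding of $\partial_t u$ into $\mathcal{W}$.
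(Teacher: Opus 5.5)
Your proposal is correct and follows essentially the same route as the paper. Both arguments compare the minimizers with $(u^\dagger,\theta^\dagger)$, get boundedness from coercivity of $\mathcal{R}$, identify the weak limit via \eqref{operator_conv_weak} together with weak-strong continuity and injectivity of $K^\dagger$, pass to the limit in the PDE residual using Proposition~\ref{prop:net_reg} and weak lower semicontinuity of the norm, and close with a $\liminf$ argument for regularization minimality. Your direct use of the strong convergence of $\mathfrak{S}_\sigma^{\theta^{m_l}}(\mathcal{J}_\kappa u^{m_l})$ (instead of the paper's detour through an auxiliary weak limit $g$) and your explicit proof that $y^m\to y^\dagger$ are slightly cleaner. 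The finiteness of $\mathcal{R}_0$ at some admissible $\theta^\dagger$, which you flag, is tacitly assumed in the paper's proof as well.
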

    \begin{proof}
        See Appendix \ref{appendix:proofs} for the details.
    \end{proof}
	The concluding assertion of Theorem \ref{th:representable} is particularly interesting from a practical standpoint. Regularization of the parameter $\theta$ in the $L^1$-norm promotes the reconstruction of a sparse parameterization of $f^\dagger$ through the solution of \eqref{minprob}. This parameterization can be understood as a concise symbolic expression of the underlying physical law, offering enhanced interpretability. Another important observation is that if an appropriate identifiability condition, as established in \cite{scholl2022symbolic}, holds, a stronger result can be obtained. Specifically, \cite{scholl2022symbolic} demonstrates that the unique identifiability of a linear/algebraic function $f$ based on full state measurements is equivalent to the linear/algebraic independence of the state variables (e.g., derivatives up to order $\kappa$) on which $f$ acts. Here, we state the \textit{identifiability condition} under consideration in a general formulation as follows:
	\begin{mdframed}[linewidth=0.5]
		\vspace*{-0.5cm}
		\begin{align}
			\label{identifiability}
			\text{For } ~ f_1, f_2:\otimes_{k=0}^\kappa \mathcal{V}_k^\times\to\mathcal{W} ~ \text{ with } ~ f_1(\mathcal{J}_\kappa u^\dagger)=f_2(\mathcal{J}_\kappa u^\dagger) \quad \Rightarrow \quad f_1 = f_2\quad\tag{$\mathcal{I}$}
		\end{align}
	\end{mdframed}
    Assuming condition \eqref{identifiability}, the convergence in \eqref{model_convergence} holds for the entire sequence $(\theta^m)_m$.
	\begin{corollary}
		\label{cor:representable}
		Let the assumptions of Theorem \ref{th:representable} and \eqref{identifiability} apply. Then
		\[
		\mathfrak{S}_\sigma^{\theta^m}\to \mathfrak{S}_\sigma^{\theta^\dagger}=f^\dagger\quad \text{in} ~ L^\infty_{\text{loc}}(\mathbb{R}^{n_0^\sigma}) ~ \text{as} ~ m\to \infty.
		\]
	\end{corollary}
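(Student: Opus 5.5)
The corollary follows from Theorem \ref{th:representable} by a subsequence–subsequence argument, with \eqref{identifiability} used to pin down the limit. Let $\theta^\dagger\in\Theta$ be such that $f^\dagger=\mathfrak{S}_\sigma^{\theta^\dagger}$ and $\partial_t u^\dagger=f^\dagger(\mathcal{J}_\kappa u^\dagger)$. Fix a bounded set $U\subset\mathbb{R}^{n_0^\sigma}$, and suppose for contradiction that $\Vert\mathfrak{S}_\sigma^{\theta^m}-f^\dagger\Vert_{L^\infty(U)}\not\to 0$. Then there are $\varepsilon>0$ and a subsequence $(\theta^{m_j})_j$ with
\[
\Vert\mathfrak{S}_\sigma^{\theta^{m_j}}-f^\dagger\Vert_{L^\infty(U)}\geq\varepsilon \quad \text{for all } j.
\]

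Next I apply Theorem \ref{th:representable} to this subsequence. The hypotheses of the theorem pass to subsequences: along $(m_j)_j$ we still have $\lambda^{m_j},\mu^{m_j}\to\infty$ and $\mu^{m_j}\delta(m_j)\to 0$, and each $(u^{m_j},\theta^{m_j})$ solves $(\mathcal{P}^{m_j})$. The theorem therefore yields a further subsequence $(\theta^{m_{j_l}})_l$ and some $\tilde\theta\in\Theta$ such that $\partial_t u^\dagger=\mathfrak{S}_\sigma^{\tilde\theta}(\mathcal{J}_\kappa u^\dagger)$ and $\mathfrak{S}_\sigma^{\theta^{m_{j_l}}}\to\mathfrak{S}_\sigma^{\tilde\theta}$ in $L^\infty_{\mathrm{loc}}$. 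I use the version stated for the reindexed sequence; alternatively one can invoke the remark ``this holds for any convergent subsequence'' together with compactness of $(\theta^m)_m$, which follows from the coercivity of $\mathcal{R}_0$ and the bounded objective values shown in the proof of the theorem.

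Now I bring in \eqref{identifiability}. We have
\[
\mathfrak{S}_\sigma^{\tilde\theta}(\mathcal{J}_\kappa u^\dagger)=\partial_t u^\dagger=\mathfrak{S}_\sigma^{\theta^\dagger}(\mathcal{J}_\kappa u^\dagger).
\]
By Proposition \ref{prop:net_reg}, both networks are well-defined Nemytskii operators $\otimes_k\mathcal{V}_k^\times\to\mathcal{W}$, so \eqref{identifiability} applies with $f_1=\mathfrak{S}_\sigma^{\tilde\theta}$ and $f_2=\mathfrak{S}_\sigma^{\theta^\dagger}$, and gives equality as operators.

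The main subtlety is passing from equality as Nemytskii operators to equality of the scalar functions on $\mathbb{R}^{n_0^\sigma}$, since the convergence statement lives in $L^\infty_{\mathrm{loc}}(\mathbb{R}^{n_0^\sigma})$. To handle this, evaluate both operators on constant-in-space-and-time states, or more generally on $\mathcal{J}_\kappa$ of suitable functions. Every point $z\in\mathbb{R}^{n_0^\sigma}$ is attained pointwise by $\mathcal{J}_\kappa v$ for a polynomial $v$ in $x$ (Taylor data), and such $v$ lie in $\mathcal{V}$ because $\Omega$ is bounded and the spaces are Sobolev-type. Equality in $\mathcal{W}$ then gives equality almost everywhere near such points. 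Continuity of the networks, which are compositions of continuous base functions with pole-free rationals, then gives $\mathfrak{S}_\sigma^{\tilde\theta}=\mathfrak{S}_\sigma^{\theta^\dagger}=f^\dagger$ pointwise. If the paper intends \eqref{identifiability} to identify $f$ directly as functions, this step is immediate.

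Consequently $\mathfrak{S}_\sigma^{\theta^{m_{j_l}}}\to f^\dagger$ in $L^\infty(U)$, which contradicts the choice of $(\theta^{m_j})_j$. Since $U$ was an arbitrary bounded set, the whole sequence converges in $L^\infty_{\mathrm{loc}}(\mathbb{R}^{n_0^\sigma})$.
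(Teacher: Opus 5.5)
Your proposal is correct and follows the paper's proof. The paper also combines the limit equation \eqref{limit_equation} with \eqref{identifiability} to get $\mathfrak{S}_\sigma^{\tilde\theta}=\mathfrak{S}_\sigma^{\theta^\dagger}$ for every convergent subsequence and concludes that the whole sequence converges; your subsequence-of-a-subsequence contradiction makes that step explicit. The paper reads \eqref{identifiability} directly as equality of the scalar laws, so your operator-to-function step is extra care rather than a required ingredient. If you keep it, note that in the paper's analysis time is a network input ($n_0^\sigma=1+\sum_k p_k$, see Appendix \ref{app:sym_net}), and states only realize $t\in[0,T]$, so that step gives equality only on $[0,T]\times\mathbb{R}^{n_0^\sigma-1}$, not at every $z\in\mathbb{R}^{n_0^\sigma}$.
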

    \begin{proof}
        See Appendix \ref{appendix:proofs} for the details.
    \end{proof}

	The result in Theorem \ref{th:representable} shows that the symbolic networks $(\mathfrak{S}_\sigma^{\theta^m})_m$ associated with the minimization problems \eqref{minprob} recover physical laws that are consistent with the PDE \eqref{pdequation} for the state $u^\dagger$, in the sense of subsequential convergence. More precisely, for every convergent subsequence of $(\theta^m)_m$ (and there exists one) the corresponding sequence of symbolic networks converges to a physical law consistent with the PDE. While we cannot guarantee the entire sequence of symbolic networks will converge when multiple physical laws are valid (since two subsequences could recover two different admissible physical laws), our results in fact establish a stronger claim than only subsequential convergence. The reconstructed physical laws are limited to those whose parameters, in their symbolic network representation, minimize the regularization functional. When a sparsity-promoting regularizer is used, the system preferentially recovers \textit{simple} physical laws characterized by concise symbolic representations. If the ground truth law is unique, e.g., under an identifiability condition such as \eqref{identifiability}, then the entire sequence $(\mathfrak{S}_\sigma^{\theta^m})_m$ converges to the unique physical law. Notably, the convergence statements hold uniformly on compact sets rather than merely pointwise.
	\subsection{Extension of results}
	\label{subsec:extension}
	The results of Theorem \ref{th:representable} and Corollary \ref{cor:representable} are derived under the rigid architectural assumption that $f^\dagger\in\left\{\mathfrak{S}_\sigma^\theta~|~ \theta\in\Theta\right\}$. This is the same regime considered in \cite{lampert17}, where the EQL architecture for neural-network-based symbolic regression is introduced. In that setting, non-representability of the underlying physical law, illustrated for the cart-pendulum system in \cite[System (13)]{lampert17}, leads to poor extrapolation performance unless the architecture is enlarged, as demonstrated in \cite[Subsection 4.4]{sahoo18}.
	
	The main results presented here in Subsection \ref{subsec:identification_results} extend directly beyond this rigid architectural assumption to more general settings. One such generalization considers the case where, instead of a fixed architecture $\mathfrak{S}_\sigma^\theta $ for $\theta\in\Theta$, we consider growing architectures $\mathfrak{S}_{\sigma^m}^{\theta,m} $ for $\theta\in\Theta^m$ such that $f^\dagger$ is only representable by some $\mathfrak{S}_{\sigma^m}^{\theta,m} $ for sufficiently large $m\in \mathbb{N}$.
    
	Another scenario is the case where admissible physical laws $f^\dagger$ are not representable by any finite-dimensional architecture, regardless of its size. We provide a brief roadmap for extending our results to this setting. This occurs when $f^\dagger$ is not expressible in a symbolically concise form (e.g., the Gaussian error function) or when the architecture is not designed in a suitable way, e.g., a purely rational network. Such networks essentially represent rational functions and inherently fail to express base functions such as the exponential. %
	From a practical perspective, along with theoretical results on the approximation properties of symbolic networks \cite{morina2025}, it is natural to approximate $f^\dagger$ using growing architectures in the limit as regularization-minimizing solution. This approach aligns with \cite{morina24}, which study unique reconstructibility in this generalized setting (see also \cite{AHN23,Kaltenbacher16, KaltNguy22} for related all-at-once formulations). However, two technical challenges associated with these reconstructions require careful consideration (see \cite[Assumption 2-5]{morina24}), namely i) the weak lower semicontinuity of the $\mathcal{C}^1(U)$-seminorm of $\mathfrak{S}_\sigma^\theta$ for bounded $U\subseteq\mathbb{R}^{n_0^\sigma}$ with respect to the parameterization $\theta\in\Theta$ and ii) an approximation capacity condition as in \cite[Assumption 5 iii)]{morina24}. We believe that i) can be similarly obtained following the arguments proving Proposition \ref{prop:net_reg} under additional notational technicalities. For ii), the underlying approximation (rate) follows for symbolic networks from the considerations in \cite{morina2025} on first order approximation results in case the target function is sufficiently regular. Although the growth of the parameters that realize the approximating networks, as considered in the work \cite{morina_holler}, is not addressed there, we believe that this can also be achieved with technical effort. Once a regularization-minimizing solution $\tilde{f}$ has been reconstructed, a practical strategy to refine its representation involves identifying a suitable parameterization $f_{\tilde{\theta}}\approx \tilde{f}$. This can be accomplished using a fixed, expressive symbolic network and applying $L^1$-regularization to $\tilde{\theta}$, promoting sparsity in the symbolic representation. 

	\section{Numerical experiments}
	\label{sec:numerics}
	In this section, we present a numerical setup to illustrate the analytical identification results established in Section \ref{sec:main_result}. Specifically, we aim to evaluate whether the results predicted by Theorem \ref{th:representable} and Corollary \ref{cor:representable} can be observed in practice. To achieve this, we consider three settings: 
	
	1) A \textbf{simplified analytical setting} focusing on the identification of unknown physical laws $f$ and states $u$ using one-dimensional linear PDEs of the form
	\[
		\partial_t u = f(t,u,\partial_x u),
	\]
	where $f$ depends on the state $u$ and its first spatial derivative $\partial_x u$. This simplified framework allows us to test the validity and practical applicability of the theoretical results within a controlled computational environment.
	
	2) The \textbf{Strogatz datasets} (see, e.g., \cite[Table 3]{cava2021}), a standard test suite for symbolic regression in dynamical systems, are used to evaluate our approach. This suite comprises seven physically motivated, first-order nonlinear ODE systems that exhibit chaotic behavior and span a range of complexities ideal for symbolic model learning. 
	
	3) Two different \textbf{two-dimensional PDE systems} are considered to further evaluate our approach.
	
	While this work focuses primarily on the theoretical reconstructibility results in function space (Section \ref{sec:main_result}), our numerical experiments are intended to demonstrate the framework's practical utility. By evaluating the Strogatz datasets and two distinct 2D PDE systems, we move beyond simple one-dimensional linear cases to test the method against more sophisticated regimes. This approach not only supports the relevance of the underlying theory but also helps identify potential boundary cases and limitations. All numerical experiments of this work can be reproduced based on
the publicly available source code\footnote{See \url{https://github.com/Philipp238/unique_sym}.}.

\subsection{Implementation framework}
\label{subsec:implem_frame}
We now provide the analytical and numerical framework for the three settings as describe above. For the setting 1), the \textit{simplified analytical setting}, we provide the functional-analytic background in view of our analytic results in the next subsection. After that, Subsection \ref{subsec:simul_set} provides the numerical setup used for all experiments of the settings 1)-3).
	\subsubsection{Analysis setup}
	\label{subsec:num_func}
	The analytic setup discussed next, is being configured for the one-dimensional linear PDEs considered for setting 1). In view of Assumption \ref{assump:setup}, we adopt a Hilbert space framework with parameters $p=q=\hat{p}=\hat{q}=r=2$, domain $\Omega=[0,4]$, time $T=3$, state space $V= H^2(\Omega)$, image space $W=L^2(\Omega)$, measurement space $Y = L^2(\Omega)$ and the corresponding dynamic spaces as defined in Assumption \ref{assump:setup}. The state space $V$ is assumed to have higher regularity to ensure a compact embedding into $H^1(\Omega)$, which is crucial, since the unknown physical law depends on up to $\kappa =1$ spatial derivatives of state $u$ (compare with Subsection \ref{subsec:framework}). Similarly, the space $\tilde{V}$ requires additional regularity to guarantee condition \eqref{uniform_estimation}, as ensured by \cite[Remark 9]{morina24} for $\tilde{V}=H^2(\Omega)$. We further choose the full measurement operator $K^\dagger$ as the embedding operator $\iota:\mathcal{V}\to\mathcal{Y}$, i.e., $K^\dagger u= \iota u$. This operator is injective and weak-strong continuous, due to the compact embedding $\mathcal{V}\hookdoubleheadrightarrow\mathcal{Y}$, as guaranteed by the Aubin-Lions Lemma \cite[Lemma 7.7]{Roubíček2013}.  Specifically, with $\langle\cdot,\cdot\rangle$ denoting the inner product in $L^2(\Omega)$ and an orthonormal system $(e_i)_{i\in\mathbb{N}}$ of $L^2(\Omega)$ (e.g., for $\Omega=[0,1]$, $\Omega\ni x\mapsto e_i (x)= \cos(i\pi x)\in L^2(\Omega)$), we can express $K^\dagger$ by
	\begin{align*}
		K^\dagger:\mathcal{V}\to \mathcal{Y}, \quad [K^\dagger u](t) = \sum_{i\in\mathbb{N}}\langle e_i,u(t)\rangle e_i
	\end{align*}
	for $u\in\mathcal{V}$ and $t\in[0,T]$. The reduced measurement operators, corresponding to low-frequency sampling operators (or truncated Fourier measurement operators), are given as follows. Let $0=t_1<t_2<\dots<t_{m}<t_{m+1}=T$ be the $m$-equidistant grid on $[0,T]$ for $m\in[0,T]$ and $\Delta_m := T/m$.  Then, for $1\leq j\leq m-1$ and $t\in [t_j, t_{j+1})$ or $j=m$ and $t\in[t_m,t_{m+1}]$, we define $K^m$ for $m\in\mathbb{N}$ by
	\begin{align}
    \label{eq:reduced_operator}
		K^m:\mathcal{V}\to \mathcal{Y}, \quad [K^m u](t) = \sum_{i=1}^m \left(\Delta_m^{-1}\int_{t_j}^{t_{j+1}}\langle e_i, u(s)\rangle\dx s\right) e_i
	\end{align}
	for $u\in \mathcal{V}$. Note that the operator $K^m$ performs time averaging over the respective time intervals. The well-definedness of $K^\dagger$ and $K^m$ for $m\in\mathbb{N}$, along with the convergence condition \eqref{operator_conv_weak} in Assumption \ref{ass:general_setup}, are discussed in detail in Appendix \ref{app:conv_sampling}.

    \subsubsection{Simulation setup}
	\label{subsec:simul_set}

We now detail the concrete choices made for the quantities appearing in Theorem~\ref{th:representable}, and provide the numerical setup used for all experiments of the settings 1)-3). Since letting \( m \to \infty \) is infeasible in practice, we fix a even maximal value \( M \in \mathbb{N} \) and consider measurements for \( m = M/2, \dots, M-1 \). Guided by the preceding analysis, we apply a low-pass filtering strategy by defining \( K^m \) to retain the lowest \( m/M \) fraction of frequency components. Throughout all experiments we set \( q = r = 2 \) and use the parameter choices
\begin{align}
    \lambda^m = \frac{m^3}{M}, \qquad
    \mu^m = \frac{m}{M}, \qquad
    \delta^m = 0.1\,\frac{M}{m^3}.
\end{align}

Measurement noise is simulated by multiplicative noise according to
\begin{equation}
	\label{measurement_mult}
    y^m = K^m(u^\dagger)\cdot \epsilon^{\delta^m},
\end{equation}
with uniform perturbations \( \epsilon \sim \mathrm{Unif}(0.5, 1.5) \). As \( \delta^m \) decreases with \( m \), the factor \( \epsilon^{\delta^m} \) approaches 1 for large \( m \), producing weaker distortion. We employ multiplicative noise to reflect the wide dynamic range of the solution function, ensuring that small values are perturbed less strongly than large ones.

Following Theorem~\ref{th:representable}, we model \( f_\theta \) using ParFam \cite{scholl24} and represent \( u \) as function evaluations on a discretized grid. The ParFam architecture, introduced in Equation~\eqref{symbolic_network}, adopts the practical design of \cite{scholl24}, who observed that a single hidden layer (\( L = 1 \)) is sufficient for expressive modeling while simplifying optimization. We select numerator polynomials of degree 3, denominator polynomials of degree 2, and sine and exponential activation functions. ParFam permits custom loss functions, which we specify using the objective of~\eqref{minprob}. Parameter optimization is performed using basin-hopping~\cite{Wales1997GlobalOB} combined with BFGS, both run for 100 iterations with default settings otherwise.
    
To approximately solve the minimization problem~\eqref{minprob}, we employ an alternating optimization scheme. Specifically, we iteratively optimize with respect to \( \theta \), keeping \( u \) fixed, and then optimize with respect to \( u \) while fixing \( \theta \). The parameter \( \theta \) is updated using the ParFam procedure described above, while \( u \) is trained using ADAM~\cite{kingma2014adam} with a learning rate of \( 0.0004 \), up to $300$ iterations per step (linear 1D~PDE and ODE experiments) or $100$ iterations (2D~PDE experiments), with early stopping (patience $10$). This alternating optimization scheme is repeated for 100 full cycles. Time derivatives are estimated using central finite differences (linear 1D~PDE experiments) or Savitzky-Golay filtering (ODE and 2D~PDE experiments) for robustness to noise. Table~\ref{tab:hyperparams} summarizes the shared optimization hyperparameters, while the ParFam architecture choices for each experiment are detailed in the respective subsections.
\begin{table}[ht]
	\centering\vspace*{-0.4cm}
	\begin{tabular}{lc}
		\toprule
		\textbf{Parameter} & \textbf{Value} \\
		\midrule
		Alternating optimization epochs & 100 \\
		Early stopping patience & 10 \\
		State iterations per epoch (1D / 2D) & 300 / 100 \\
		Learning rate (ADAM, state $u$) & $4 \times 10^{-4}$ \\
		ParFam regularization & $10^{-3}$ \\
		Noise model & multiplicative, Eq.~\eqref{measurement_mult} \\
		$\delta^m$ & $0.1\, M / m^3$ \\
		Maximum $M$ & $100$ \\
		Seeds per experiment & 10-15 \\
		Derivative estimation (1D~PDE / ODE, 2D~PDE) & finite diff.\ / Savitzky-Golay \\
		\bottomrule
	\end{tabular}
	\caption{Shared optimization hyperparameters across all experiments.}
	\label{tab:hyperparams}
\end{table}

    \subsection{Numerical results}
	\label{subsec:numerical}
    In this subsection, we present a series of numerical results. We start with the setting 1) above, serving as a proof-of-concept for the results discussed in Section \ref{sec:main_result}. Then we present the numerical considerations for the settings 2) and 3), based on more sophisticated differential equations to support the practical relevance of the discussed framework.
    
    \subsubsection{Simplified analytical setting}
    The following results cover two different one-dimensional linear PDEs, selected from the numerical section in \cite[Subsection 5.1]{scholl2022symbolic}.
    \paragraph{Uniquely identifiable PDE.}
    First, we focus on an uniquely identifiable PDE, specifically $$u_t = a u + b u_x,$$ which attains the analytical solution $u^\dagger(t,x) = (x + b t)\exp(a t)$. We set $a=1$ and $b=2$, and sample $u$ on an equidistant grid over the domain $[0,3]\times[0,4]$, with 100 grid points along spatial $x$-direction and 80 along temporal $t$-direction. In Figure~\ref{fig:performance} (a), we report the results for $M=100$ and $m=50,\ldots,99$. To assess the statements from Theorem~\ref{th:representable}, we present the $L^2$-distance between $f_{\theta^m}(u^\dagger, u^ \dagger_x)$ and $u_t^\dagger$, as well as between the learned solution $u^m$ and the ground-truth $u^\dagger$. These results clearly demonstrate the convergence predicted by Theorem~\ref{th:representable}. Furthermore, since the PDE is uniquely identifiable, Corollary~\ref{cor:representable} implies that $f_{\theta^m}$ converges to $f^\dagger$ where $f^\dagger(v,w)=v+2w$. This is supported by the learned approximation: 
\[
f_{\theta^m}(u, u_x) = 1.006u - 0.005u_x^2 + 2.116u_x - 0.535,
\]
for $m = 100$, which closely matches the true operator. Note that in Figure \ref{fig:performance} (a) the deviation of $f_{\theta^m}$ from $f^\dagger$ is considered on the domain $U_0$ depicted in Figure \ref{fig:performance} (c) (the range of $(u^\dagger,u_x^\dagger)$ corresponds to $U$). In Figure \ref{fig:performance} (c) this deviation is shown for different choices of the underlying domain. 

    \paragraph{Not uniquely identifiable PDE.}

    As a second example, we consider $u(t, x) = \exp(x - at)$, which solves for any choice of coefficients $a_1, a_2\in \mathbb{R}$, satisfying $a_1 + a_2 = -a$, the equation $$u_t = a_1 u + a_2 u_x.$$
    We set $a = 1$ and sample $u$ on an equidistant grid over the domain $[0, 3] \times [0, 4]$, with 100 grid points along $x$-direction and 80 grid points along $t$-direction. In Figure~\ref{fig:performance} (b), we report the results for $M=100$ and $m=50,\ldots,99$. To assess the statements from Theorem~\ref{th:representable}, we present the $L^2$-distance between $f_{\theta^m}(u^\dagger, u^ \dagger_x)$ and $u_t^\dagger$, as well as between the learned solution $u^m$ and the ground-truth $u^\dagger$. These results clearly demonstrate the convergence predicted by Theorem~\ref{th:representable}.
    
    Since $u^\dagger$ does not solve a unique PDE, we cannot employ Corollary~\ref{cor:representable}. Nevertheless, the result in Theorem~\ref{th:representable} guarantees that the learned parameters $\theta^m$ converge to a $L^1$-minimizing parameterization of the underlying PDE model $f_{\theta^m}$ which is solved by $u^\dagger$. This is supported by the learned approximation for $m=100$:
    \[
    f_{\theta^m}(u,u_x)=-0.982u - 0.016u_x.
    \]
	\pagebreak[2]

\begin{figure}[htbp]
  \flushleft
  \begin{subfigure}[b]{0.495\textwidth}
  	\centering
    \includegraphics[width=\linewidth]{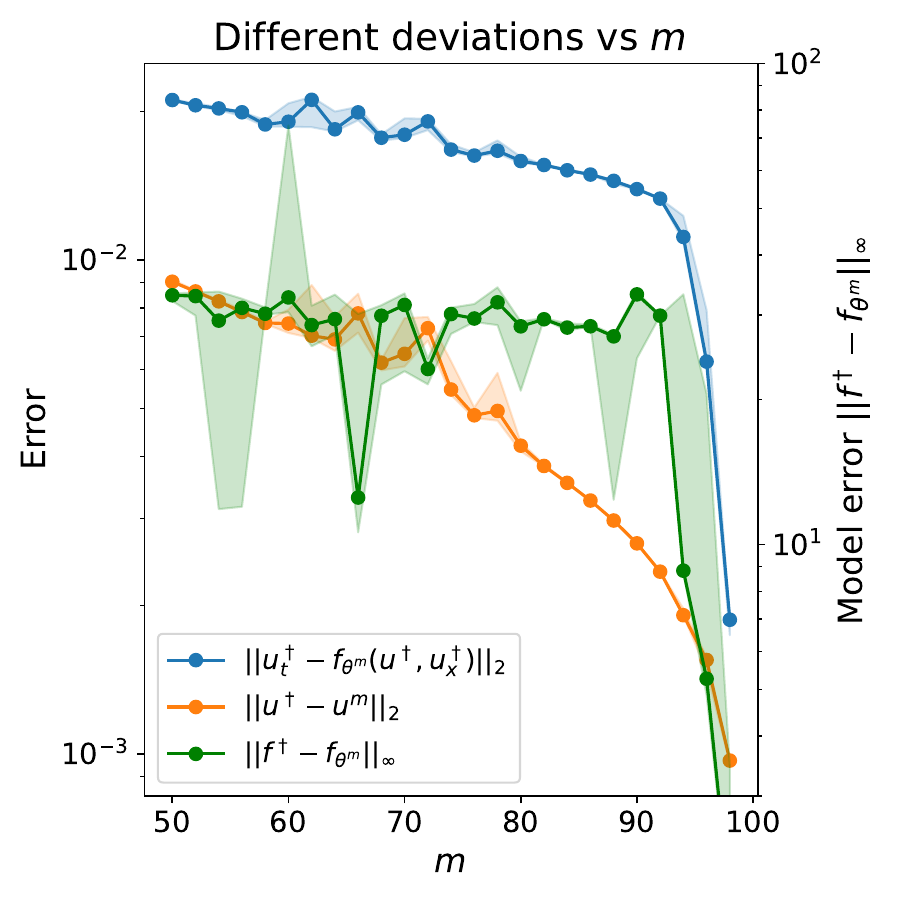}
    \caption{Errors: Uniquely identifiable PDE.}
  \end{subfigure}
  \begin{subfigure}[b]{0.495\textwidth}
    \includegraphics[width=\linewidth]{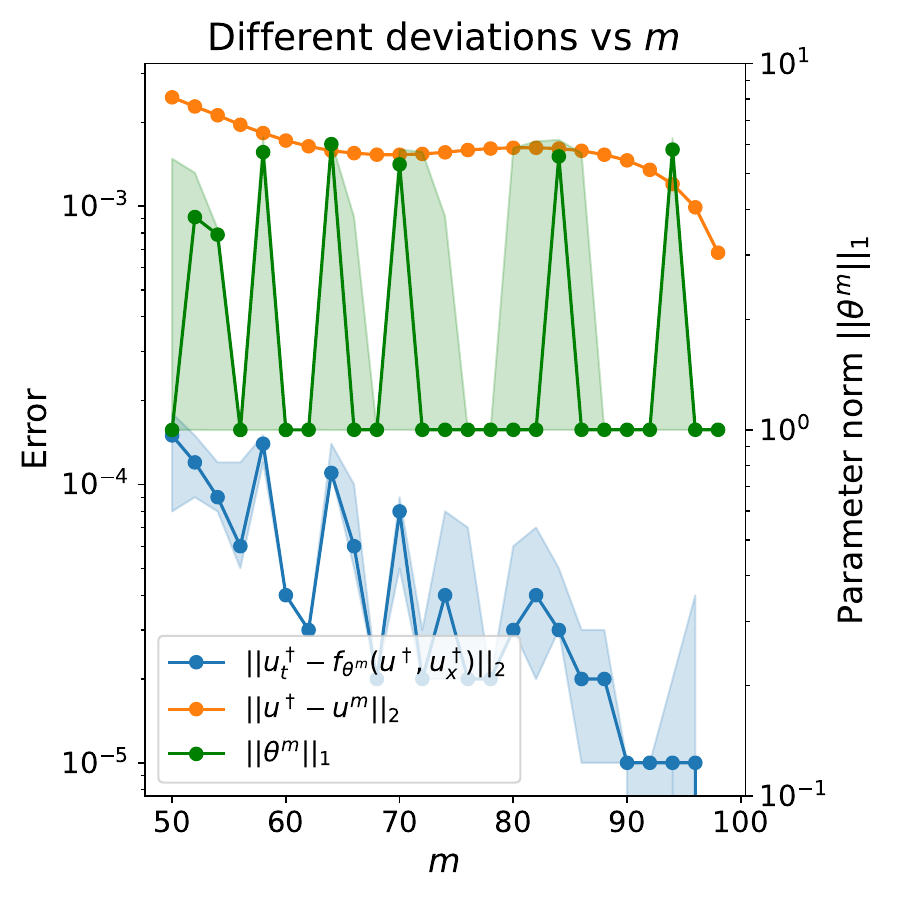}
    \caption{Errors: Not uniquely identifiable PDE.}
  \end{subfigure}

  \medskip

  \begin{subfigure}[b]{0.99\textwidth}
    \hspace{0.23cm}\includegraphics[width=0.421\textwidth,clip,trim=0cm 0cm 15.15cm 0cm]{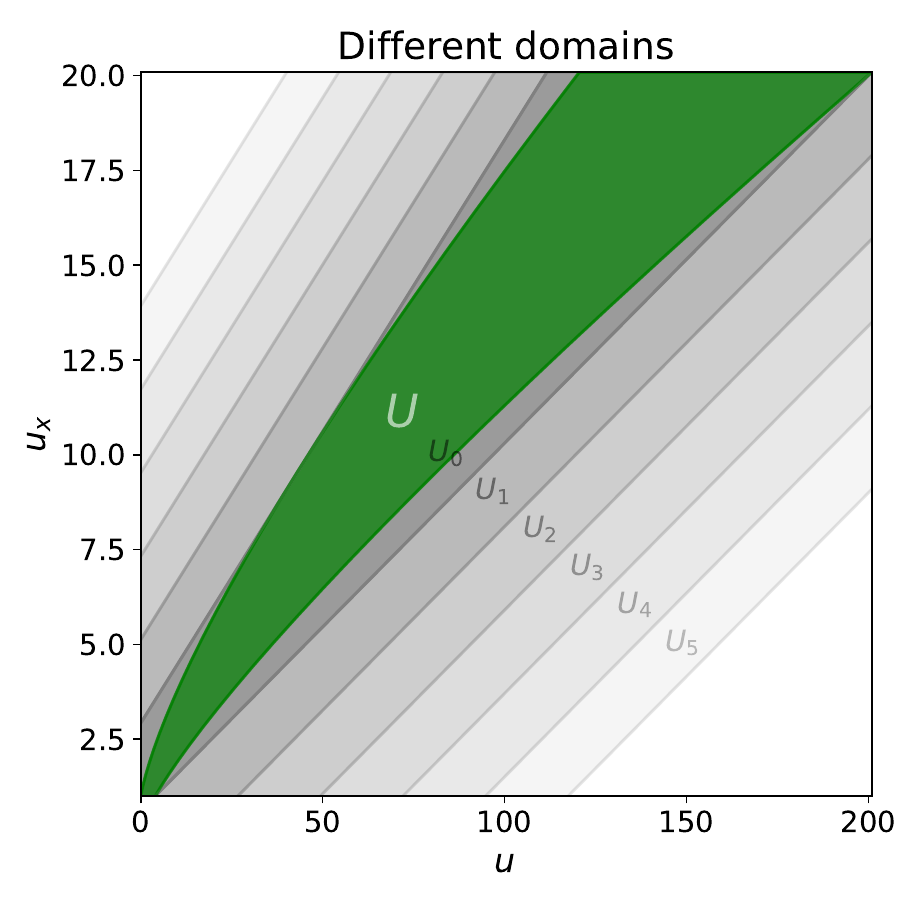}
\hspace{1.2cm}\includegraphics[width=0.421\textwidth,clip,trim=15.15cm 0cm 0cm 0cm]{images_new/linear_unique_domains.pdf}
\captionsetup{margin={1.25cm,0cm}}
    \caption{Model error for different domains for the uniquely identifiable PDE.}
  \end{subfigure}

  \caption{Numerical performance. In (a)-(c) the median performance over $10$ random seeds with shaded interquartile ranges (IQR) is reported.}
  \label{fig:performance}
\end{figure}

\subsubsection{Strogatz datasets}\label{sec:ode_benchmark}

To evaluate the method on a diverse set of nonlinear dynamics beyond one-dimensional linear PDEs selected from \cite[Subsection 5.1]{scholl2022symbolic}, we consider seven ODE systems from the Strogatz datasets \cite[Table 3]{cava2021}. These systems, summarized in Table~\ref{tab:ode_systems}, cover polynomial, rational, and trigonometric nonlinearities with varying degrees of complexity.

\begin{table}[ht]
	\centering
	\begin{tabular}{lll}
		\toprule
		\textbf{System} & $f_x$ & $f_y$ \\
		\midrule
		Bacterial Respiration\ & $-\frac{xy}{0.5x^2+1} - x + 20$ & $-\frac{xy}{0.5x^2+1} + 10$ \\[6pt]
		Bar Magnets & $-\sin(x) + 0.5\sin(x-y)$ & $-\sin(y) - 0.5\sin(x-y)$ \\[4pt]
		Glider & $-0.05\, x^2 - \sin(y)$ & $x - \cos(y)/x$ \\[4pt]
		Lotka-Volterra & $-x^2 - 2xy + 3x$ & $-xy - y^2 + 2y$ \\[4pt]
		Predator-Prey & $x\bigl(-x - \tfrac{y}{x+1} + 4\bigr)$ & $y\bigl(\tfrac{x}{x+1} - 0.075\,y\bigr)$ \\[4pt]
		Shear Flow & $\frac{\cos(x)\cos(y)}{\sin(y)}$ & $(0.1\sin^2(y) + \cos^2(y))\sin(x)$ \\[4pt]
		Van der Pol & $-\tfrac{10}{3}x^3 + \tfrac{10}{3}x + 10y$ & $-x/10$ \\
		\bottomrule
	\end{tabular}
	\caption{Strogatz ODE systems. All systems have the form $\dot{x} = f_x(x,y)$, $\dot{y} = f_y(x,y)$.}
	\label{tab:ode_systems}
\end{table}

For each system, we generate $10$-$80$ trajectories from random initial conditions (depending on the system complexity; see Table~\ref{tab:ode_parfam}) and apply the measurement operator $K^m$ in \eqref{eq:reduced_operator} for $m = 50, 52, \ldots, 100$ ($26$ values). Each $(m, \text{system})$-combination is run for $15$ random seeds.

The ParFam architecture is tailored to each system based on the functional form. Polynomial systems use purely polynomial architectures, while trigonometric systems include $\sin$ and $\cos$ base functions, and rational systems use nonzero denominator degrees. Table~\ref{tab:ode_parfam} summarizes the architecture choices. Note that in all experiments, we apply a lightweight post-processing step, i.e., coefficients with magnitude below $0.015$ are set to zero in the recovered formula. This primarily benefits experiments with rational expressions (ODE systems with denominators), where spurious small terms in denominators can cause large extrapolation errors when evaluating $\|f^\dagger - f_{\theta^m}\|_\infty$. For purely polynomial formulas, the effect is minor. This post-processing affects only the error evaluation, not the optimization itself. The post-processing is also applied for the two-dimensional PDEs discussed later.

\begin{table}[ht]
	\centering
	
	\begin{tabular}{lcccccc}
		\toprule
		\textbf{System} & \textbf{ICs} & \textbf{Num.\ deg.} & \textbf{Den.\ deg.} & \textbf{Base fn.} & \textbf{Reps.} & \textbf{Time (s)} \\
		\midrule
		Bacterial Resp.\ ($x/y$) & 80 & 3 / 2 & 2 / 2 & --- & 50 & 1200 \\
		Bar Magnets & 50 & 2 & 0 & $\sin, \cos$ & 8 & 45 \\
		Glider ($x/y$) & 50 & 2 / 2 & 0 / 1 & $\sin$ / $\cos$ & 8 & 45 \\
		Lotka-Volterra & 10 & 2 & 0 & --- & 3 & 20 \\
		Predator-Prey & 10 & 3 & 1 & --- & 3 & 20 \\
		Shear Flow ($x/y$) & 80 & 2 / 3 & 1 / 0 & $\sin, \cos$ & 8 & 45 \\
		Van der Pol & 10 & 3 & 0 & --- & 3 & 20 \\
		\bottomrule
	\end{tabular}
	\caption{ParFam architecture per ODE system. \textbf{Num.\ deg.} and \textbf{Den.\ deg.} refer to the output polynomial degrees of the numerator and denominator, respectively. \textbf{Base fn.} lists the activation functions used. \textbf{ICs} denotes the number of initial conditions. \textbf{Reps.} is the number of independent basin-hopping restarts and \textbf{Time} is the wall-clock time limit per ParFam fit.}
	\label{tab:ode_parfam}
\end{table}

The results for all seven systems are shown in Figure~\ref{fig:ode_results}. Note that the state corresponds to $u(t)=(x(t),y(t))$ for $t\geq 0$. For each system, the left subplot displays the PDE residual $\|u_t^\dagger - f_{\theta^m}(u^\dagger)\|_2$, the state error $\|u^\dagger - u^m\|_2$, and the model error $\|f^\dagger - f_{\theta^m}\|_\infty$ (evaluated on a uniform grid covering the bounding box of  
the trajectory values with a 10\% margin) as functions of~$m$. The right subplot shows the model error evaluated on $\varepsilon$-neighborhoods around the ground-truth trajectories for various neighborhood widths~$\varepsilon$, providing a domain-dependent assessment of formula quality. Across all systems, we observe a decrease in errors as $m$ increases, which is more or less pronounced depending on the system. The polynomial systems (Lotka-Volterra, Van der Pol) are recovered with high accuracy, while the trigonometric and rational systems (Shear Flow, Bacterial Respiration) exhibit larger variability, consistent with their greater functional complexity. In Table \ref{tab:best_formulas} we report the best recovered formula among the seeds for $m=100$.

\begin{figure}[p]
	\centering
	\begin{subfigure}[b]{0.48\textwidth}
		\includegraphics[width=\textwidth]{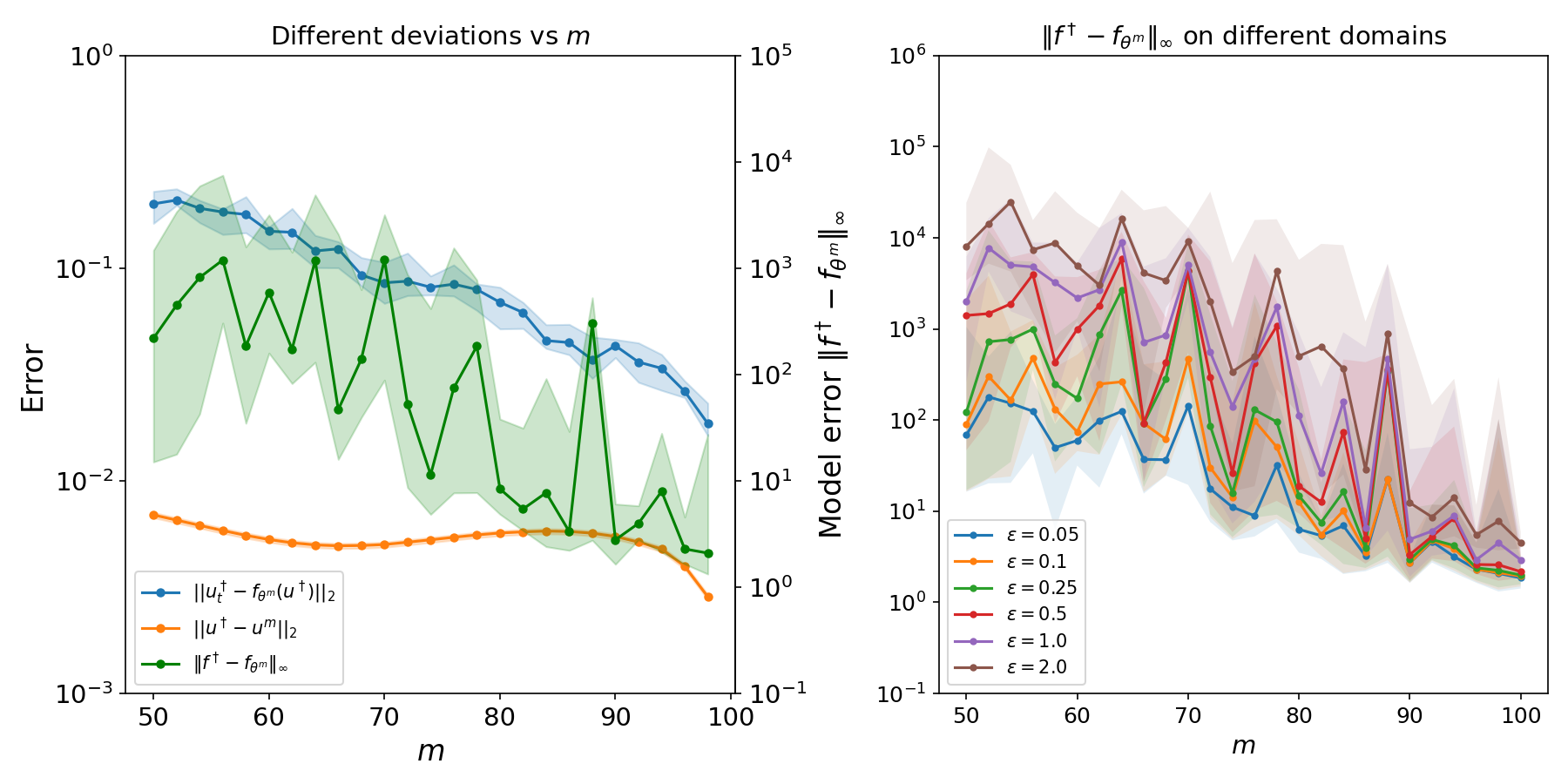}
		\caption{Bacterial Respiration}
	\end{subfigure}
	\hfill
	\begin{subfigure}[b]{0.48\textwidth}
		\includegraphics[width=\textwidth]{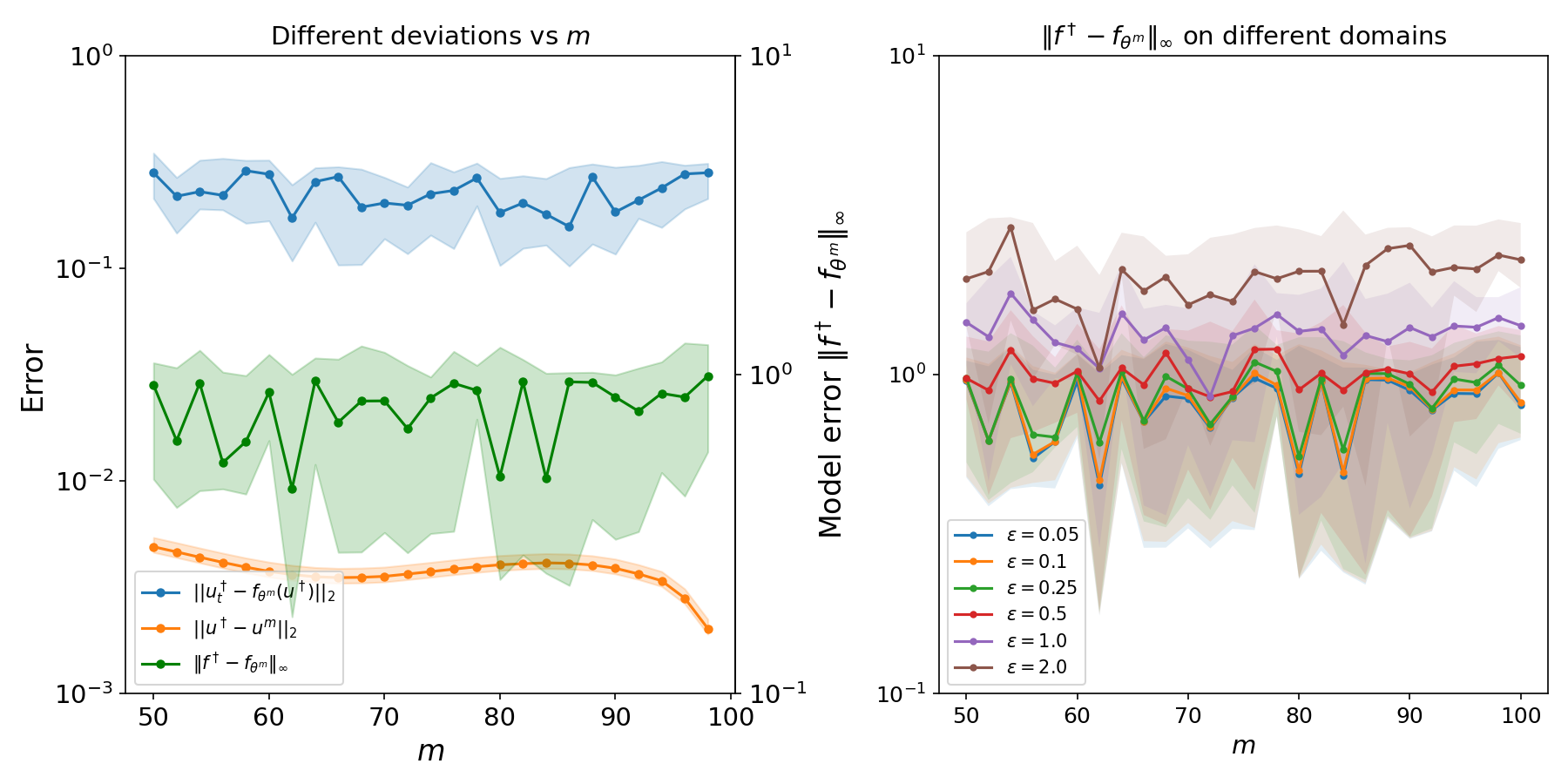}
		\caption{Bar Magnets}
	\end{subfigure}
	
	\vspace{0.3cm}
	\begin{subfigure}[b]{0.48\textwidth}
		\includegraphics[width=\textwidth]{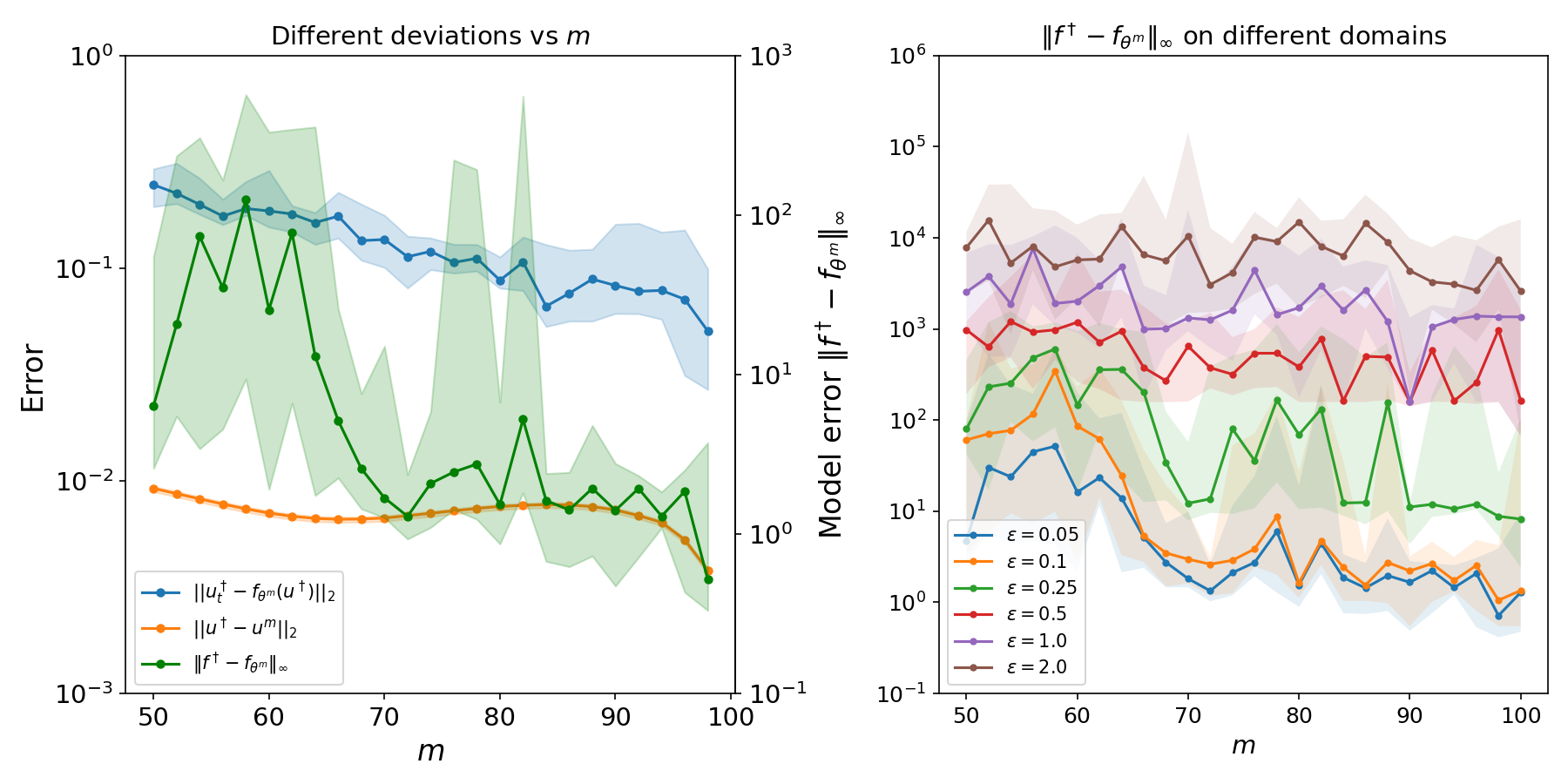}
		\caption{Glider}
	\end{subfigure}
	\hfill
	\begin{subfigure}[b]{0.48\textwidth}
		\includegraphics[width=\textwidth]{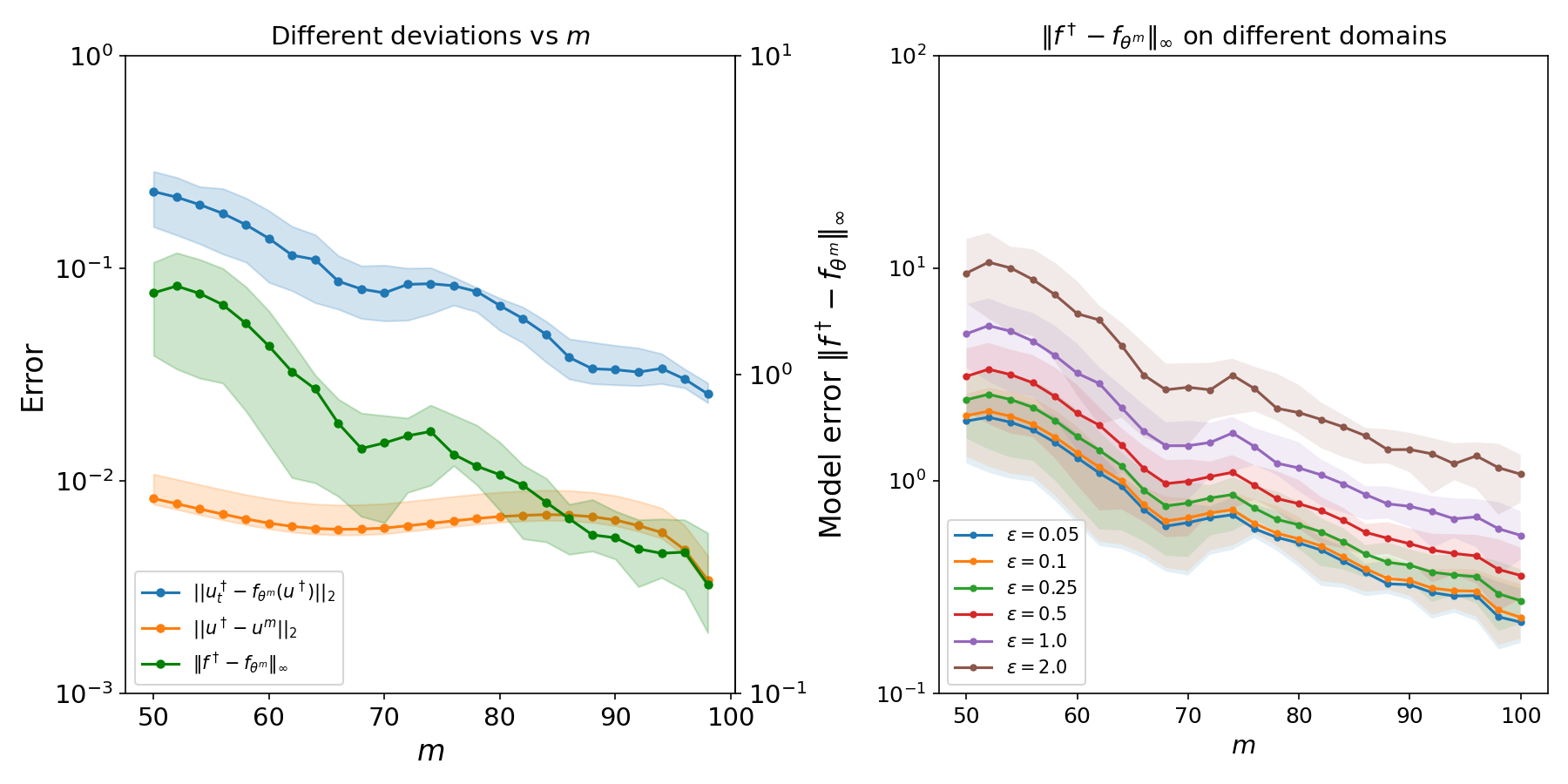}
		\caption{Lotka-Volterra}
	\end{subfigure}
	
	\vspace{0.3cm}
	\begin{subfigure}[b]{0.48\textwidth}
		\includegraphics[width=\textwidth]{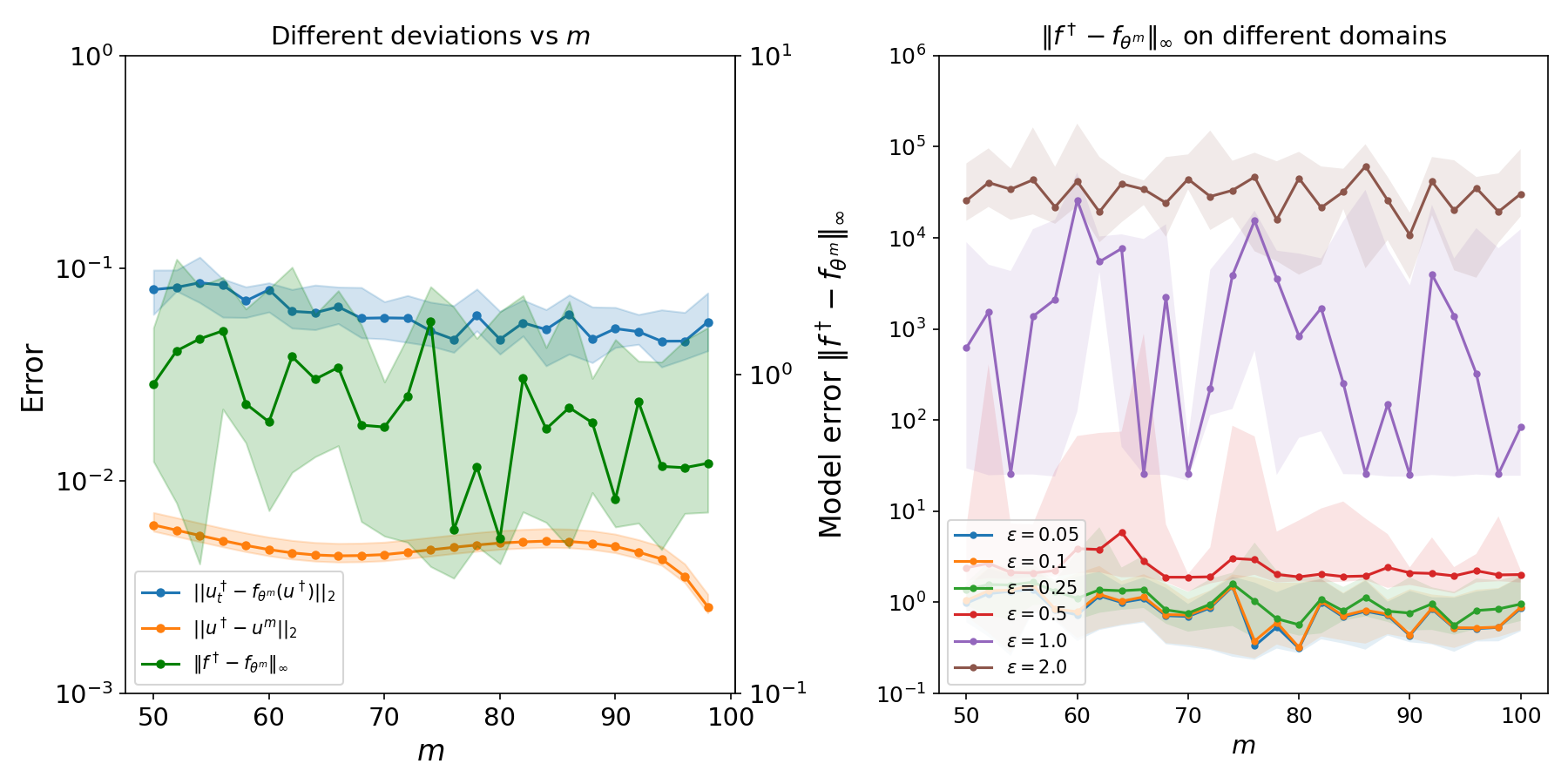}
		\caption{Predator-Prey}
	\end{subfigure}
	\hfill
	\begin{subfigure}[b]{0.48\textwidth}
		\includegraphics[width=\textwidth]{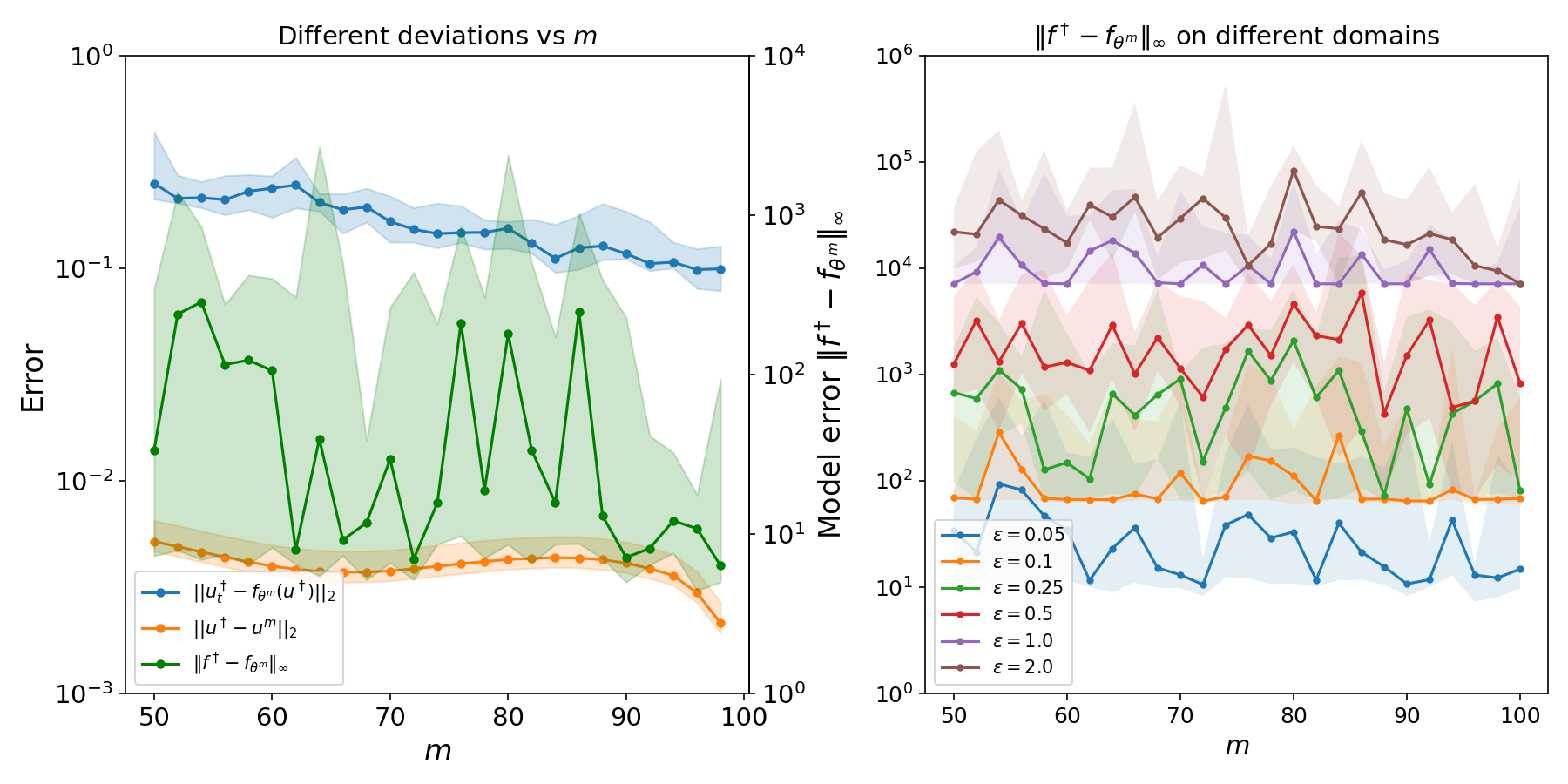}
		\caption{Shear Flow}
	\end{subfigure}
	
	\vspace{0.3cm}
	\begin{subfigure}[b]{0.48\textwidth}
		\includegraphics[width=\textwidth]{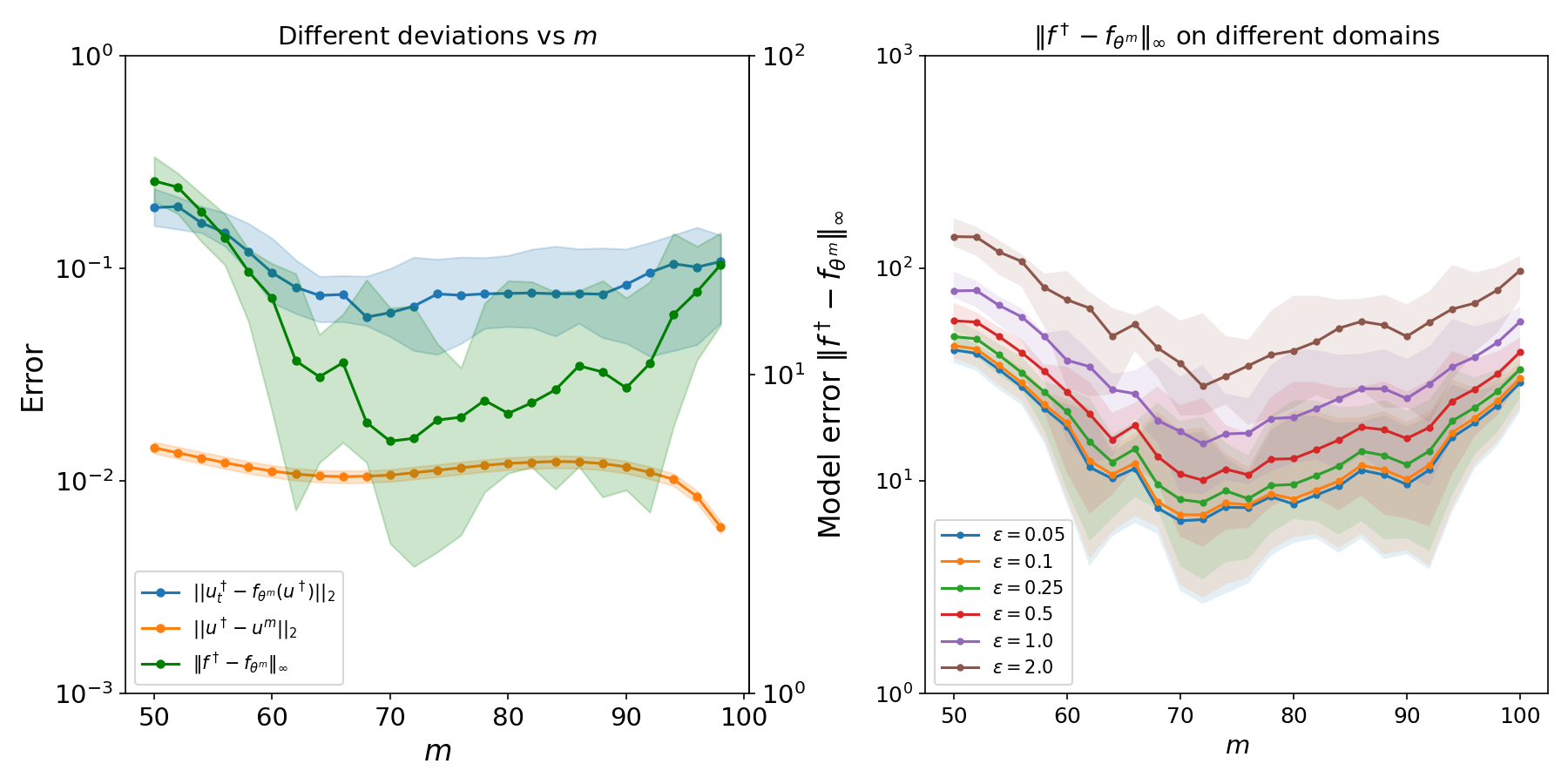}
		\caption{Van der Pol}
	\end{subfigure}
	
	\caption{Numerical results for Strogatz ODE systems. For each system (a)-(g), \emph{Left:} PDE residual (blue), state error (orange), and model error (green) vs.\ $m$ (median $\pm$ IQR over 15 seeds). The model error (left) is evaluated on a uniform grid spanning the range of the training data. \emph{Right:} Model error $\|f^\dagger - f_{\theta^m}\|_\infty$ evaluated on $\varepsilon$-neighborhoods around ground-truth trajectories.}
	\label{fig:ode_results}
\end{figure}

	\begin{table}[ht]
	\vspace*{-0.1cm}
	\centering                                                    
	\setlength{\tabcolsep}{2pt}
	\begin{tabular}{@{}lp{11cm}c@{}}
		\toprule
		\textbf{System} & \textbf{Recovered $\hat{f}$} & $\|f^\dagger{-}\hat{f}\|_\infty$ \\
		\midrule                                                                                                                                                      
		Bact. Resp.\ & $\hat{f}_x = ({-}5.70x^2{-}6.31xy{+}0.47x^2y{+}\ldots)/({-}0.45x^2{-}0.67y^2{-}0.58y{+}\ldots)$ & $1.95$ \\
		\ & $\hat{f}_y = (3.52x^2{+}1.36xy{-}0.40y^2{+}\ldots)/(0.35x^2{+}0.21xy{-}0.16y{+}0.90)$ & $0.13$ \\[3pt]                                                    
		Bar Mag. & $\hat{f}_x = {-}0.994\sin(x) {-} 0.495\sin({-}0.98x{+}1.02y)$ & $0.05$ \\                                                                          
		\ & $\hat{f}_y = 0.548\sin({-}0.89x{+}1.03y) {+} 0.958\sin(y)$ & $0.10$ \\[3pt]                                                                               
		\multicolumn{1}{@{}l}{Glider} & $\hat{f}_x = {-}0.05x^2 - 0.999\sin(y)$ & $0.01$ \\                                                                           
		& $\hat{f}_y = ({-}x^2 + 1.04\cos(y))/x$ & $0.15$ \\[3pt]                                                                                                     
		Lotka-Volt.& $\hat{f}_x = {-}0.97x^2 - 1.97xy + 2.91x$ & $0.14$ \\                                                                                            
		\ & $\hat{f}_y = {-}0.99xy - 0.94y^2 + 1.84y$ & $0.11$ \\[3pt]                                                                                                
		Pred.-Prey & $\hat{f}_x = (0.54x^3{-}1.45x^2{+}0.61xy{-}2.77x{+}\ldots)/({-}0.48x{-}0.88{+}\ldots)$ & $0.40$ \\                                               
		& $\hat{f}_y = ({-}0.70xy{+}0.05xy^2{+}0.05y^2)/({-}0.71x{-}0.70)$ & $0.03$ \\[3pt]                                                                          
		Shear Flow & $\hat{f}_x = (0.27y\sin(\cdot){+}0.03x\cos(\cdot){+}\ldots)/({-}0.98\sin(\cdot){-}0.15x{+}\ldots)$ & $0.89$ \\                                   
		& $\hat{f}_y = 0.31x{-}0.27y\sin(\cdot)\sin(\cdot){+}\ldots$ & $1.24$ \\[3pt]                                                                                
		Van der Pol\ & $\hat{f}_x = {-}3.36x^3 + 3.36x + 10.02y$ & $1.50$ \\                                                                                          
		& $\hat{f}_y = {-}0.1x$ & $0.00$ \\                                                                                                                          
		\bottomrule                                                                                                                                                   
	\end{tabular}
	
	\caption{Recovered formulas at $m{=}100$ (best seed).}
	\label{tab:best_formulas}
	\vspace*{-0.1cm}
\end{table}                                             

\subsubsection{Two-dimensional PDE systems}\label{sec:brusselator}
Finally, we discuss several experiments related to setting 3) above on two-dimensional PDE systems.
\paragraph{Brusselator system.}
We now turn to spatially extended systems on the two-dimensional domain $\Omega = [0, L]^2$ with periodic boundary conditions, where the unknown physical law involves the Laplacian of the state. We consider the Brusselator reaction-diffusion system:
\begin{align*}
	\partial_tu &= D_u \Delta u + a - (b+1)u + u^2 v, \\
	\partial_tv &= D_v \Delta v + bu - u^2 v,
\end{align*}
where the reaction kinetics $r_u(u,v) = a - (b+1)u + u^2 v$ and $r_v(u,v) = bu - u^2 v$ are polynomial, and $D_u$, $D_v>0$ are diffusion coefficients. The forward problem is solved using a Strang splitting scheme: Spectral diffusion (exponential integrator in Fourier space) combined with an $4$th-order Runge-Kutta integrator for $r_u$ and $r_v$, on a $64 \times 64$ spatial grid with $L=50$, $\Delta t = 0.01$, and $80$ saved snapshots from $8$ initial conditions. The $K^m$ operator corresponds to \eqref{eq:reduced_operator} in two dimensions.

For the different experiments below, we will use up to three difficulty regimes corresponding to increasing nonlinearity, summarized in Table~\ref{tab:brus_params}. We start with the setting of \emph{unconstrained recovery} as follows, which is evaluated for each of the three difficulty regimes.

\begin{table}[ht]
	\centering
	\begin{tabular}{lcccc}
		\toprule
		\textbf{Regime} & $a$ & $b$ & $D_u$ & $D_v$ \\
		\midrule
		Easy   & 1 & 3 & 1 & 4 \\
		Medium & 2 & 5 & 1 & 5 \\
		Hard   & 3 & 8 & 1 & 8 \\
		\bottomrule
	\end{tabular}
	\caption{Brusselator parameter regimes. As $b/a$ and $D_v/D_u$ increase, the Turing patterns of the underlying system become more pronounced and the symbolic recovery task becomes harder due to steeper gradients, sharper spatial features and stronger nonlinear reaction-diffusion coupling.}
	\label{tab:brus_params}
\end{table}

\emph{Unconstrained recovery.} We first consider the recovery of the full right-hand side, including the diffusion terms, without imposing structural constraints. ParFam receives the features $(\Delta u, u, v)$ for the $u$-equation and $(\Delta v, u, v)$ for the $v$-equation, and fits a degree-3 polynomial (maximal potence~2, no denominator, $8$ repetitions, $45$s time limit per fit). The diffusion coefficients $D_u$, $D_v$ are thus implicitly recovered as the learned coefficients of $\Delta u$ and $\Delta v$. We sweep $m \in \{50, 60, 70, 80, 85, 88, 91, 94, 96, 98, 99, 100\}$ ($12$ values) with $10$ seeds each.

Figure~\ref{fig:brus_km} shows the results for all three difficulty regimes summarized in Table \ref{tab:brus_params}. For each regime, the $u$- and $v$-equations are shown in separate rows, with the standard error plot on the left and the neighborhood-based model error on the right. The reconstruction is difficult for all regimes, in particular for the $v$-equation of the Brusselator which has a larger diffusion coefficient.

\begin{figure}[p]
	\centering
	\begin{subfigure}[b]{0.48\textwidth}
		\includegraphics[width=\textwidth]{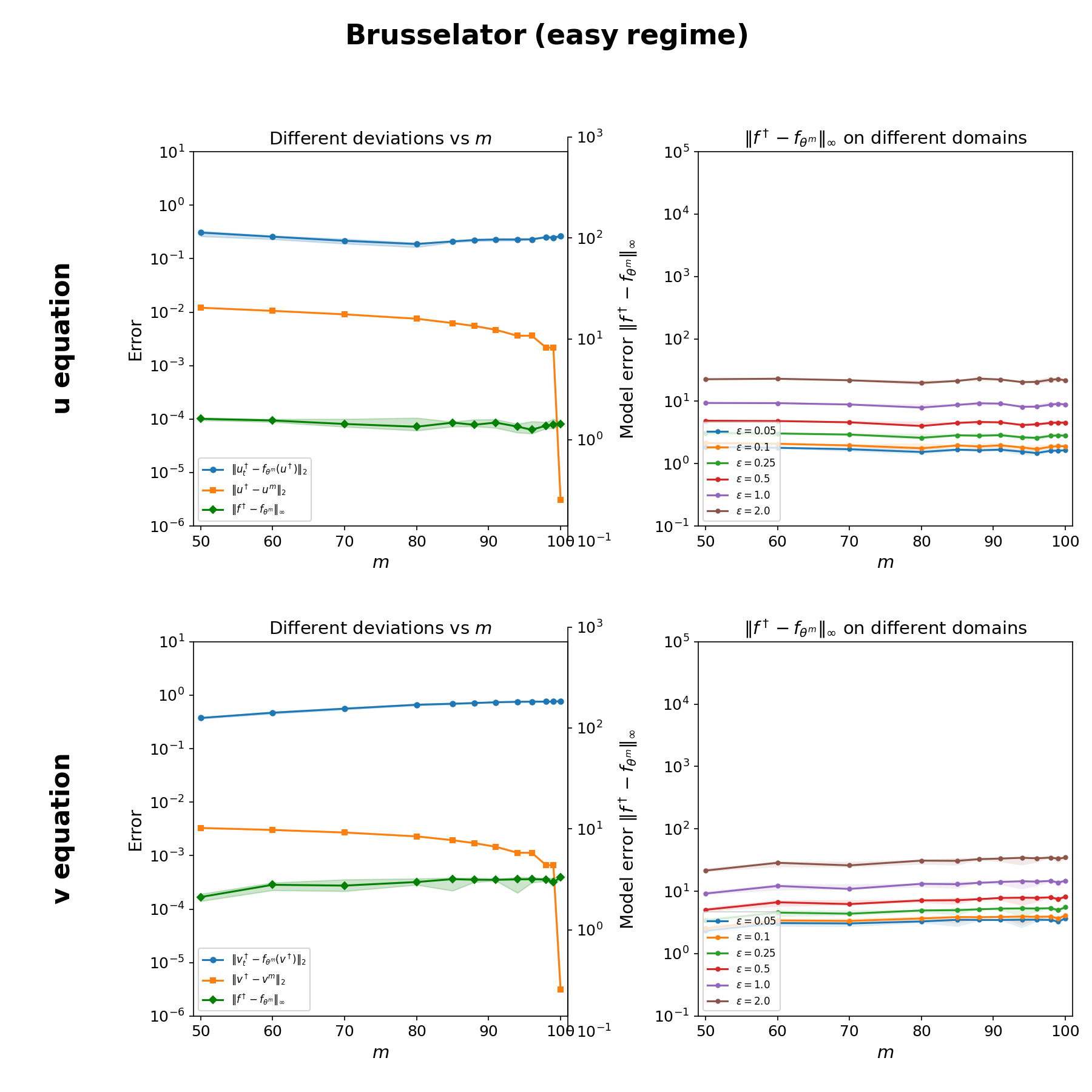}
		\caption{Results for $D_u=1$ and $D_v=4$.}
	\end{subfigure}
	\hfill
	\begin{subfigure}[b]{0.48\textwidth}
		\includegraphics[width=\textwidth]{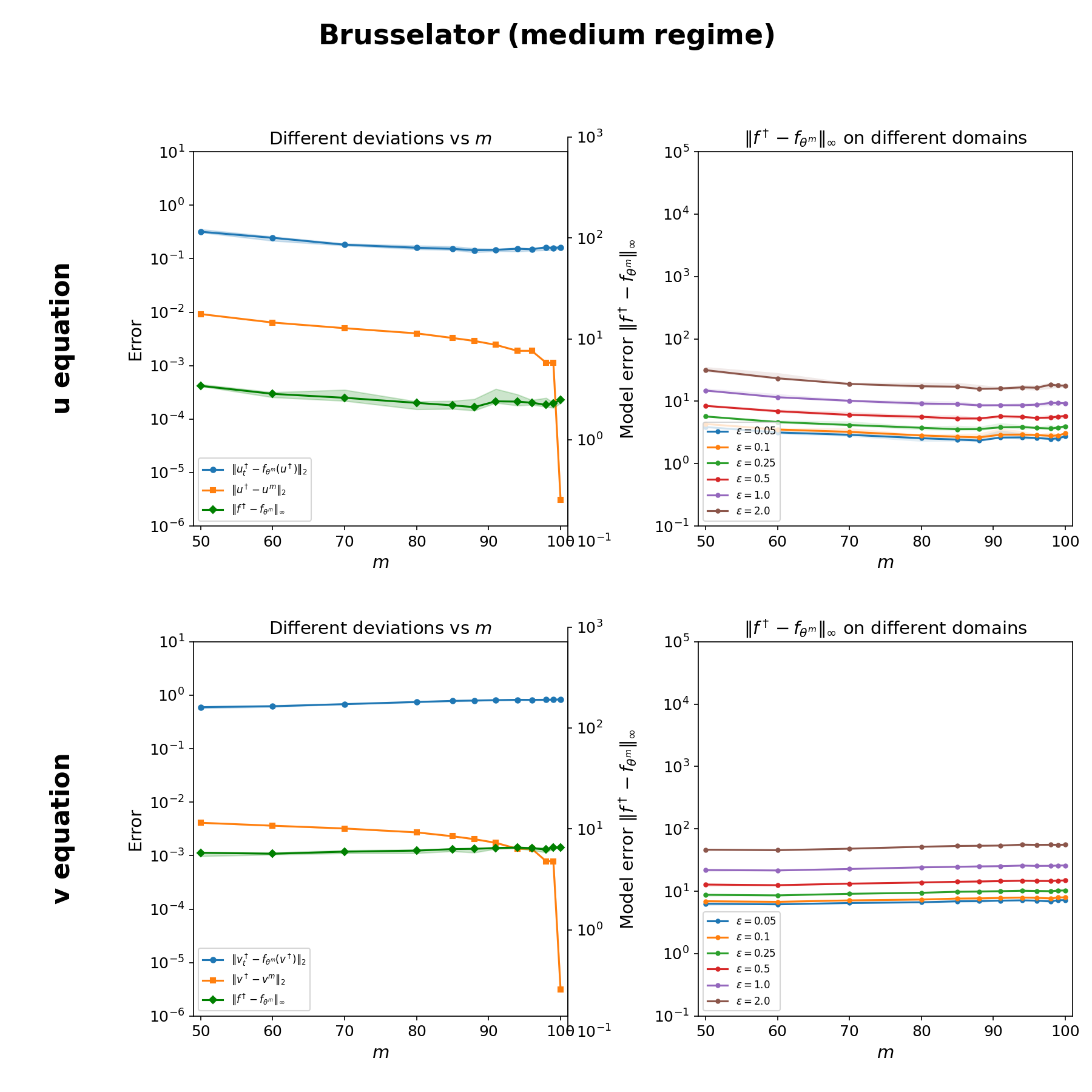}
		\caption{Results for $D_u=1$ and $D_v=5$.}
	\end{subfigure}
	
	\vspace{0.3cm}
	\begin{subfigure}[b]{0.48\textwidth}
		\includegraphics[width=\textwidth]{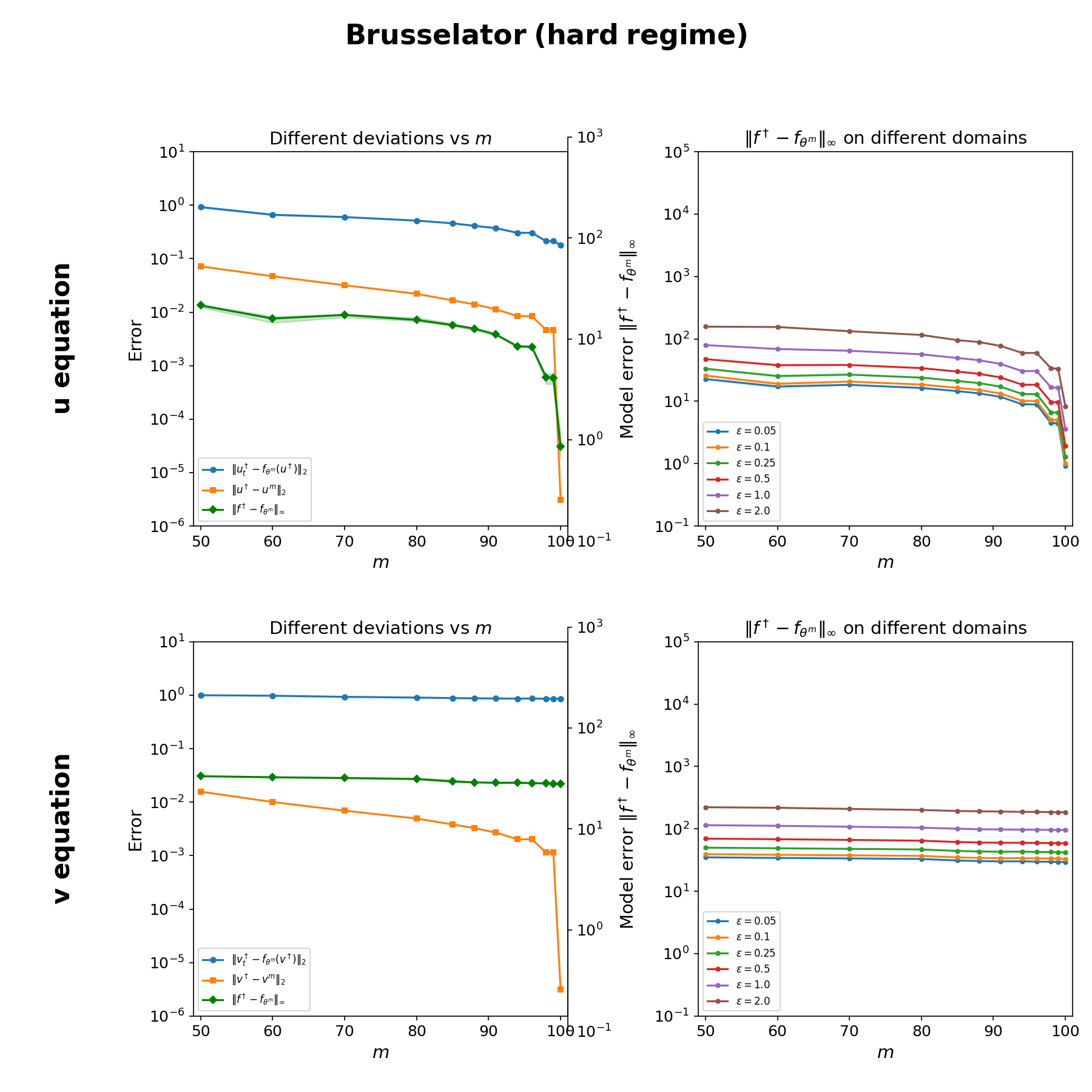}
		\caption{Results for $D_u=1$ and $D_v=8$.}
	\end{subfigure}
	
	\caption{Numerical results for Brusselator. Each subplots shows the $u$-equation (top row) and $v$-equation (bottom row). Left: Error metrics vs.\ $m$. Right: Model error on $\varepsilon$-neighborhoods.}
	\label{fig:brus_km}
\end{figure}

\emph{Effect of architecture choice.} To assess the sensitivity to the ParFam architecture, we compare three configurations on the hard regime: A smaller polynomial architecture (degree~2), the default (degree~3), and an extended architecture including $\sin$ base functions and a rational denominator. The results are shown in Figure~\ref{fig:brus_arch}.

The small architecture is too restrictive: It cannot represent the $u^2 v$ term and yields flat, high model errors. The extended architecture, while more expressive, only succeeds for high~$m$ values ($m \geq 88$) and with fewer seeds, likely due to the enlarged search space. The default architecture (see Figure~\ref{fig:brus_km} (c)) provides the best trade-off between expressiveness and optimization reliability.

\begin{figure}[ht]
	\centering
	\begin{subfigure}[b]{0.48\textwidth}
		\includegraphics[width=\textwidth]{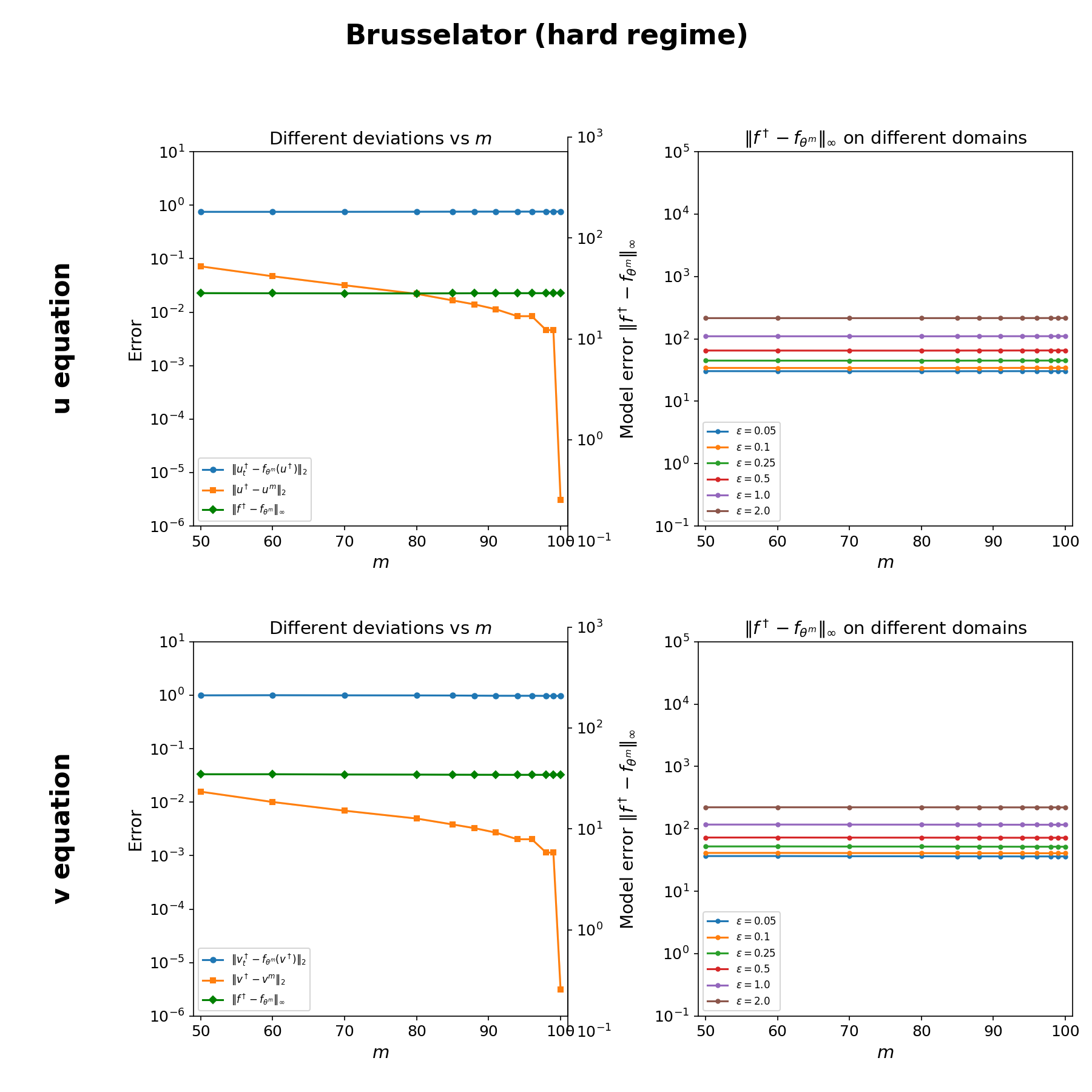}
		\caption{Small architecture}
	\end{subfigure}
	\hfill
	\begin{subfigure}[b]{0.48\textwidth}
		\includegraphics[width=\textwidth]{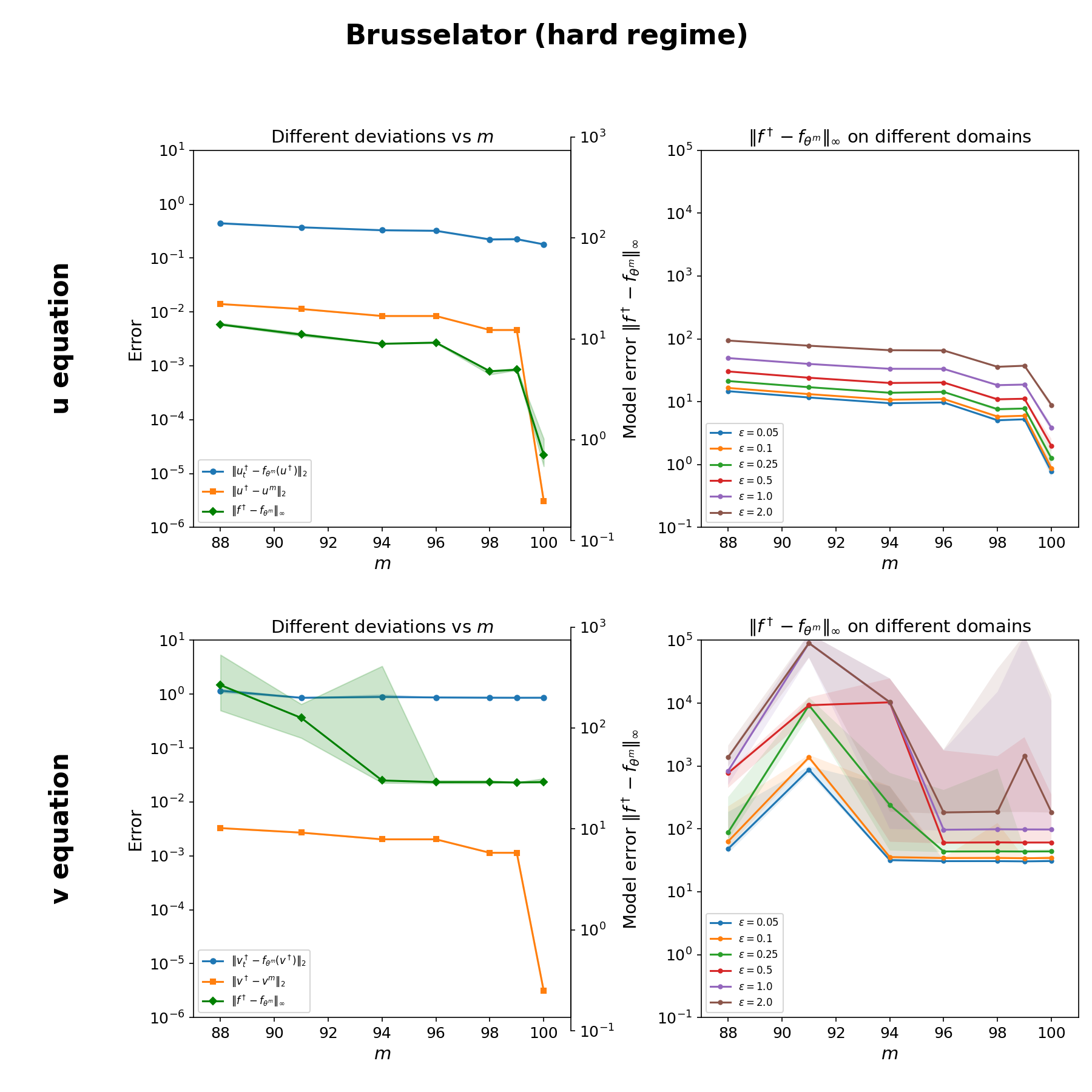}
		\caption{Extended architecture}
	\end{subfigure}
	\caption{Architecture comparison on the Brusselator regime with ($D_u=1, D_v=8$). Compare with the default architecture in Figure~\ref{fig:brus_km} (c).}
	\label{fig:brus_arch}
\end{figure}

\emph{Constrained additive structure.} In the previous experiment, ParFam can freely mix the Laplacian with the state variables, potentially producing cross-terms such as $u \cdot \Delta u$. We now enforce the physically motivated additive structure $\partial_t u = D_u \Delta u + r_u(u,v)$ by constraining the Laplacian feature as additive in ParFam (i.e., no cross-terms between $\Delta u$ and $u, v$). The estimated diffusion coefficient is then read off as the learned coefficient of the Laplacian term. All other settings (grid, $m$ values, seeds, ParFam architecture) are the same as for the unconstrained recovery. This corresponds to the assumption that an approximate physical model is available (in this case knowledge of additive diffusion effects) and only \textit{fine-scale} hidden physics (the unknown structure of the reaction kinetics) together with the unknown diffusion coefficients are learned from data.

Figure~\ref{fig:brus_diffreact} shows the results for the three different regimes (easy, medium, hard) summarized in Table \ref{tab:brus_params}. In addition to the error considered so far, the middle column of subplots displays the convergence of the estimated diffusion coefficients $\hat{D}_u$ and $\hat{D}_v$ toward their true values. The diffusion coefficients are accurately recovered for large~$m$ for the easy and the middle regime, whereas for the hard regime in particular the convergence of the $D_v$ coefficient fails. The results indicate that reaction kinetics approximate the correct polynomial form, more or less pronounced depending on the regime.

\begin{figure}[ht!]
	\centering
	\begin{subfigure}[b]{0.48\textwidth}
		\includegraphics[width=\textwidth]{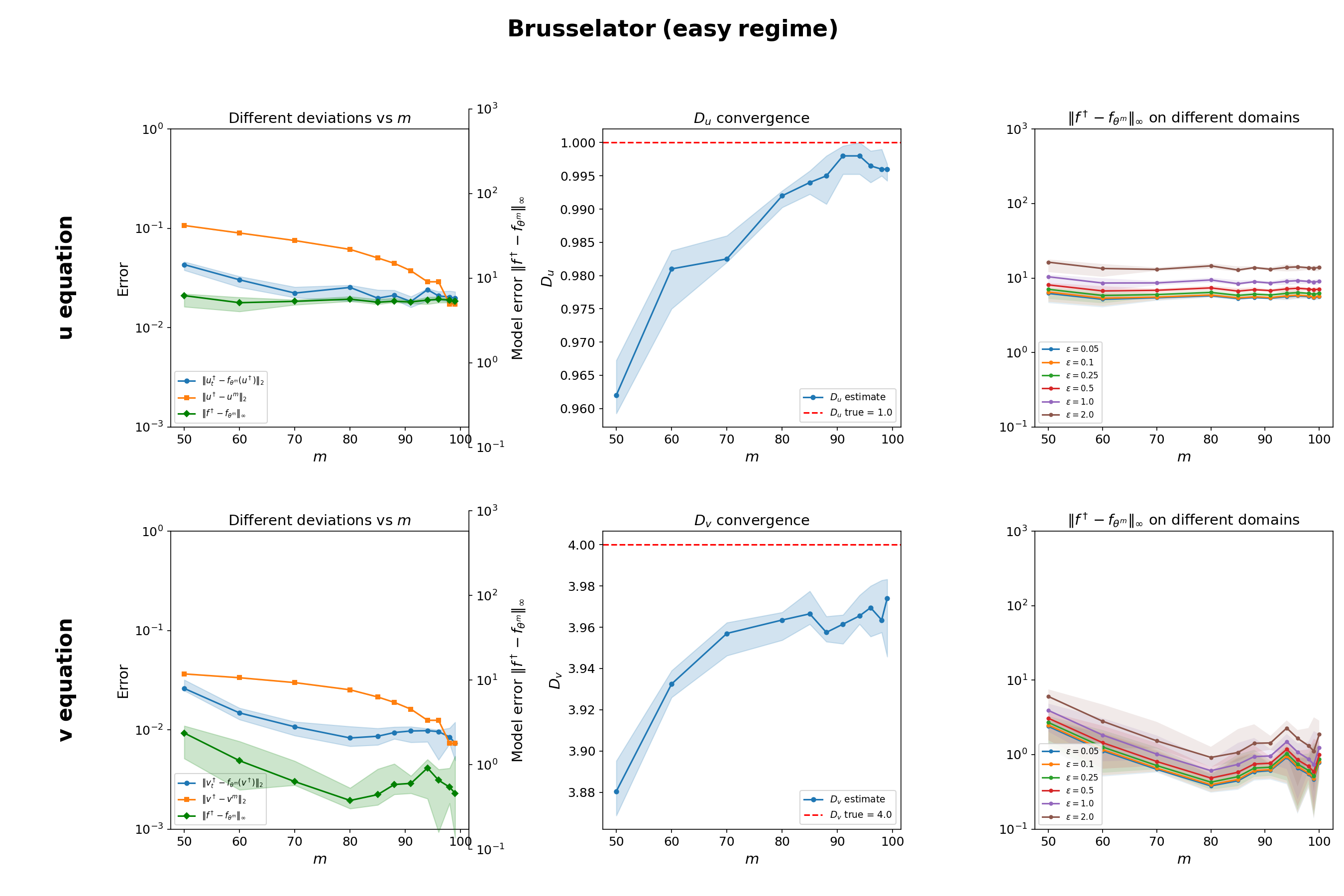}
		\caption{Results for $D_u=1$ and $D_v=4$.}
	\end{subfigure}
	\hfill
	\begin{subfigure}[b]{0.48\textwidth}
		\includegraphics[width=\textwidth]{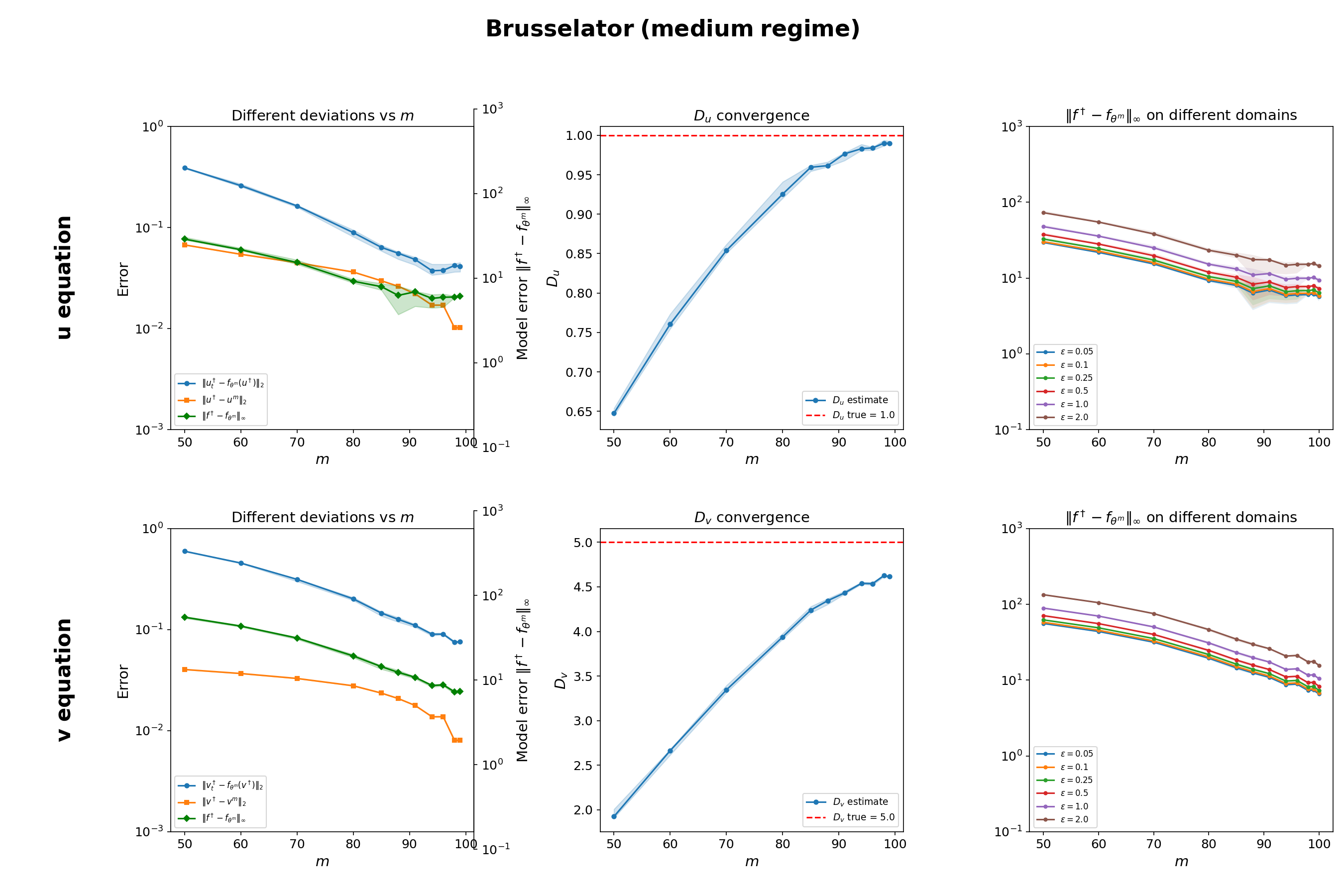}
		\caption{Results for $D_u=1$ and $D_v=5$.}
	\end{subfigure}
	\hfill
	\begin{subfigure}[b]{0.48\textwidth}
		\includegraphics[width=\textwidth]{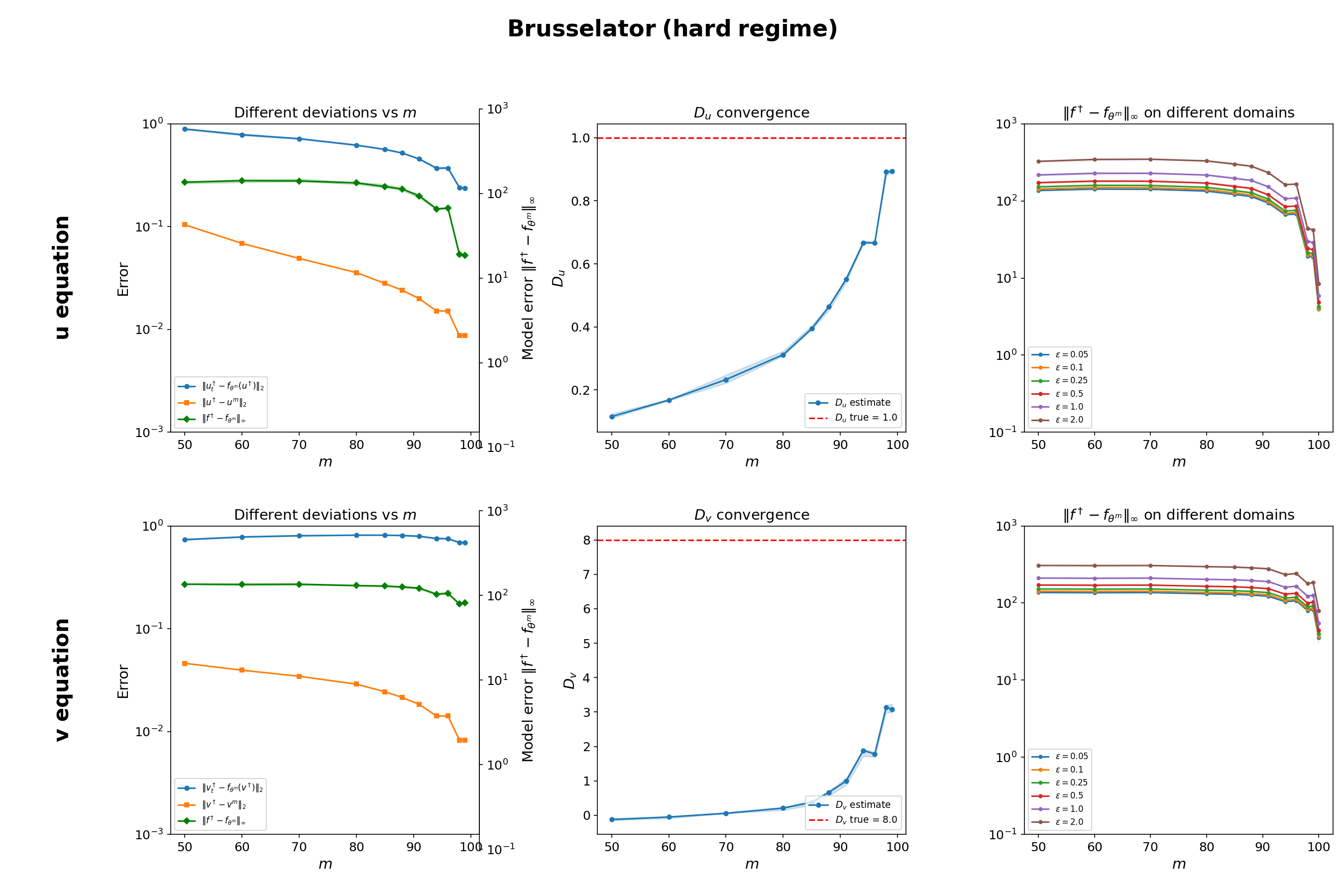}
		\caption{Results for $D_u=1$ and $D_v=8$.}
	\end{subfigure}
	\caption{Numerical results for Brusselator with constrained additive structure. Each subplot: $u$-equation (top row), $v$-equation (bottom row). Left: Error metrics. Center: Diffusion coefficient convergence. Right: Model error on $\varepsilon$-neighborhoods.}
	\label{fig:brus_diffreact}
\end{figure}

\paragraph{Membrane transport.}
To evaluate the framework on a qualitatively different PDE system, we consider a model with rational and exponential nonlinearities:
\begin{align*}
	\partial_t u &= 0.1\,\Delta u + \frac{v}{1+u} - 0.5\,u, \\
	\partial_t v &= 0.2\,\Delta v + e^{-1} - e^{-u}\,v,
\end{align*}
with equilibrium $(u^*, v^*) = (1, 1)$ on the domain $[0, 10]^2$ with periodic boundary conditions. This system features two distinct nonlinearity types, a rational reaction term $v/(1+u)$ in the $u$-equation and an exponential term $e^{-u} v$ in the $v$-equation, requiring larger ParFam.

The forward problem is solved on a $32 \times 32$ grid using the same Strang splitting scheme as for the Brusselator. We use $8$ initial conditions generated from perturbed equilibria with random Gaussian bumps, and sweep $m \in \{50, 60, 70, 80, 85, 88, 91, 94, 96, 98, 100\}$ ($11$ values) with $10$ seeds. The ParFam architectures reflect the structure of each equation, as summarized in Table~\ref{tab:membrane_parfam}.

\begin{table}[h!]
	\centering
	\begin{tabular}{lccccccc}
		\toprule
		\textbf{Equation} & \textbf{Num.\ deg.} & \textbf{Den.\ deg.} & \textbf{Potence} & \textbf{Base fn.} & \textbf{Reps.} & \textbf{Time (s)} & \textbf{Maxiter} \\
		\midrule
		$u$ & 2 & 1 & 2 & --- & 8 & 45 & 200 \\
		$v$ & 2 & 0 & 2 & $\exp$ & 8 & 60 & 200 \\
		\bottomrule
	\end{tabular}
	\caption{ParFam architecture for the membrane transport system.}
	\label{tab:membrane_parfam}
\end{table}

The numerical results are presented in Figure~\ref{fig:membrane}. The $v$-equation is consistently recovered with high accuracy, the exponential structure $e^{-u} v$ is identified in virtually every run, with typical coefficients within 2-3\% of the true values (e.g., $\hat{f}_v = 0.198\,\Delta v - 0.975\,v\,e^{-0.968\,u} + 0.370$ vs.\ the true $0.2\,\Delta v - e^{-u}\,v + e^{-1}$). The $u$-equation recovery is more variable, the rational structure $v/(1+u)$ is correctly identified in the best runs (model error $\approx 4$), but some seeds converge to local minima, resulting in larger median errors. This variability is inherent to the non-convex optimization landscape of rational symbolic regression.

\begin{figure}[H]
	\centering
	\begin{subfigure}[b]{0.48\textwidth}
		\includegraphics[width=\textwidth, trim=0 0 0 2cm, clip]{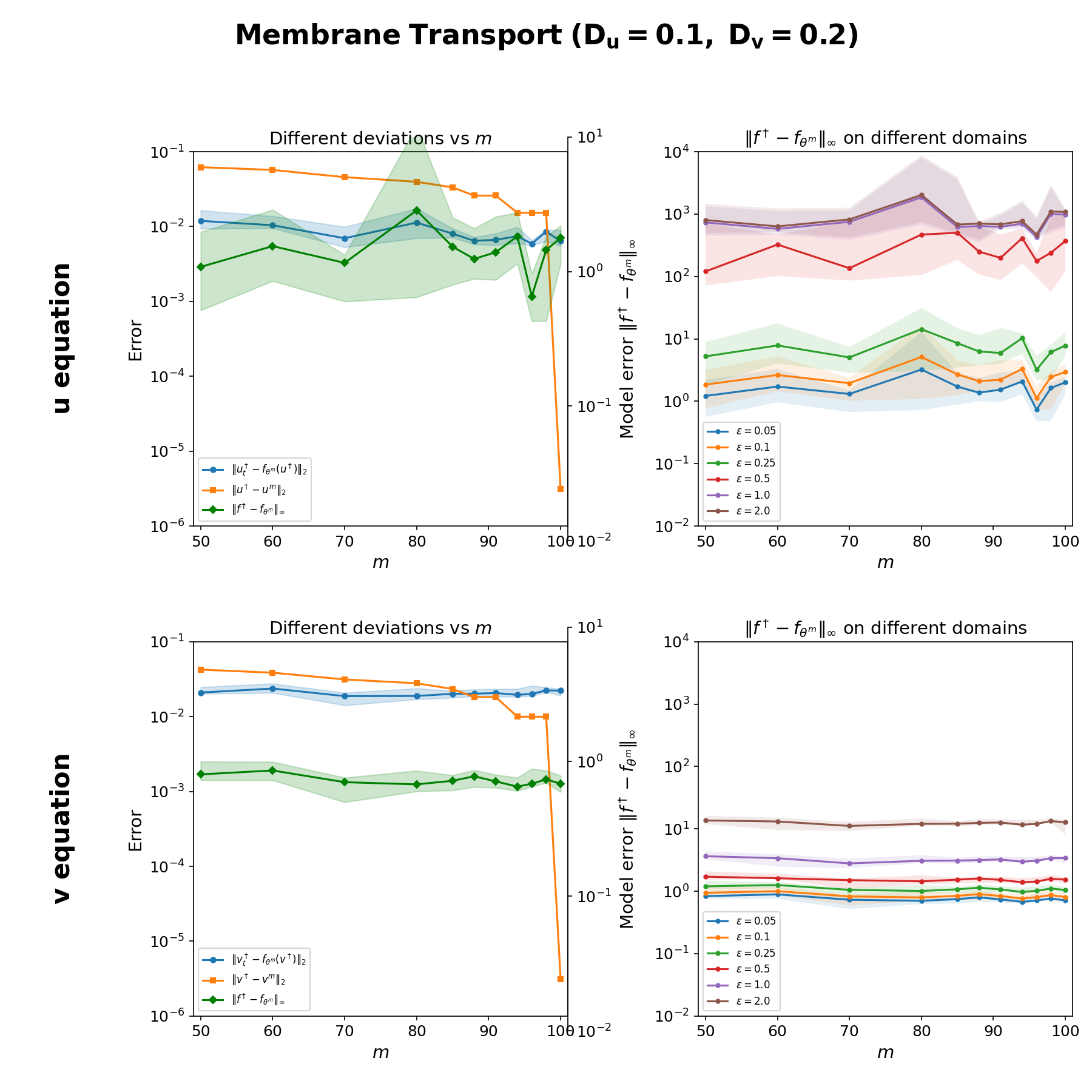}
		\caption{Median $\pm$ IQR over 10 seeds}
	\end{subfigure}
	\hfill
	\begin{subfigure}[b]{0.48\textwidth}
		\includegraphics[width=\textwidth, trim=0 0 0 2cm, clip]{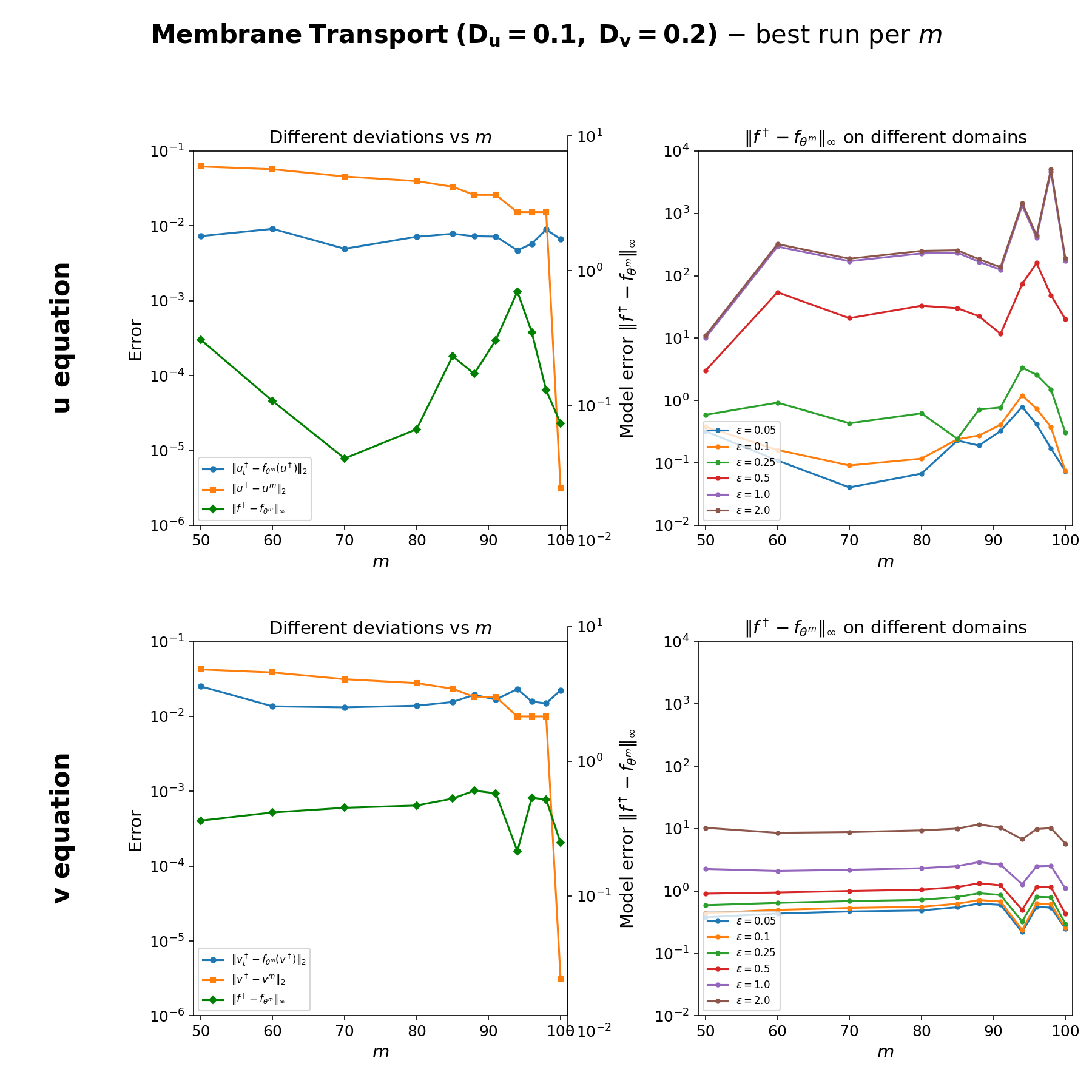}
		\caption{Best run per $m$ value}
	\end{subfigure}
	\caption{Membrane transport results ($D_u = 0.1$, $D_v = 0.2$). (a)~Median performance with IQR bands. (b)~Best seed per $m$.}
	\label{fig:membrane}
\end{figure}

    \section{Conclusions}
	This work focused on investigating the reconstructibility of symbolic (human interpretable) expressions for unknown physical laws corresponding to the right-hand side of a PDE model, as well as reconstructing the unknown state based on noisy and incomplete measurements. We proposed an all-at-once minimization formulation in function space and introduced symbolic networks for approximating the physical law. These networks use rational functions as transformations and base functions as activations. They combine the approximation capabilities of rational functions with the flexibility to represent arithmetic formulas. In addition to reconstructibility results, another key contribution of our work are certain regularity properties of symbolic networks in function space. Our main result addresses the framework where the physical law can be represented by a specific symbolic network architecture. We show in the limit of complete measurements:
    \begin{enumerate}[label=\arabic*), leftmargin=0.5cm]
        \item \textbf{State reconstruction:} The state is reconstructed, and the symbolic networks recover an admissible physical law of the PDE model.
        \item \textbf{Symbolic network representation:} Any limit of such networks remains expressible within the underlying symbolic network architecture, and its parameterization is regularization minimizing. Notably, if the regularization functional is designed to promote sparsity, the proposed approach reconstructs preferentially a simple admissible expression of the physical law.
        \item \textbf{Unique identifiability:} When the underlying physical law is uniquely identifiable, subject to appropriate conditions, the reconstructed symbolic network in the limit aligns exactly with the unique true physical law. 
    \end{enumerate}
    These theoretical results were supported by first numerical experiments, which provide further confidence in the validity and feasibility of the proposed approach for reconstructing interpretable physical laws.

    The theoretical and numerical results of our work provide a foundation for addressing deeper challenges in both symbolic representation and reconstruction of physical laws in PDE models. A promising research direction is to also investigate the case where the physical law cannot be represented within the symbolic network architecture. Even though we briefly outlined an idea for addressing this case, we expect the underlying problem to be significantly more complex. Also interesting are convergence rates and statistical aspects, which are not considered here. Investigating broader designs for symbolic networks could further enrich their applicability. Examples include allowing rational functions with poles or introducing more general base activation functions.
    
    From a practical point of view, it is interesting to extend and deepen the numerical experiments to more general classes of PDEs, which naturally depends on the problem at hand. While the presented experiments demonstrate the feasibility of the approach across ODE and 2D PDE systems with diverse nonlinearities, the current performance remains limited due to the inherent numerical challenges of the problem. We emphasize that the primary contribution is an analysis-guided framework for addressing reconstructibility in symbolic PDE recovery, rather than a fully developed numerical framework. Developing a more comprehensive and robust numerical framework is an important direction for future work.
    \section*{Acknowledgement}
    This research was funded in whole or in part by the Austrian Science Fund (FWF) 10.55776/F100800.
	\appendix
	\section{Regularity of symbolic networks}
	\label{app:sym_net}
    In this section, we provide the proof of the extendability and regularity result stated in Proposition \ref{prop:net_reg} from Subsection \ref{subsec:extension_fs}.
	We first establish the extension property of symbolic networks to function spaces, following \cite[Lemma 4]{AHN23}.
		\begin{lemma}
		Suppose that Assumption \ref{assump:setup} holds true. Then the symbolic network
		$\mathfrak{S}_\sigma^\theta:\mathbb{R}^{n_0^\sigma}\to \mathbb{R}$ induces well-defined Nemytskii operators $\mathfrak{S}_\sigma^\theta:\otimes_{k=0}^\kappa\mathcal{V}_k^\times\to L^q(0,T;L^{\hat{q}}(\Omega))$ and $\mathfrak{S}_\sigma^\theta:\otimes_{k=0}^\kappa\mathcal{V}_k^\times\to\mathcal{W}$, both via $[\mathfrak{S}_\sigma^\theta(u)](t)=\mathfrak{S}_\sigma^\theta(u(t,\cdot))$.
	\end{lemma}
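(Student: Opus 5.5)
The plan is to exploit that $\mathfrak{S}_\sigma^\theta$ is a continuous map $\mathbb{R}^{n_0^\sigma}\to\mathbb{R}$, and then lean on the uniform $L^\infty$ bound \eqref{uniform_estimation} together with the embeddings of Assumption \ref{assump:setup}.

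\textbf{Continuity of the pointwise map.} Each $r_i$ is rational with strictly positive denominator $q_i\ge\epsilon$, so it is continuous and even smooth on all of $\mathbb{R}^{m_{i-1}}$. Each $\sigma_i$ is continuous by Definition \ref{def:ratfunc}. Hence the composition \eqref{symbolic_network}, including the skip connection, is continuous on $\mathbb{R}^{n_0^\sigma}$. In particular it is Carathéodory (it has no explicit $(t,x)$-dependence) and it is bounded on bounded sets: for every $R>0$ there is $C_R$ with $|\mathfrak{S}_\sigma^\theta(\xi)|\le C_R$ whenever $|\xi|\le R$.

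\textbf{Measurability and boundedness.} Let $v\in\otimes_{k=0}^\kappa\mathcal{V}_k^\times$ be regarded as a measurable function $(0,T)\times\Omega\to\mathbb{R}^{n_0^\sigma}$. Bochner-measurable $L^p(0,T;V_k)$ functions with $V_k\hookrightarrow L^{\hat q}(\Omega)$ admit jointly measurable representatives. Composing with the continuous map $\mathfrak{S}_\sigma^\theta$ then gives a measurable function on $(0,T)\times\Omega$.

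The key issue is growth. Since $\sigma_i$ is arbitrary continuous, no polynomial growth bound is available, so we need $v$ to be essentially bounded. For the relevant arguments, namely $v=\mathcal{J}_\kappa u$ with $u\in\mathcal{V}$, this is exactly what \eqref{uniform_estimation} provides: $\|v\|_{L^\infty}\le c_{\mathcal V}\|u\|_{\mathcal V}=:R$. Consequently $|\mathfrak{S}_\sigma^\theta(v)|\le C_R$ almost everywhere. Because $\Omega$ and $(0,T)$ are bounded, this gives $\mathfrak{S}_\sigma^\theta(v)\in L^\infty((0,T)\times\Omega)\subset L^q(0,T;L^{\hat q}(\Omega))$, with Bochner measurability following from separability of $L^{\hat q}$ via Pettis' theorem. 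The embedding $L^{\hat q}(\Omega)\hookrightarrow W$ then transfers this to $\mathcal{W}=L^q(0,T;W)$.

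\textbf{Main obstacle.} The difficulty is the domain of definition for general elements of $\otimes_k\mathcal{V}_k^\times$ that are not of the form $\mathcal{J}_\kappa u$, since these need not be bounded. I would handle this as in \cite[Lemma 4]{AHN23}, reading the operator on the subset where the $L^\infty$ bound holds, i.e. on the range of $\mathcal{J}_\kappa$, which is the only setting used later through $\mathfrak{S}(\theta,v)=\mathfrak{S}_\sigma^\theta(\mathcal{J}_\kappa v)$. Equivalently, one can note that for arbitrary elements the operator is well defined once the arguments are essentially bounded. If a genuinely global statement is required, one would instead impose a growth condition on $\sigma_i$, but I expect the intended proof to work via \eqref{uniform_estimation} as described.
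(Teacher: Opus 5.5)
Your proposal is correct and follows the paper's proof essentially step by step. The shared steps are: continuity of $\mathfrak{S}_\sigma^\theta$ from the positive denominators and continuous base functions, the uniform bound on $\mathcal{J}_\kappa u$ from \eqref{uniform_estimation}, Bochner measurability via Pettis' theorem in the separable space $L^{\hat q}(\Omega)$, and the embedding $L^{\hat q}(\Omega)\hookrightarrow W$ for the $\mathcal{W}$-valued version. The paper gets weak measurability by testing against $w\in L^{\hat q^*}(\Omega)$ rather than via a jointly measurable representative, which is an inessential difference.

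Your caveat about the domain is also apt. The paper's argument likewise only treats arguments $\mathcal{J}_\kappa u$ with $u\in\mathcal{V}$. For genuinely unbounded components of $\otimes_{k}\mathcal{V}_k^\times$ (e.g.\ with $\sigma_1=\exp$) the literal claim can fail. So the range of $\mathcal{J}_\kappa$ is exactly what is proved, and it is what is used later.
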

	\begin{proof}
		For fixed $\theta\in \Theta$ we derive continuity of $\mathfrak{S}_\sigma^\theta$ by continuity of the base functions $\sigma_i$ for $1\leq i\leq L$ (see Definition \ref{def:ratfunc}) and continuity of the parameterized rational functions $r_i$ for $1\leq i\leq L+1$. As a consequence, for $u\in\mathcal{V}$ implying $\mathcal{J}_\kappa u\in L^\infty((0,T)\times\Omega)$ by \eqref{uniform_estimation}, we obtain that
		\begin{align}
			\label{sigma_uniform}
			\sup_{0\leq t\leq T, x\in\Omega}\vert\mathfrak{S}_\sigma^\theta(t,\mathcal{J}_\kappa u(t,x))\vert<\infty
		\end{align}
		and hence, the inclusion $\mathfrak{S}_\sigma^\theta(t,\mathcal{J}_\kappa u(t,\cdot))\in L^{\hat{q}}(\Omega)$ (which is separable) for a.e. $t\in (0,T)$. Now the map $t\mapsto \mathfrak{S}_\sigma^\theta(t,\mathcal{J}_\kappa u(t,\cdot))$ is weakly measurable, which is Lebesgue measurability of $t\mapsto\int_\Omega \mathfrak{S}_\sigma^\theta(t,\mathcal{J}_\kappa u(t,x))w(x)\dx x$ for all $w\in L^{\hat{q}^*}(\Omega)$. This follows from continuity of $\mathfrak{S}_\sigma^\theta$, Lebesgue measurability of $w, \mathcal{J}_\kappa u(t,\cdot)$ and due to preservation of measurability under integration. Using Pettis theorem \cite[Theorem 1.34]{Roubíček2013} we obtain Bochner measurability of $t\mapsto \mathfrak{S}_\sigma^\theta(t,\mathcal{J}_\kappa u(t,\cdot))\in L^{\hat{q}}(\Omega)$ and with \eqref{sigma_uniform} well-definedness of the Nemytskii operator $\mathfrak{S}_\sigma^\theta:\otimes_{k=0}^\kappa\mathcal{V}_k^\times\to L^q(0,T;L^{\hat{q}}(\Omega))$. The remaining assertion follows immediately by $p\geq q$ and $L^{\hat{q}}(\Omega)\hookrightarrow W$.
	\end{proof}
	We now turn to the regularity of symbolic networks in function space.%
    \paragraph{Rational functions.} For that, we first require the following auxiliary continuity property of rational functions with respect to their parameterization.
	\begin{lemma}
		\label{lem:rational-continuity}
		Let $d,n\in\mathbb{N}$ and $U\subset \mathbb{R}^{\binom{n+d}{d}}$ be bounded, then the map
		\[
		\mathbb{R}^{\binom{n+d}{d}}\times Q(d,n)\ni (a,b)\mapsto \frac{\sum_{0\leq k_1+\dots +k_n\leq d}a_{k_1,\dots,k_n}\prod_{l=1}^{n}x_l^{k_l}}{\sum_{0\leq k_1+\dots +k_n\leq d}b_{k_1,\dots,k_n}\prod_{l=1}^{n}x_l^{k_l}}\in \mathcal{C}^1(U)
		\]
		is strong-strong continuous, with the set $Q(d,n)$ given as in Definition \ref{def:pospoly}.
	\end{lemma}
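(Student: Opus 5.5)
The plan is to prove sequential continuity directly, working on the compact set $\bar U$. I read $U$ as a bounded subset of the variable space $\mathbb{R}^n$; the index $\binom{n+d}{d}$ in the statement appears to be a typo for $n$.

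Write $p(a,x)$ and $q(b,x)$ for the numerator and denominator polynomials, and $N=\binom{n+d}{d}$. Fix $(a,b)\in\mathbb{R}^N\times Q(d,n)$ and a sequence $(a_k,b_k)\to(a,b)$ in $\mathbb{R}^N\times Q(d,n)$. I will show
\[
\Vert p(a_k,\cdot)/q(b_k,\cdot)-p(a,\cdot)/q(b,\cdot)\Vert_{\mathcal{C}^1(U)}\to 0 .
\]

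The first step is a set of uniform bounds coming from linearity in the coefficients. Since $p$ and $q$ are linear in their coefficient vectors, and every monomial $\prod_l x_l^{k_l}$ and each of its partial derivatives is bounded on $\bar U$ by some constant $C_U$, we get
\[
\sup_{x\in\bar U}\bigl(\vert p(a_k,x)-p(a,x)\vert+\vert\nabla_x p(a_k,x)-\nabla_x p(a,x)\vert\bigr)\leq N C_U\vert a_k-a\vert .
\]
The same estimate holds for $q$ with $b_k,b$. In the same way, $p,\nabla p,q,\nabla q$ are bounded on $\bar U$ uniformly in $k$, because the coefficient sequences are bounded.

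The second step, which I expect to be the only real obstacle, is a uniform lower bound on the denominators. Membership in $Q(d,n)$ only gives $q(b,x)>0$ pointwise on $\mathbb{R}^n$, with no uniform positivity on all of $\mathbb{R}^n$. I use compactness of $\bar U$ together with continuity of $q(b,\cdot)$ to get $c:=\min_{\bar U}q(b,\cdot)>0$. By the first step, for $k$ large we have $\sup_{\bar U}\vert q(b_k,\cdot)-q(b,\cdot)\vert\leq c/2$, hence $q(b_k,\cdot)\geq c/2$ on $\bar U$.

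The third step handles the zeroth- and first-order parts. For the functions themselves, write
\[
\frac{p_k}{q_k}-\frac{p}{q}=\frac{(p_k-p)q-p(q_k-q)}{q_kq},
\]
so the sup norm is at most $\frac{2}{c^2}C\bigl(\vert a_k-a\vert+\vert b_k-b\vert\bigr)$. For the gradients, use $\nabla(p/q)=(q\nabla p-p\nabla q)/q^2$. Expand the difference by adding and subtracting mixed terms; each term contains a factor $p_k-p$, $q_k-q$, $\nabla p_k-\nabla p$ or $\nabla q_k-\nabla q$, multiplied by factors that are uniformly bounded. The denominators $q_k^2q^2$ are bounded below by $c^4/4$. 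This again gives a bound of order $\vert a_k-a\vert+\vert b_k-b\vert$. Together these show convergence in $\mathcal{C}^1(U)$, which proves strong-strong continuity.
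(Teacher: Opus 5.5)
Your proposal is correct and follows essentially the paper's argument: uniform convergence of $p,\nabla p,q,\nabla q$ on $U$ from linearity in the coefficients, a uniform lower bound on the denominators for large $k$, and the quotient-rule splitting for the zeroth- and first-order terms. Your one deviation is an improvement. You get $\min_{\bar U}q(b,\cdot)>0$ from compactness of $\bar U$, whereas the paper asserts a positive lower bound on all of $\mathbb{R}^n$, which membership in $Q(d,n)$ does not give for $n\geq 2$ (e.g.\ $(x_1x_2-1)^2+x_1^2$); only the bound on $U$ is actually used there.
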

	\begin{proof}
		Let $(a^m, b^m)_{m\in\mathbb{N}}\subset \mathbb{R}^{\binom{n+d}{d}}\times Q(d,n)$ and $(a, b)\in \mathbb{R}^{\binom{n+d}{d}}\times Q(d,n)$ such that $(a^m,b^m)\to (a,b)$ as $m\to \infty$. Denote
		\[
		P^m(x)=\sum_{0\leq k_1+\dots +k_n\leq d}a^m_{k_1,\dots,k_n}\prod_{l=1}^{n}x_l^{k_l}\quad
		\text{and}\quad Q^m(x)=\sum_{0\leq k_1+\dots +k_n\leq d}b^m_{k_1,\dots,k_n}\prod_{l=1}^{n}x_l^{k_l}.
		\]
		Similarly, denote by $P, Q$ the polynomials resulting from the coefficients $a$ and $b$, respectively. We aim to show
		\begin{align}
			\label{claim_convergence_rat}
			\bigg\Vert \frac{P}{Q}-\frac{P^m}{Q^m}\bigg\Vert_{\mathcal{C}^1(U)}\to 0 ~ \text{as} ~ m\to \infty.
		\end{align}
		For that, first note that
		\begin{align}
			\label{rational_state_estimation}
			\bigg\Vert \frac{P}{Q}-\frac{P^m}{Q^m}\bigg\Vert_{L^\infty(U)}\leq \bigg\Vert \frac{1}{Q}\bigg\Vert_{L^\infty(U)}\bigg\Vert \frac{1}{Q^m}\bigg\Vert_{L^\infty(U)}\Vert PQ^m-P^mQ\Vert_{L^\infty(U)}.
		\end{align}
		By definition of $Q(d,n)$ (see Definition \ref{def:pospoly}) there exists $\epsilon>0$ with $Q(x)>\epsilon$ for all $x\in\mathbb{R}^n$. Thus, it holds true that $\Vert Q^{-1}\Vert_{L^\infty(U)}\leq 1/\epsilon$. Denote by $M = \max(\vert x_i\vert:~x\in U, 1\leq i\leq n)<\infty$. Since $b^m \to b$ as $m\to \infty$, we derive for $x\in U$
		\[
		\vert Q^m(x)-Q(x)\vert \leq \sum_{0\leq k_1+\dots +k_n\leq d}\vert b^m_{k_1,\dots,k_n}-b_{k_1,\dots,k_n}\vert M^{k_1+\dots+k_n}\leq C\vert b^m-b\vert
		\]
		for some $C>0$ independent of $m\in \mathbb{N}$. Since this estimation is independent of $x\in U$ we infer that $\Vert Q^m-Q\Vert_{L^\infty(U)}\to 0$ as $m\to \infty$. Similarly one can show that $P^m\to P$ in $L^\infty(U)$ as $m\to \infty$. As a consequence, it holds true, using $Q>\epsilon$, for sufficiently large $m\in\mathbb{N}$ with $\Vert Q^m-Q\Vert_{L^\infty(U)}\leq \epsilon/2$, that
		\begin{align}
			\label{Q_reciprocal}
			\bigg\Vert\frac{1}{Q^m}\bigg\Vert_{L^\infty(U)}\leq  \bigg\Vert\frac{1}{Q-\Vert Q^m-Q\Vert_{L^\infty(U)}}\bigg\Vert_{L^\infty(U)}<\frac{2}{\epsilon}.
		\end{align}
		Using \eqref{Q_reciprocal} and that
		\[
		\Vert PQ^m-P^mQ\Vert_{L^\infty(U)}\leq \Vert P\Vert_{L^\infty(U)}\Vert Q^m-Q\Vert_{L^\infty(U)}+\Vert Q\Vert_{L^\infty(U)}\Vert P^m-P\Vert_{L^\infty(U)}\underset{m\to\infty}{\to} 0
		\]
		by the previously discussed convergence of the polynomials $P^m$ to $P$ and $Q^m$ to $Q$, respectively,
		implies that the the term on the left hand side of \eqref{rational_state_estimation} converges to $0$ as $m\to \infty$. It remains to show that 
		\[
		\bigg\Vert \nabla\left(\frac{P}{Q}-\frac{P^m}{Q^m}\right)\bigg\Vert_{L^\infty(U)}\to 0 ~ \text{as} ~ m\to \infty.
		\]
		Due to $\nabla(P/Q)= (Q\nabla P-P\nabla Q)/Q^2$ and similarly for $\nabla(P^m/Q^m)$ we obtain
		\begin{multline}
			\label{gradient_estimation}
			\bigg\Vert \nabla\left(\frac{P}{Q}-\frac{P^m}{Q^m}\right)\bigg\Vert_{L^\infty(U)}\leq \bigg\Vert \frac{(Q\nabla P-P\nabla Q)-(Q^m\nabla P^m-P^m\nabla Q^m)}{Q^2}\bigg\Vert_{L^\infty(U)}\\
			+\bigg\Vert (Q^m\nabla P^m-P^m\nabla Q^m)\left(\frac{1}{Q^2}-\frac{1}{(Q^m)^2}\right)\bigg\Vert_{L^\infty(U)}.
		\end{multline}
		We show that both terms converge to zero as $m\to \infty$. As $\Vert Q^{-2}\Vert_{L^\infty(U)}\leq \epsilon^{-2}$ for the first term, it suffices to show that $Q^m\nabla P^m\to Q\nabla P$ in $L^\infty(U)$ as $m\to \infty$. The convergence $P^m\nabla Q^m\to P\nabla Q$ in $L^\infty(U)$ as $m\to \infty$ follows by a symmetrical argument. By a similar argument as previously, one can show that $\nabla P^m\to\nabla P$ in $L^\infty(U)$ as $m\to \infty$. Together with $Q^m\to Q$ in $L^\infty(U)$ as $m\to\infty$, convergence of the first term on the right hand side of \eqref{gradient_estimation} to zero follows. For the second term note that $\Vert Q^m\nabla P^m-P^m\nabla Q^m\Vert_{L^\infty(U)} \leq \Vert Q\nabla P-P\nabla Q\Vert_{L^\infty(U)}+1$ for sufficiently large $m\in\mathbb{N}$. By \eqref{Q_reciprocal} for sufficiently large $m$ with $\Vert Q^m-Q\Vert_{L^\infty(U)}\leq 1$ we have
		\[
		\Vert Q^{-2}-(Q^m)^{-2}\Vert_{L^\infty(U)}\leq \frac{4}{\epsilon^4}(2\Vert Q\Vert_{L^\infty(U)}+1)\Vert Q^m-Q\Vert_{L^\infty(U)},
		\]
		which converges to zero as $m\to \infty$, proving convergence of \eqref{gradient_estimation} to zero and finally, the claim in \eqref{claim_convergence_rat}. This concludes the assertions of the lemma.
	\end{proof}
    \paragraph{Symbolic Networks.}
	We address the regularity statements in Proposition \ref{prop:net_reg} following a similar strategy as in the proof of \cite[Lemma 17]{morina24} based on Lemma \ref{lem:rational-continuity}.
	\begin{lemma}[Weak-strong continuity of $\mathfrak{S}$]
		\label{lem:network-regularity}
		Suppose that Assumption \ref{assump:setup} holds true and that the base functions $(\sigma_i)_{1\leq i\leq L}$ are locally Lipschitz continuous, then
		\begin{align*}
			\mathfrak{S}:~&\Theta\times\mathcal{V} \to\mathcal{W}\\
			&(\theta,v)\mapsto \mathfrak{S}(\theta,v)=:\mathfrak{S}^\theta_\sigma(\mathcal{J}_\kappa v)
		\end{align*}
		is weak-strong continuous.
	\end{lemma}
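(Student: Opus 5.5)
Take $\theta^m\rightharpoonup\theta$ in $\Theta$ and $v^m\rightharpoonup v$ in $\mathcal{V}$. Since $\Theta$ is a closed subset of a finite-dimensional space, weak and strong convergence coincide there, so $\theta^m\to\theta$. The aim is to prove $\mathfrak{S}^{\theta^m}_\sigma(\mathcal{J}_\kappa v^m)\to\mathfrak{S}^\theta_\sigma(\mathcal{J}_\kappa v)$ in $\mathcal{W}$. I split the error as
\[
\Vert\mathfrak{S}^{\theta^m}_\sigma(\mathcal{J}_\kappa v^m)-\mathfrak{S}^{\theta}_\sigma(\mathcal{J}_\kappa v^m)\Vert_{\mathcal{W}}+\Vert\mathfrak{S}^{\theta}_\sigma(\mathcal{J}_\kappa v^m)-\mathfrak{S}^{\theta}_\sigma(\mathcal{J}_\kappa v)\Vert_{\mathcal{W}}.
\]
The bound \eqref{uniform_estimation} and the boundedness of weakly convergent sequences give $\Vert\mathcal{J}_\kappa v^m\Vert_{L^\infty}+\Vert\mathcal{J}_\kappa v\Vert_{L^\infty}\le R$. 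Hence every argument stays in the ball $B_R\subset\mathbb{R}^{n_0^\sigma}$. Because $L^\infty((0,T)\times\Omega)\hookrightarrow L^q(0,T;L^{\hat q}(\Omega))\hookrightarrow\mathcal{W}$ (bounded domain, $L^{\hat q}\hookrightarrow W$), it suffices to control both terms in $L^\infty$ or in $L^q(0,T;L^{\hat q})$.

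\textbf{First term.} I show $\theta\mapsto\mathfrak{S}^\theta_\sigma$ is continuous into $L^\infty(U)$ for bounded $U$; this is also the last claim of Proposition \ref{prop:net_reg}. The argument is induction over layers. By Lemma \ref{lem:rational-continuity}, $r_1^{\theta_1^m}\to r_1^{\theta_1}$ uniformly on $U$, since $\bar Q^\epsilon\subset Q$. So the images $r_1^{\theta^m_1}(U)$ lie in a common bounded set $U_1$. The local Lipschitz continuity of $\sigma_1$ on $\overline{U_1}$ gives $\sigma_1\circ r_1^{\theta^m_1}\to\sigma_1\circ r_1^{\theta_1}$ uniformly, with images in a bounded set $U_1'$.

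For the next layer I write
\[
r_2^{\theta^m_2}(z^m)-r_2^{\theta_2}(z)=\bigl[r_2^{\theta^m_2}(z^m)-r_2^{\theta_2}(z^m)\bigr]+\bigl[r_2^{\theta_2}(z^m)-r_2^{\theta_2}(z)\bigr].
\]
The first bracket vanishes by Lemma \ref{lem:rational-continuity} on $U_1'$. The second vanishes because $r_2^{\theta_2}$ is Lipschitz on bounded sets; this uses the $\mathcal{C}^1$ bound from the same lemma, or directly that the denominator satisfies $q\ge\epsilon$. Iterating through the layers up to $r_{L+1}$, with the skip connection handled by pairing with the identity, gives the claim. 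Applying it with $U=B_R$ shows the first term is at most $C\Vert\mathfrak{S}^{\theta^m}_\sigma-\mathfrak{S}^\theta_\sigma\Vert_{L^\infty(B_R)}\to0$.

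\textbf{Second term.} This is the main obstacle, since $v^m$ converges only weakly. I use the compact embedding $V\hookdoubleheadrightarrow W^{\kappa,\hat p}(\Omega)$ together with the time derivative bound in $W^{1,p,p}(0,T;\tilde V)$ and the embedding between $W^{\kappa,\hat p}$ and $\tilde V$ (in either direction). Aubin--Lions then gives $\mathcal{J}_\kappa v^m\to\mathcal{J}_\kappa v$ strongly in $L^p(0,T;L^{\hat p}(\Omega))$. This is the same step as in \cite[Lemma 17]{morina24}, and I would cite or repeat it.

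Fixed $\theta$ makes $\mathfrak{S}^\theta_\sigma$ a composition of locally Lipschitz maps, so it is Lipschitz on $B_R$ with some constant $L_R$. Hence, pointwise,
\[
|\mathfrak{S}^\theta_\sigma(\mathcal{J}_\kappa v^m)-\mathfrak{S}^\theta_\sigma(\mathcal{J}_\kappa v)|\le L_R|\mathcal{J}_\kappa v^m-\mathcal{J}_\kappa v|.
\]
Taking $L^q(0,T;L^{\hat q})$ norms and using $p\ge q$, $\hat p\ge\hat q$ with bounded $\Omega\times(0,T)$ shows the second term tends to zero. If the Aubin--Lions step only yields convergence for a subsequence, a subsequence–subsequence argument upgrades this to the whole sequence, since the limit is unique.
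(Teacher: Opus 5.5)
Your proof is correct, but it uses a different decomposition than the paper.

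\textbf{The paper's route.} The paper inserts $\mathfrak{S}^{\theta^m}_\sigma(z)$ with $z=\mathcal{J}_\kappa u$. Its state term $|\mathfrak{S}^{\theta^m}_\sigma(z^m)-\mathfrak{S}^{\theta^m}_\sigma(z)|$ therefore needs Lipschitz constants for the \emph{perturbed} networks that hold uniformly for large $m$. These come from two ingredients: the $\mathcal{C}^1(U)$-convergence in Lemma~\ref{lem:rational-continuity}, which gives constants $\mathcal{L}^r_i+\epsilon$, and a nested family of balls $U_i, V_i$ containing all layer images. The parameter term $|\mathfrak{S}^{\theta^m}_\sigma(z)-\mathfrak{S}^\theta_\sigma(z)|$ is handled by a telescoping sum over the layers. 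The result is one pointwise bound $\mathcal{L}|z^m-z|+\epsilon L\mathcal{L}$, from which both the lemma and, afterwards, Corollary~\ref{cor:Loo_regularity} are read off.

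\textbf{Your route.} You insert $\mathfrak{S}^\theta_\sigma(\mathcal{J}_\kappa v^m)$ instead. The parameter term is then bounded by $\Vert\mathfrak{S}^{\theta^m}_\sigma-\mathfrak{S}^\theta_\sigma\Vert_{L^\infty(B_R)}$. So you prove the content of Corollary~\ref{cor:Loo_regularity} first, by layerwise induction, and use it as an input rather than a by-product. The state term only needs the Lipschitz constant of the single fixed network $\mathfrak{S}^\theta_\sigma$ on $B_R$.

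\textbf{What each buys.} Your version is more modular and avoids the uniform-in-$m$ Lipschitz bookkeeping. It uses only the uniform ($\mathcal{C}^0$) convergence part of Lemma~\ref{lem:rational-continuity}, since the Lipschitz bound you need concerns a fixed smooth rational function. It also shows that local Lipschitz continuity of the $\sigma_i$ is not essential. Uniform continuity on compact sets suffices for your induction. In the state term, the Lipschitz bound could be replaced by a.e.\ convergence along subsequences plus dominated convergence, using the uniform bound on $B_R$. The paper's route, in turn, packages both effects into a single explicit pointwise inequality that serves the lemma and the corollary at once.
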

	\begin{proof}
		We follow a similar strategy as in the proof of \cite[Lemma 17]{morina24}. Let $(\theta^m,u^m)\rightharpoonup (\theta, u)\in \Theta\times \mathcal{V}$ weakly as $m\to \infty$. We prove that $\mathfrak{S}(\theta^m,u^m)\to\mathfrak{S}(\theta,u)$ in $\mathcal{W}$ as $m\to \infty$. Since $\Theta$ is finite dimensional, the convergence of the parameters $(\theta^m)_m$ holds in the strong sense. The weak convergence of $(u^m)_m$ in $\mathcal{V}$ implies strong convergence in $L^p(0,T; W^{\kappa,\hat{p}}(\Omega))$, which can be seen as follows. In case $W^{\kappa,\hat{p}}(\Omega)\hookrightarrow \tilde{V}$ the Aubin-Lions Lemma \cite[Lemma 7.7]{Roubíček2013} implies
		\[
		\mathcal{V}=L^p(0,T;V)\cap W^{1,p,p}(0,T;\tilde{V})\hookdoubleheadrightarrow L^p(0,T;W^{\kappa,\hat{p}}(\Omega)).
		\]
		Otherwise for $\tilde{V}\hookrightarrow W^{\kappa,\hat{p}}(\Omega)$ above embedding follows again by the Aubin-Lions lemma, since in this case $\mathcal{V}\subseteq L^p(0,T;V)\cap W^{1,p,p}(0,T;W^{\kappa,\hat{p}}(\Omega))$. With this, applying \cite[Lemma 31]{morina24} we derive that $\mathcal{J}_\kappa u^m\to \mathcal{J}_\kappa u$ strongly in $\otimes_{k=0}^\kappa L^p(0,T;L^{\hat{p}}(\Omega)^{p_k})$ as $m\to \infty$. Another important conclusion that can be drawn from the weak convergence of $(u^m)_m$ in $\mathcal{V}$ is boundedness of $(\mathcal{J}_\kappa u^m)_m$ due to \eqref{uniform_estimation}. Thus, there exists an origin-centered ball $\mathcal{B}_{M_0}(0)\subseteq\mathbb{R}^{1+\sum_{k=0}^{\kappa}p_k}$ of radius $M_0>0$ such that $(t,\mathcal{J}_\kappa u^m(t,x))\in U$ for a.e. $(t,x)\in(0,T)\times\Omega$ for all $m\in\mathbb{N}$. In the following we denote by $z=\mathcal{J}_\kappa u, z^m=\mathcal{J}_\kappa u^m\in \otimes_{k=0}^\kappa L^p(0,T;L^{\hat{p}}(\Omega)^{p_k})$ for $m\in\mathbb{N}$ and show first 
		\begin{align}
			\label{goal1}
			\mathfrak{S}^{\theta^m}_\sigma(z^m)\to\mathfrak{S}^\theta_\sigma(z)\quad \text{in} ~ L^q(0,T;L^{\hat{q}}(\Omega)) \quad \text{as} ~ m\to \infty.
		\end{align}
		
		Omitting the notational dependence of $z, (z^m)_m$ on time and space we estimate for a.e. $(t,x)\in(0,T)\times \Omega$ pointwise
		\begin{align}
			\label{sig_1}
			\vert\mathfrak{S}^{\theta^m}_\sigma(t,z^m)-\mathfrak{S}^\theta_\sigma(t,z)\vert\leq \underbrace{\vert\mathfrak{S}^{\theta^m}_\sigma(t,z^m)-\mathfrak{S}^{\theta^m}_\sigma(t,z)\vert}_{\text{(I)}}+\underbrace{\vert\mathfrak{S}^{\theta^m}_\sigma(t,z)-\mathfrak{S}^\theta_\sigma(t,z)\vert}_{\text{(II)}}.
		\end{align}
		\textbf{Estimation of (I).} We recall that for $\theta^m$ parameterizing the rational functions $r_i^m$ (and $\theta$ parameterizing $r_i$) for $1\leq i\leq L+1$ we have for $(t,y)\in \mathbb{R}^{1+\sum_{k=0}^\kappa p_k}=\mathbb{R}^{n_0^\sigma}$
		\begin{align*}
			\mathfrak{S}_{\sigma}^{\theta^m}: ~&\mathbb{R}^{n_0^\sigma}\to\mathbb{R}\\
			&(t,y)\mapsto r^m_{L+1}((\sigma_L\circ r_L^m\circ \dots\circ\sigma_1\circ r_1^m)(t,y),(t,y)).
		\end{align*}
		For $1\leq i\leq L$ we define iteratively origin-centered balls $\mathcal{B}_{M_i}(0)\subseteq \mathbb{R}^{n_i^\sigma}$ as follows. We fix $\epsilon>0$. As the rational function $r_i$ is continuous, the image $r_i(\mathcal{B}_{M_{i-1}}(0))$ is bounded and included in $\mathcal{B}_{\tilde{M}_i}(0)\subseteq \mathbb{R}^{n_i^r}$ for some $\tilde{M}_i>0$. By Lemma \ref{lem:rational-continuity}, for sufficiently large $m\in\mathbb{N}$, it holds true that $r_i^m(\mathcal{B}_{M_{i-1}}(0))\subseteq \mathcal{B}_{(1+\epsilon)\tilde{M}_i}(0)$. Similarly, by continuity of $\sigma_i$ there exists some $M_i>0$ with $\sigma_i(\mathcal{B}_{(1+\epsilon)\tilde{M}_i}(0))\subseteq\mathcal{B}_{M_i}(0)\subseteq \mathbb{R}^{n_i^\sigma}$.
		
		Thus, with $U_i:=\mathcal{B}_{M_i}(0)\subseteq \mathbb{R}^{n_i^\sigma}$ and $V_i:= \mathcal{B}_{(1+\epsilon)\tilde{M}_i}(0)\subseteq \mathbb{R}^{n_i^r}$ for $0\leq i\leq L$, we derive that $r_i^m(U_{i-1}), r_i(U_{i-1})\subseteq V_i$ and $\sigma_i(V_i)\subseteq U_i$ for $1\leq i\leq L$. Furthermore, it holds true that $z^m(t,x), z(t,x)\in U_0$ for a.e. $(t,x)\in(0,T)\times \Omega$.
		
		Now as $r_i\in \mathcal{C}^1(U_{i-1})$ it is Lipschitz continuous on $U_{i-1}$ with some constant $\mathcal{L}^r_i>0$ (indeed one can choose $\mathcal{L}^r_i=\vert r_i\vert_{\mathcal{C}^1(U_{i-1})}$) for $1\leq i\leq L+1$. Employing Lemma \ref{lem:rational-continuity}, we infer that for sufficiently large $m\in\mathbb{N}$ also the $r_i^m$ are Lipschitz continuous on $U_{i-1}$ with Lipschitz constant $\mathcal{L}^r_i+\epsilon$ for $1\leq i\leq L+1$.	Using local Lipschitz continuity of the $\sigma_i$, we obtain Lipschitz continuity of the $\sigma_i$ on $V_i$ with constant $\mathcal{L}_i^\sigma>0$ for $1\leq i\leq L$. As a consequence, we can estimate the term in (I) by
		\begin{align}
			\label{sig_2}
			\vert \mathfrak{S}_\sigma^{\theta^m}(t,z^m)-\mathfrak{S}_\sigma^{\theta^m}(t,z)\vert \leq \underbrace{(\mathcal{L}_{L+1}^r+\epsilon)(1+\prod_{i=1}^L(\mathcal{L}_i^r+\epsilon)\mathcal{L}_i^\sigma)}_{=:\mathcal{L}}\vert z^m-z\vert%
		\end{align}
		\textbf{Estimation of (II).} Assume w.l.o.g. that the Lipschitz constants fulfill $\mathcal{L}_i^r,\mathcal{L}_i^\sigma\geq 1$. Again employing Lemma \ref{lem:rational-continuity} we can choose $m\in\mathbb{N}$ sufficiently large such that
		\begin{align}
			\label{r_uniform_conv}
			\Vert r_i-r_i^m\Vert_{L^\infty(U_{i-1})}<\epsilon
		\end{align}
		for $1\leq i\leq L+1$. We define the auxiliary networks $(\mathcal{S}_i)_{i=0}^{L+1}$ as follows
		\[
		\mathcal{S}_0(\theta^m,\theta,\cdot)=\mathfrak{S}(\theta^m,\cdot), \quad \mathcal{S}_{L+1}(\theta^m,\theta,\cdot)=\mathfrak{S}(\theta,\cdot) \quad \text{and for} ~ 1\leq s\leq L
		\]
		\[	\mathcal{S}_s(\theta^m,\theta,\cdot)=r_{L+1}((\sigma_L\circ r_L\circ\dots\circ \sigma_{L-s+2}\circ r_{L-s+2}\circ\sigma_{L-s+1}\circ r_{L-s+1}^m\circ\dots\circ\sigma_1\circ r_1^m)(\cdot),\cdot).
		\]
		Note first that due to the considerations on the estimation of (I) (with $0\in U_0$) it follows that there exists some $C>0$ such that for sufficiently large $m\in\mathbb{N}$ $$(\sigma_s\circ r_s^m\circ\dots\circ \sigma_1\circ r_1^m)(0)<C$$
		for $1\leq s\leq L$. We estimate the term in (II) by the telescope sum
		\begin{align}
			\label{telescope1}
			\vert\mathfrak{S}^{\theta^m}_\sigma(t,z)-\mathfrak{S}^\theta_\sigma(t,z)\vert\leq \sum_{s=0}^L\vert \mathcal{S}_{s+1}(\theta^m,\theta,t,z)-\mathcal{S}_s(\theta^m,\theta,t,z) \vert.
		\end{align}
		Using the Lipschitz constants derived in view of the estimation of (I) we obtain for sufficiently large $m\in\mathbb{N}$, defining $\mathcal{T}^m_s = \sigma_s\circ r_s^m\circ\dots\circ \sigma_1\circ r_1^m$, that
		\begin{align}
			\label{telescope2}
			\vert& \mathcal{S}_{s+1}(\theta^m,\theta,t,z)-\mathcal{S}_s(\theta^m,\theta,t,z) \vert\\
			&\leq (\mathcal{L}_{L+1}^r+\epsilon)\prod_{i=L-s+2}^{L}(\mathcal{L}_i^r+\epsilon)\mathcal{L}_i^\sigma\vert(\sigma_{L-s+1}\circ r_{L-s+1})-(\sigma_{L-s+1}\circ r^m_{L-s+1})\vert(\mathcal{T}_{L-s}^m(t,z)).\notag
		\end{align}
		Since both $r_{L-s+1}(\mathcal{T}_{L-s}^m(t,z)), r_{L-s+1}^m(\mathcal{T}_{L-s}^m(t,z))\in V_{L-s+1}$ and $\mathcal{T}_{L-s}^m(t,z)\in U_{L-s}$ we can estimate using \eqref{r_uniform_conv}
		\begin{multline}
			\label{telescope3}
			\vert(\sigma_{L-s+1}\circ r_{L-s+1})-(\sigma_{L-s+1}\circ r^m_{L-s+1})\vert(\mathcal{T}_{L-s}^m(t,z))\\
			\leq \mathcal{L}_{L-s+1}^\sigma\vert r_{L-s+1}(\mathcal{T}_{L-s}^m(t,z))-r^m_{L-s+1}(\mathcal{T}_{L-s}^m(t,z))\vert\leq \epsilon\mathcal{L}_{L-s+1}^\sigma.
		\end{multline}
		Combining \eqref{telescope1},\eqref{telescope2},\eqref{telescope3} together with the Lipschitz constants assumed to be larger than one, we conclude that
		\begin{align}
			\label{sig_3}
			\vert\mathfrak{S}^{\theta^m}_\sigma(t,z)-\mathfrak{S}^\theta_\sigma(t,z)\vert\leq \epsilon L \mathcal{L}.
		\end{align}
		Finally, by \eqref{sig_1}, \eqref{sig_2} and \eqref{sig_3} we derive the pointwise estimate
		\begin{align}
			\label{pointwise_estimate}
			\vert\mathfrak{S}^{\theta^m}_\sigma(t,z^m)-\mathfrak{S}^\theta_\sigma(t,z)\vert\leq \mathcal{L}\vert z^m-z\vert+\epsilon L\mathcal{L}.
		\end{align}
		\textbf{Convergence in function space.} In view of \eqref{goal1} we derive using \eqref{pointwise_estimate} and the triangle inequality that
		\begin{align}
			\label{final1}
			\Vert \mathfrak{S}^{\theta^m}_\sigma(z^m)-\mathfrak{S}^\theta_\sigma(z)\Vert_{L^q(0,T;L^{\hat{q}}(\Omega))}\leq \mathcal{L}\Vert z^m-z\Vert_{L^q(0,T;L^{\hat{q}}(\Omega))}+\epsilon L\mathcal{L}T^{1/q}\vert\Omega\vert^{1/\hat{q}}.
		\end{align}
		Employing Hölder's inequality, we can estimate for some generic constant $C>0$
		\begin{align}
			\label{final2}
			\Vert z^m-z\Vert_{L^q(0,T;L^{\hat{q}}(\Omega))}\leq C \Vert \mathcal{J}_\kappa u^m-\mathcal{J}_\kappa u\Vert_{\otimes L^p(0,T;L^{\hat{p}}(\Omega)^{p_k})}\leq C\Vert u^m-u\Vert_{L^p(0,T;W^{\kappa,\hat{p}}(\Omega))}
		\end{align}
		where the last inequality follows by definition of the differential operator $\mathcal{J}_\kappa$. Since $u^m\to u$ in $L^p(0,T;W^{\kappa,\hat{p}}(\Omega))$ as $m\to \infty$ and $\epsilon$ can be chosen arbitrarily small (with resulting larger $m$ to fulfill underlying inequalities), we conclude by \eqref{final1} and \eqref{final2} that \eqref{goal1} holds true. With the embedding $L^q(0,T;L^{\hat{q}}(\Omega))\hookrightarrow \mathcal{W}$ this implies that
		\[
		\mathfrak{S}(\theta^m,u^m)\to \mathfrak{S}(\theta,u) ~ \text{as} ~ m\to\infty ~ \text{in} ~ \mathcal{W},
		\]
		proving the claimed weak-strong continuity of the joint operator $\mathfrak{S}$.
	\end{proof}
	Following the proof of Lemma \ref{lem:network-regularity} we can extract the following regularity property.
	\begin{corollary}
		\label{cor:Loo_regularity}
		Suppose that Assumption \ref{assump:setup} holds true and that the base functions $(\sigma_i)_{1\leq i\leq L}$ are locally Lipschitz continuous. Then for bounded $U\subset \mathbb{R}^{n_0^\sigma}$ the map
		\[
		\Theta\ni \theta\to \mathfrak{S}_\sigma^\theta\in L^\infty(U)
		\]
		is continuous.
	\end{corollary}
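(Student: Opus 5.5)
We extract the statement directly from the pointwise estimates in the proof of Lemma \ref{lem:network-regularity}, where we now take $z^m=z$ for all $m$. This removes term (I) and leaves only term (II), with $z$ ranging over the bounded set $U$ instead of over the values of $\mathcal{J}_\kappa u$. Since $\Theta$ is a finite-dimensional closed set, continuity is equivalent to sequential continuity. So let $\theta^m\to\theta$ in $\Theta$ with associated rational maps $r_i^m$ and $r_i$, fix $\epsilon>0$, and aim to show $\Vert\mathfrak{S}_\sigma^{\theta^m}-\mathfrak{S}_\sigma^{\theta}\Vert_{L^\infty(U)}\leq C\epsilon$ for all sufficiently large $m$, with $C$ independent of $m$ and $\epsilon$.

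First, we choose a ball $\mathcal{B}_{M_0}(0)\supseteq U$, which exists because $U$ is bounded. We then rebuild the nested balls $U_i=\mathcal{B}_{M_i}(0)$ and $V_i=\mathcal{B}_{(1+\epsilon)\tilde M_i}(0)$ exactly as in the estimation of (I). This uses continuity of $r_i$ and $\sigma_i$ and the uniform convergence $r_i^m\to r_i$ on $U_{i-1}$ from Lemma \ref{lem:rational-continuity}, and it yields $r_i^m(U_{i-1}),r_i(U_{i-1})\subseteq V_i$ and $\sigma_i(V_i)\subseteq U_i$. For the output layer, $r_{L+1}$ acts on $U_L\times U_0$, which is again a bounded set, so Lemma \ref{lem:rational-continuity} applies there as well.

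Lemma \ref{lem:rational-continuity} needs denominators in $Q(d,n)$. This holds because $\bar Q^\epsilon(d,n)\subseteq Q(d,n)$, and the limit $\theta$ lies in the closed set $\Theta$, so its denominators are again bounded below by the fixed $\epsilon$ from the definition of $\Theta$. The $\mathcal{C}^1$ convergence in that lemma gives two things on $U_{i-1}$: Lipschitz constants $\mathcal{L}_i^r+\epsilon$ for $r_i^m$ uniformly in $m$, and $\Vert r_i-r_i^m\Vert_{L^\infty(U_{i-1})}<\epsilon$. Local Lipschitz continuity of $\sigma_i$ gives constants $\mathcal{L}_i^\sigma$ on $V_i$.

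Next, we run the telescoping argument with the hybrid networks $\mathcal{S}_s$ from \eqref{telescope1}, replacing one layer at a time. The bounds \eqref{telescope2} and \eqref{telescope3} hold for every $z\in U$, so \eqref{sig_3} gives $|\mathfrak{S}_\sigma^{\theta^m}(z)-\mathfrak{S}_\sigma^\theta(z)|\leq\epsilon L\mathcal{L}$ uniformly in $z\in U$. The last telescoping step, in which the output layer is swapped, is bounded directly by $\Vert r_{L+1}-r_{L+1}^m\Vert_{L^\infty(U_L\times U_0)}<\epsilon$. Taking the supremum over $U$ and letting $\epsilon\to0$ proves the claim.

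The only delicate point is bookkeeping. The radii $M_i$ depend on $\epsilon$ only through the factor $(1+\epsilon)$, so we fix them once for, say, $\epsilon\le1$. This keeps the constant $\mathcal{L}$ independent of $\epsilon$, and the bound $\epsilon L\mathcal{L}$ genuinely tends to zero.
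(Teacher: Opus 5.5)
Your proposal is correct and is essentially the paper's proof: the paper likewise specializes the estimate \eqref{pointwise_estimate} from the proof of Lemma \ref{lem:network-regularity} to the case $z^m=z$, where only term (II) survives, and reads off $|\mathfrak{S}_\sigma^{\theta^m}(w)-\mathfrak{S}_\sigma^\theta(w)|\le\epsilon L\mathcal{L}$ uniformly for $w\in U$. Your extra bookkeeping makes explicit what the paper leaves implicit: you treat the output-layer swap in the telescoping sum separately (the paper's \eqref{telescope2} does not really cover the case $s=0$, so the honest bound is of order $\epsilon(L+1)\mathcal{L}$), and you fix the radii independently of $\epsilon$ so that $\mathcal{L}$ does not depend on $\epsilon$.
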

	\begin{proof}
		Let $(\theta^m)_m\subset \Theta$ with $\theta^m\to  \theta\in\Theta$ as $m\to \infty$. Following the proof of Lemma \ref{lem:network-regularity} we derive by \eqref{pointwise_estimate} that for every $\epsilon>0$ the estimation
		\[
		\vert \mathfrak{S}_\sigma^{\theta^m}(w)-\mathfrak{S}_\sigma^\theta(w)\vert \leq \epsilon L\mathcal{L}
		\]
		holds true for sufficiently large $m\in\mathbb{N}$ for all $w\in U$, proving the assertion.
	\end{proof}
	Combining Corollary \ref{cor:Loo_regularity} with Lemmata \ref{lem:welldef} and \ref{lem:network-regularity} completes the proof of the statement in Proposition \ref{prop:net_reg}. Note that a crucial key property to derive Lemma \ref{lem:network-regularity} and, consequently, Proposition \ref{prop:net_reg}, apart from Lemma \ref{lem:welldef}, is the uniform state space regularity assumption in \eqref{uniform_estimation}. This raises the question whether one can avoid the underlying embedding under stronger regularity assumptions on the activation base functions $(\sigma_i)_{1\leq i\leq L}$. An alternative approach is presented e.g., in \cite[Lemma 17]{morina24}, where global Lipschitz continuity of the activation functions is sufficient, rather than local Lipschitz continuity, in the case of affine linear transformations. However, this result does not cover rational transformations, which are the focus here. Some initial considerations addressing this question are outlined below.
	\begin{remark}
		We conjecture that the regularity assumption \eqref{uniform_estimation} can be avoided. For the special case of rational transformations $r=p/q$  with $\deg(p)\geq \deg(q)\geq \deg(p)-1$ and $q>0$, already covering a large class of rational functions with rather strong approximation properties (see \cite{Boulle20} and \cite{morina2025}), one can argue w.l.o.g. in one dimension as follows. Rational functions of the type above behave like affine linear functions towards $\pm\infty$ and their seminorm fulfills $\vert r\vert_{\mathcal{C}^1(\mathbb{R})}<\infty$. Furthermore, for $r$ parameterized by coefficients $\theta\in \Theta$ and coefficients $\theta^m$ parameterizing rationals $r^m=p^m/q^m$ (fulfilling $\deg(p^m)\geq \deg(q^m)\geq \deg(p^m)-1$ and $q^m>0$) with $\theta^m\to \theta$ as $m\to \infty$ it follows that $\vert r^m\vert_{\mathcal{C}^1(\mathbb{R})}\to\vert r\vert_{\mathcal{C}^1(\mathbb{R})}$ as $m\to\infty$. In other words $r$ and $(r^m)_m$ are jointly globally Lipschitz continuous with the same constant. With this, an analogous result as in \cite[Lemma 17]{morina24} can be obtained following its proof in combination with the one of Lemma \ref{lem:network-regularity}.  If the rational transformations are general polynomials of degree $d$, we conjecture that \eqref{uniform_estimation} can be weakened to the regularity assumption that the state space is of the form that $(\mathcal{J}_\kappa u^m)_m$ converges in space in $L^{\hat{p}\rho_0}(\Omega)$ and is bounded in $L^{(d-1)^i\hat{p}\rho_i}(\Omega)$ for the layer index $1\leq i\leq L$ where $\sum_{i=0}^L \rho_i^{-1}=1$. We expect a similar result for general rational transformations $r=p/q$ for polynomials $p,q$ with $\deg(p)-\deg(q)$ instead of $d$. These regularity assumptions are obviously more difficult to fulfill the deeper the network is and the more complex the rational transformations are.
	\end{remark}
    \section{Proofs of identification results}
    \label{appendix:proofs}
    In the following we provide the detailed arguments of the assertions in Section \ref{sec:main_result}, starting with the proof of Lemma \ref{lem:welldef}.
    \begin{proof}[Proof of Lemma \ref{lem:welldef}]
		The assertion essentially follows from the \textit{direct method}. Since the regularization $\mathcal{R}$ is proper, there exists an infimizing sequence $(u_k, \theta_k)_k\subset\mathcal{V}\times\Theta$ of problem \eqref{minprob}. Due to coercivity of $\mathcal{R}$, $(u_k)_k$ is bounded in $\mathcal{V}$ and $(\theta_k)_k$ in $\Theta$. As a consequence, due to reflexivity of $\mathcal{V}$, the sequence $(u_k)_k$ admits a weakly convergent subsequence in $\mathcal{V}$ with limit $\hat{u}\in\mathcal{V}$, and $(\theta_k)_k$ a strongly convergent subsequence in $\Theta$ with limit $\hat{\theta}\in\Theta$, since $\Theta$ is finite dimensional (w.l.o.g. for the entire sequences) and closed by design (see Subsection \ref{subsec:network_design}). We derive
		$$\mathfrak{S}_\sigma^{\theta_k}(\mathcal{J}_\kappa u_k)\to \mathfrak{S}_\sigma^{\hat{\theta}}(\mathcal{J}_\kappa \hat{u})\quad \text{in} ~ \mathcal{W} \quad \text{as} ~ k\to \infty$$
		using Proposition \ref{prop:net_reg}.
		Now since $\partial_t u_k\rightharpoonup \partial_t\hat{u}$ in $L^p(0,T;\tilde{V})$ as $k\to \infty$ the weak convergence also holds in $\mathcal{W}$ as $L^p(0,T;\tilde{V})\hookrightarrow \mathcal{W}$ by $\tilde{V}\hookrightarrow W$. Furthermore, weak-weak continuity of $K^m$ implies $K^mu_k\rightharpoonup K^m\hat{u}$ in $\mathcal{Y}$ as $k\to \infty$. Combining these convergences with weak lower semicontinuity of $\lambda^m\Vert\cdot\Vert_{\mathcal{W}}^q$, $\mu^m\Vert\cdot\Vert_{\mathcal{Y}}^r$ and the regularization $\mathcal{R}$ in the respective spaces, it follows that $(\hat{u},\hat{\theta})$ solves \eqref{minprob}.
	\end{proof}
    We conclude this section by providing the proof of our main result, Theorem \ref{th:representable}.
    \begin{proof}[Proof of Theorem \ref{th:representable}]
		Since $f^\dagger$ is representable, there exists $\theta^\dagger \in\Theta$ such that $f^\dagger = \mathfrak{S}_\sigma^{\theta^\dagger}$. We estimate the objective functional of \eqref{minprob} by
		\begin{multline}
			\label{estimation1}
			\lambda^m\Vert\partial_t u^m-\mathfrak{S}_\sigma^{\theta^m}( \mathcal{J}_\kappa u^m)\Vert^q_{\mathcal{W}}+\mu^m\Vert K^mu^m-y^m\Vert_{\mathcal{Y}}^r+\mathcal{R}(u^m,\theta^m)\\
			\leq \lambda^m\Vert\partial_t u^\dagger-\mathfrak{S}_\sigma^{\theta^\dagger}( \mathcal{J}_\kappa u^\dagger)\Vert^q_{\mathcal{W}}+\mu^m\Vert K^mu^\dagger-y^m\Vert_{\mathcal{Y}}^r+\mathcal{R}(u^\dagger,\theta^\dagger).
		\end{multline}
		As $\partial_t u^\dagger=\mathfrak{S}_\sigma^{\theta^\dagger}( \mathcal{J}_\kappa u^\dagger)$ and $\Vert K^mu^\dagger-y^m\Vert_{\mathcal{Y}}^r=\delta(m)$, the right hand side of \eqref{estimation1} converges to $\mathcal{R}(u^\dagger,\theta^\dagger)$ as $m\to \infty$. This implies boundedness of $(\Vert u^m\Vert_{\mathcal{V}})_m$ and $(\Vert\theta^m\Vert)_m$ by coercivity of $\mathcal{R}$. As a consequence, there exists a weakly convergent subsequence of the $(u^m)_m$ with limit $\tilde{u}$ in $\mathcal{V}$. We denote it w.l.o.g. by the original indices as we will show $u^m\rightharpoonup u^\dagger$ in $\mathcal{V}$ as $m\to \infty$. Since $\lim_{m\to \infty}\mu_m=\infty$ the convergence $\lim_{m\to \infty}\Vert K^m u^m-y^m\Vert_{\mathcal{Y}}= 0$ follows. We estimate
		\begin{multline*}
			\Vert K^\dagger\tilde{u}-K^\dagger u^\dagger\Vert_{\mathcal{Y}}\leq \Vert K^\dagger\tilde{u}-K^\dagger u^m\Vert_{\mathcal{Y}}+\Vert K^\dagger u^m-K^m u^m\Vert_{\mathcal{Y}}\\
			+\Vert K^m u^m-K^m u^\dagger\Vert_{\mathcal{Y}}+\Vert K^m u^\dagger-K^\dagger u^\dagger\Vert_{\mathcal{Y}}.
		\end{multline*}
		The first term converges to zero by weak-strong continuity of $K^\dagger$. The second and fourth term converge to zero by \eqref{operator_conv_weak}, since $(u^m)_m$ and the constant sequence $(u^\dagger)_m$ are weakly convergent. The third term converges to zero by \eqref{noise_estimation}, since $K^mu^\dagger =y^m$ and $\lim_{m\to \infty}\delta(m)=0$. Thus, we conclude that $K^\dagger \tilde{u}=K^\dagger u^\dagger$ and finally, $\tilde{u}=u^\dagger$ by injectivity of $K^\dagger$. Since $(\Vert\theta^m\Vert)_m$ is bounded, there exists a convergent subsequence $(\theta^{m_l})_l$ with limit $\tilde{\theta}\in\Theta$ by closedness of $\Theta$. Using $\lambda^m\to \infty$ as $m\to \infty$ we derive
		\begin{align}
			\label{limitphysics}
			\lim_{l\to \infty}\Vert\partial_t u^{m_l}-\mathfrak{S}_\sigma^{\theta^{m_l}}( \mathcal{J}_\kappa u^{m_l})\Vert_{\mathcal{W}} = 0
		\end{align}
		and with $\partial_t u^{m_l}\rightharpoonup\partial_t u^\dagger$, as in the proof of Lemma \ref{lem:welldef}, boundedness of $(\partial_t u^{m_l})_{m_l}$. As a consequence, also $(\mathfrak{S}_\sigma^{\theta^{m_l}}( \mathcal{J}_\kappa u^{m_l}))_{m_l}$ is bounded and there exists $g\in\mathcal{W}$ such that $\mathfrak{S}_\sigma^{\theta^{m_l}}( \mathcal{J}_\kappa u^{m_l})\rightharpoonup g$ in $\mathcal{W}$ as $l\to\infty$ (w.l.o.g. for the entire sequence). Employing weak lower semicontinuity of the $\Vert\cdot\Vert_{\mathcal{W}}$-norm in \eqref{limitphysics} yields $g = \partial_tu^\dagger$. Using Proposition \ref{prop:net_reg} we obtain that $u^\dagger$ and the reconstructed physical law $\mathfrak{S}_\sigma^{\tilde{\theta}}$ fulfill
		\begin{align}
			\label{limit_equation}
			\partial_t u^\dagger = \mathfrak{S}_\sigma^{\tilde{\theta}}(\mathcal{J}_\kappa u^\dagger).
		\end{align}
		The assertion on $L^\infty_{\text{loc}}(\mathbb{R}^{n_0^\sigma})$-convergence follows directly by Proposition \ref{prop:net_reg}. Finally, for any solution $(u^\dagger, \theta^\dagger)$ of $\partial_t u^\dagger = \mathfrak{S}_\sigma^{\theta^\dagger}$ it follows by \eqref{estimation1} that
		\[
		\liminf_{m\to \infty}\mathcal{R}(u^m,\theta^m) \leq \mathcal{R}(u^\dagger,\theta^\dagger)
		\]
		which by weak lower semicontinuity of $\mathcal{R}$ implies $\mathcal{R}(u^\dagger,\tilde{\theta})\leq \mathcal{R}(u^\dagger, \theta^\dagger)$. 
	\end{proof}
    As a consequence of Theorem \ref{th:representable} the result in Corollary \ref{cor:representable} holds.
    \begin{proof}[Proof of Corollary \ref{cor:representable}]
		Following Theorem \ref{th:representable}, we conclude from \eqref{limit_equation} that
		\[
		\mathfrak{S}_\sigma^{\tilde{\theta}}(\mathcal{J}_\kappa u^\dagger)=\partial_t u^\dagger=\mathfrak{S}_\sigma^{\theta^\dagger}(\mathcal{J}_\kappa u^\dagger).
		\]
		Using the identifiability condition \eqref{identifiability}, we deduce that $\mathfrak{S}_\sigma^{\tilde{\theta}}=\mathfrak{S}_\sigma^{\theta^\dagger}$. Since this equality is independent of the convergent subsequence of $(\theta^m)_m$, together with the result of Theorem \ref{th:representable}, we conclude, as claimed that
		\[
		\mathfrak{S}_\sigma^{\theta^m}\to \mathfrak{S}_\sigma^{\theta^\dagger}=f^\dagger\quad \text{in} ~ L^\infty_{\text{loc}}(\mathbb{R}^{n_0^\sigma}) ~ \text{as} ~ m\to \infty.\qedhere
		\]
	\end{proof}
	\section{Convergence condition for sampling operators}
	\label{app:conv_sampling}
	In this section we provide a proof of the convergence condition \eqref{operator_conv_weak} for the measurement operators defined in Subsection \ref{subsec:implem_frame}. Recall the full measurement operator
	\begin{align}
		\label{Kdagoperator}
		K^\dagger:\mathcal{V}\to \mathcal{Y}, \quad [K^\dagger u](t) = \sum_{i\in\mathbb{N}}\langle e_i,u(t)\rangle e_i
	\end{align}
	for $u\in\mathcal{V}$ and $t\in[0,T]$ with $(e_i)_{i\in\mathbb{N}}$ a fixed orthonormal system of $L^2(\Omega)$. Note that $K^\dagger u$ is Bochner measurable for $u\in \mathcal{V}$ by Pettis theorem since $L^2(\Omega)$ is separable and for $w\in L^2(\Omega)$ the map $[0,T]\ni t\mapsto \langle [K^\dagger u] (t), w\rangle = \langle w, u(t)\rangle$ is Lebesgue measurable (by Bochner measurability of $u$). Well-definedness follows from Bessel's inequality. The reduced measurement operators are given for $1\leq j\leq m-1$ and $t\in [t_j, t_{j+1})$ or $j=m$ and $t\in[t_m,t_{m+1}]$ by
	\begin{align}
		\label{Kmoperator}
		K^m:\mathcal{V}\to \mathcal{Y}, \quad [K^m u](t) = \sum_{i=1}^m \left(\Delta_m^{-1}\int_{t_j}^{t_{j+1}}\langle e_i, u(s)\rangle\dx s\right) e_i
	\end{align}
	for $u\in \mathcal{V}$. Here $0=t_1<t_2<\dots<t_{m}<t_{m+1}=T$ is the $m$-equidistant grid on $[0,T]$ for $m\in[0,T]$ and $\Delta_m := T/m$. Bochner measurability follows again by Pettis theorem since $[0,T]\ni t\mapsto \langle [K^mu](t), w\rangle$ is a step function for $w\in L^2(\Omega)$. We will verify shortly that the operator in \eqref{Kmoperator} is in fact well defined. Now since $K^m$ is linear, weak-weak continuity is equivalent to continuity.  Since $(e_i)_i$ is an orthonormal system we derive for $t\in[t_j, t_{j+1}]$ that
	\[
	\Vert [K^mu](t)\Vert_{L^2(\Omega)}^2 =\Delta_m^{-2} \sum_{i=1}^m \left(\int_{t_j}^{t_{j+1}}\langle e_i, u(s)\rangle\dx s\right)^2.
	\]
	As a consequence, for $u\in \mathcal{V}$ it holds
	\begin{align*}
		\Vert K^mu\Vert_{\mathcal{Y}}^2=\int_0^T \Vert [K^mu](t)\Vert_{L^2(\Omega)}^2\dx t=\sum_{j=1}^m \Delta_m\Delta_m^{-2}\sum_{i=1}^m \left(\int_{t_j}^{t_{j+1}}\langle e_i, u(s)\rangle\dx s\right)^2
	\end{align*}
	which due to Hölder's inequality using $\Delta_m=t_{j+1}-t_j$ implies that
	\[
	\Vert K^mu\Vert_{\mathcal{Y}}^2\leq \sum_{j=1}^m \Delta_m^{-1}\sum_{i=1}^m \Delta_m\int_{t_j}^{t_{j+1}}\vert\langle e_i, u(s)\rangle\vert^2\dx s=  \sum_{j=1}^m\int_{t_j}^{t_{j+1}}\left(\sum_{i=1}^m\vert\langle e_i, u(s)\rangle\vert^2\right)\dx s.
	\]
	By employing Bessel's inequality and the embedding $\mathcal{V}\hookrightarrow\mathcal{Y}$ we derive
	\[
	\Vert K^mu\Vert_{\mathcal{Y}}^2\leq \sum_{j=1}^m\int_{t_j}^{t_{j+1}}\Vert u(s)\Vert_{L^2(\Omega)}^2\dx s=\int_0^T\Vert u(s)\Vert_{L^2(\Omega)}^2\dx s= \Vert u\Vert_{\mathcal{Y}}^2\leq c\Vert u\Vert_{\mathcal{V}}^2
	\]
	for some suitable $c>0$ proving continuity of $K^m$ and more importantly well definedness of the operator $K^m$. It remains to show \eqref{operator_conv_weak} that $K^mu^m-K^\dagger u^m\to 0$ in $\mathcal{Y}$ as $m\to \infty$ for any weakly convergent sequence $(u^m)_m\subset \mathcal{V}$. For that, let $u^m\rightharpoonup u$ in $\mathcal{V}$ as $m\to \infty$. Then, similar transformations as above yield
	\begin{align*}
	\Vert K^mu^m-K^\dagger u^m\Vert_{\mathcal{Y}}^2=\sum_{j=1}^m\int_{t_j}^{t_{j+1}}\bigg\Vert\sum_{i=1}^m \left(\Delta_m^{-1}\int_{t_j}^{t_{j+1}}\langle e_i, u^m(s)\rangle\dx s\right) e_i- \sum_{i\in\mathbb{N}}\langle e_i,u^m(t)\rangle e_i\bigg\Vert_{L^2(\Omega)}^2\dx t.
	\end{align*}
	Again using that $(e_i)_i$ is an orthonormal system gives
	\begin{multline}
		\label{eq:measurement_condition_sampling}
		\sum_{j=1}^m\int_{t_j}^{t_{j+1}}\bigg\Vert\sum_{i=1}^m \left(\Delta_m^{-1}\int_{t_j}^{t_{j+1}}\langle e_i, u^m(s)\rangle\dx s\right) e_i- \sum_{i\in\mathbb{N}}\langle e_i,u^m(t)\rangle e_i\bigg\Vert_{L^2(\Omega)}^2\dx t\\
		= \sum_{j=1}^m\int_{t_j}^{t_{j+1}}\bigg\Vert\sum_{i=1}^m \left(\langle e_i, u^m(t)\rangle -\Delta_m^{-1}\int_{t_j}^{t_{j+1}}\langle e_i, u^m(s)\rangle\dx s\right) e_i\bigg\Vert_{L^2(\Omega)}^2\dx t\\
		+\sum_{j=1}^m\int_{t_j}^{t_{j+1}}\bigg\Vert \sum_{i\geq m+1}\langle e_i,u^m(t)\rangle e_i\bigg\Vert_{L^2(\Omega)}^2\dx t.
	\end{multline}
	We argue that the right-hand side of \eqref{eq:measurement_condition_sampling} converges to zero as $m\to \infty$. For the second term on the right-hand side of \eqref{eq:measurement_condition_sampling} we derive by previous arguments
	\begin{multline*}
		\sum_{j=1}^m\int_{t_j}^{t_{j+1}}\bigg\Vert \sum_{i\geq m+1}\langle e_i,u^m(t)\rangle e_i\bigg\Vert_{L^2(\Omega)}^2\dx t= \int_0^T\bigg\Vert \sum_{i\geq m+1}\langle e_i,u^m(t)\rangle e_i\bigg\Vert_{L^2(\Omega)}^2\dx t\\
		\leq 2\int_0^T\bigg\Vert \sum_{i\geq m+1}\langle e_i,u^m(t)-u(t)\rangle e_i\bigg\Vert_{L^2(\Omega)}^2\dx t+2\int_0^T\bigg\Vert \sum_{i\geq m+1}\langle e_i,u(t)\rangle e_i\bigg\Vert_{L^2(\Omega)}^2\dx t\\
		= 2\underbrace{\int_0^T \sum_{i\geq m+1}\vert\langle e_i,u^m(t)-u(t)\rangle\vert^2\dx t}_{=:\text{I}}+2\underbrace{\int_0^T \sum_{i\geq m+1}\vert\langle e_i,u(t)\rangle\vert^2\dx t}_{=:\text{II}}.
	\end{multline*}
	The term I can be estimated using Bessel's inequality by
	\[
	\text{I}\leq\int_0^T \Vert u^m(t)-u(t)\Vert_{L^2(\Omega)}^2\dx t=\Vert u^m-u\Vert_{\mathcal{Y}}^2
	\]
	which converges to zero as $m\to \infty$ due to the compact embedding $\mathcal{V}\hookdoubleheadrightarrow\mathcal{Y}$. For term II note that due to Bessel's inequality, $t\mapsto \sum_{i\geq m+1}\vert\langle e_i,u(t)\rangle\vert^2$ is majorized by $t\mapsto \Vert u(t)\Vert_{L^2(\Omega)}^2$ which is integrable on $[0,T]$ by $\mathcal{V}\hookrightarrow\mathcal{Y}$. As a consequence, since $\sum_{i\in\mathbb{N}}\vert\langle e_i,u(t)\rangle\vert^2$ is convergent by Parseval's identity, its tails converge to zero, such that with Lebesgue's dominated convergence we recover convergence of the term II to zero as $m\to \infty$.
	It remains to verify that the first term on the right-hand side of \eqref{eq:measurement_condition_sampling} converges to zero as $m\to \infty$. It can be rewritten by
	\begin{multline}
		\label{eq:remaining_estimation}
		\sum_{j=1}^m\int_{t_j}^{t_{j+1}}\sum_{i=1}^m \left(\langle e_i, u^m(t)\rangle -\Delta_m^{-1}\int_{t_j}^{t_{j+1}}\langle e_i, u^m(s)\rangle\dx s\right) ^2\dx t\\
		=\Delta_m^{-2}\sum_{j=1}^m\int_{t_j}^{t_{j+1}}\sum_{i=1}^m \left(\int_{t_j}^{t_{j+1}}\langle e_i,u^m(t)-u^m(s)\rangle\dx s\right) ^2\dx t.
	\end{multline}
	Using that for $t,s\in [0,T]$ and $i\in\mathbb{N}$ it holds
	$$\langle e_i, u^m(t)-u^m(s)\rangle = \langle e_i, u^m(t)-u(t)\rangle+\langle e_i, u(t)-u(s)\rangle+\langle e_i, u(s)-u^m(s)\rangle$$
	we can estimate \eqref{eq:remaining_estimation}, employing the scalar Hölder inequality by
	\begin{multline}
		\label{eq:three_summands}
		3\Delta_m^{-2}\sum_{j=1}^m\int_{t_j}^{t_{j+1}}\sum_{i=1}^m \left(\int_{t_j}^{t_{j+1}}\langle e_i,u^m(t)-u(t)\rangle\dx s\right) ^2\dx t\\
		+3\Delta_m^{-2}\sum_{j=1}^m\int_{t_j}^{t_{j+1}}\sum_{i=1}^m \left(\int_{t_j}^{t_{j+1}}\langle e_i,u(t)-u(s)\rangle\dx s\right) ^2\dx t\\
		+3\Delta_m^{-2}\sum_{j=1}^m\int_{t_j}^{t_{j+1}}\sum_{i=1}^m \left(\int_{t_j}^{t_{j+1}}\langle e_i,u(s)-u^m(s)\rangle\dx s\right) ^2\dx t.
	\end{multline}
	The third summand in \eqref{eq:three_summands}, omitting the constant factor, can be estimated by Hölder's inequality regarding temporal integration in $s$ by
	\[
	\Delta_m^{-1}\sum_{j=1}^m\int_{t_j}^{t_{j+1}}\sum_{i=1}^m \int_{t_j}^{t_{j+1}}\vert\langle e_i,u^m(s)-u^m(s)\rangle\vert^2\dx s\dx t,
	\]
	which by Bessel's inequality and $\Delta_m^{-1}\int_{t_j}^{t_{j+1}}\dx t=1$ is bounded by
	\begin{align}
		\label{eq:remaining_term}
		\sum_{j=1}^m\int_{t_j}^{t_{j+1}}\Vert u(s)-u^m(s)\Vert^2_{L^2(\Omega)}\dx s=\Vert u^m-u\Vert^2_{\mathcal{Y}}
	\end{align}
	and converges to zero as $m\to \infty$ due to $\mathcal{V}\hookdoubleheadrightarrow \mathcal{Y}$. Convergence of the first summand in \eqref{eq:three_summands} to zero can be argued analogously. It remains to show that
	\[
	\lim_{m\to \infty}\Delta_m^{-2}\sum_{j=1}^m\int_{t_j}^{t_{j+1}}\sum_{i=1}^m \left(\int_{t_j}^{t_{j+1}}\langle e_i,u(t)-u(s)\rangle\dx s\right) ^2\dx t=0.
	\]
    Applying the integration-by-parts formula of \cite[Lemma 7.3]{Roubíček2013} with $H=V=L^2(\Omega)$ (which is justified since $\mathcal{V}$ attains at least $H^1(\Omega)$ spatial regularity), we infer that,
	\[
	\langle e_i,u(t)-u(s)\rangle = \int_s^t\langle e_i,\partial_t u(z)\rangle \dx z,
	\]
	for every $i\in\mathbb{N}$. With this, Bessel's and twice Hölder's inequality we derive that the second summand in \eqref{eq:three_summands} can be estimated by
	\begin{multline*}
		\Delta_m^{-1}\sum_{j=1}^m\int_{t_j}^{t_{j+1}}\sum_{i=1}^m \int_{t_j}^{t_{j+1}}\vert\langle e_i,u(t)-u(s)\rangle\vert^2\dx s \dx t\\
		\leq \Delta_m^{-1}\sum_{j=1}^m\int_{t_j}^{t_{j+1}}\sum_{i=1}^m \int_{t_j}^{t_{j+1}}\vert t-s\vert \int_s^t\vert\langle e_i,\partial_t u(z)\rangle\vert^2\dx z\dx s \dx t\\
		\leq \Delta_m^{-1}\sum_{j=1}^m\int_{t_j}^{t_{j+1}} \int_{t_j}^{t_{j+1}}\vert t-s\vert \int_s^t\Vert\partial_t u(z)\Vert^2\dx z\dx s \dx t.
	\end{multline*}
	Using that $\vert t-s\vert\leq \Delta_m$ we can estimate this term by
	\[
	\left(\sum_{j=1}^m\int_{t_j}^{t_{j+1}} \int_{t_j}^{t_{j+1}}\dx s\dx t\right)\Vert\partial_t u\Vert_{\mathcal{Y}}^2=\Delta_m T\Vert\partial_t u\Vert_{\mathcal{Y}}^2
	\]
	which converges to zero as $m\to \infty$ (since $\Delta_m = T/m$ does). This finally concludes the regularity property \eqref{operator_conv_weak} for $K^\dagger$ as in \eqref{Kdagoperator} and the reduced measurement operators $K^m$ as in \eqref{Kmoperator}.\qed
	
	\small
	\setstretch{0.95}
	\bibliographystyle{plainurl}
	\bibliography{references}
\end{document}